\colorlet{citeblue}{blue!50!black}
\colorlet{varreductioncolor}{red!60!black}
\colorlet{precondcgcolor}{green!40!black}
\tikzset{external/force remake=false}
\pgfplotsset{compat=1.15}
\algrenewcommand{\algorithmiccomment}[1]{\hfill {\small \textcolor{darkgray}{$\mathsmaller \vartriangleright$ #1}}}  
\algrenewcommand\algorithmicindent{1.5em}   
\algrenewcommand\alglinenumber[1]{\small {\textcolor{darkgray}{#1}}} 
\algrenewcommand{\algorithmicrequire}{\textbf{Input:}}
\algorithmic\endcsname{\itemsep\z@}{\itemsep=0.25ex}{}{}
\newcommand\fs@booktabsruled{%
  \def\@fs@cfont{\bfseries\strut}\let\@fs@capt\floatc@ruled
  \def\@fs@pre{\hrule height\heavyrulewidth depth0pt \kern\belowrulesep}%
  \def\@fs@mid{\kern\aboverulesep\hrule height\lightrulewidth\kern\belowrulesep}%
  \def\@fs@post{\kern\aboverulesep\hrule height\heavyrulewidth\relax}%
  \let\@fs@iftopcapt\iftrue
}
\newtheoremstyle{theorem-style}
{\topsep} 
{\topsep} 
{\itshape} 
{} 
{\bfseries} 
{} 
{\newline} 
{} 
\theoremstyle{theorem-style}
\newtheorem{theorem}{Theorem}
\newtheorem{proposition}{Proposition}
\newtheorem{corollary}{Corollary}
\newtheorem{lemma}{Lemma}
\newtheoremstyle{definition-style}
{\topsep} 
{\topsep} 
{} 
{} 
{\bfseries} 
{} 
{\newline} 
{} 
\theoremstyle{definition-style}
\newtheorem{definition}{Definition}
\newtheorem{remark}{Remark}
\newif\ifcomments
\ifcomments\newcommand{\comments}[1]{#1}\else\newcommand{\comments}[1]{}\fi
\newcommand{\beginsupplementary}{%

  \renewcommand{\thesection}{S\arabic{section}}
  \renewcommand{\thesubsection}{\thesection.\arabic{subsection}}
  \renewcommand{\theHsection}{S\arabic{section}} 

  \setcounter{table}{0}
  \renewcommand{\thetable}{S\arabic{table}}
  \setcounter{figure}{0}
  \renewcommand{\thefigure}{S\arabic{figure}}
  \setcounter{section}{0}
  \renewcommand{\theequation}{S\arabic{equation}} 
  \renewcommand{\thetheorem}{S\arabic{theorem}} 
  \renewcommand{\thedefinition}{S\arabic{definition}} 
  \renewcommand{\thecorollary}{S\arabic{corollary}} 
  \renewcommand{\theproposition}{S\arabic{proposition}} 
  \renewcommand{\theremark}{S\arabic{remark}} 
  \renewcommand{\thelemma}{S\arabic{lemma}} 
  \renewcommand{\theexample}{S\arabic{example}} 
}
\newcommand{\shat}[1]{\vphantom{#1}\smash[t]{\hat{#1}}}
\newcommand{\N}{\mathbb{N}}
\newcommand{\R}{\mathbb{R}}
\newcommand{\Rn}{\mathbb{R}^n}
\newcommand{\Rnn}{\mathbb{R}^{n\times n}}
\renewcommand{\top}{{\intercal}}
\DeclareSymbolFont{stmry}{U}{stmry}{m}{n}
\DeclareMathSymbol\obar\mathrel{stmry}{"3A}
\DeclareMathSymbol\otimes\mathrel{stmry}{"0F}
\DeclareMathSymbol\ominus\mathrel{stmry}{"17}
\newcommand{\superimpose}[2]{
  {\ooalign{$#1\@firstoftwo#2$\cr\hfil$#1\@secondoftwo#2$\hfil\cr}}}
\renewcommand{\Pr}{\mathbb{P}}
\DeclareMathOperator{\Exp}{\mathbb{E}}
\DeclareMathOperator{\Cov}{\mathrm{Cov}}
\newcommand{\Normal}{\mathcal{N}}
\newcommand{\GP}{\mathcal{GP}}
\newcommand{\logmarglik}{\mathcal{L}}
\newcommand{\bigO}{\mathcal{O}}
\def\vb{{\bm{b}}}
\def\ve{{\bm{e}}}
\def\vf{{\bm{\mathrm{f}}}}
\def\vu{{\bm{u}}}
\def\vv{{\bm{v}}}
\def\vw{{\bm{w}}}
\def\vx{{\bm{x}}}
\def\vy{{\bm{y}}}
\def\vz{{\bm{z}}}
\def\vmu{{\bm{\mu}}}
\def\vtheta{{\bm{\theta}}}
\def\vlambda{{\bm{\lambda}}}
\def\vzero{{\bm{0}}}
\def\vone{{\bm{1}}}
\def\evlambda{{\lambda}}
\def\evtheta{{\theta}}
\def\mA{{\bm{A}}}
\def\mB{{\bm{B}}}
\def\mC{{\bm{C}}}
\def\mH{{\bm{H}}}
\def\mI{{\bm{I}}}
\def\mK{{\bm{K}}}
\def\mL{{\bm{L}}}
\def\mM{{\bm{M}}}
\def\mP{{\bm{P}}}
\def\mQ{{\bm{Q}}}
\def\mT{{\bm{T}}}
\def\mV{{\bm{V}}}
\def\mW{{\bm{W}}}
\def\mX{{\bm{X}}}
\def\mZ{{\bm{Z}}}
\def\mDelta{{\bm{\Delta}}}
\def\mLambda{{\bm{\Lambda}}}
\def\mSigma{{\bm{\Sigma}}}
\def\mZero{{\bm{0}}}
\DeclareMathAlphabet{\mathsfit}{\encodingdefault}{\sfdefault}{m}{sl}
\SetMathAlphabet{\mathsfit}{bold}{\encodingdefault}{\sfdefault}{bx}{n}
\def\idxCG{{m}}
\def\idxLanczos{m}
\def\idxrvs{\ell}
\def\idxtxtSTE{\mathrm{STE}}
\def\idxtxtSLQ{\mathrm{SLQ}}
\def\idxtxtSCG{\mathrm{SCG}}
\def\idxtxtCG{\mathrm{CG}}
\def\idxtxtLanczosSTE{\mathrm{Lanczos}}
\def\idxtxtCGSTE{\mathrm{CG}'}
\def\trestimate{\tau}
\def\trSTE{\trestimate_\idxrvs^{\idxtxtSTE}}
\def\trSLQ{\trestimate_{\idxrvs, \idxLanczos}^{\idxtxtSLQ}}
\def\trSCG{\trestimate_{\idxrvs, \idxLanczos}^{\idxtxtSCG}}
\def\trvarredux{\trestimate_{*}}
\def\trvarreduxlogdet{\trvarredux^{\log}}
\def\trvarreduxinvderiv{\trvarredux^{\mathrm{inv}\partial}}
\def\trprecondlogdet{\trestimate_{\shat{\mP}}^{\log}}
\def\trprecondinvderiv{\trestimate_{\shat{\mP}}^{\mathrm{inv}\partial}}
\newcommand{\papertitle}{Preconditioning for Scalable Gaussian Process Hyperparameter Optimization}
\icmltitlerunning{Preconditioning for Scalable GP Hyperparameter Optimization}
\begin{document}

\twocolumn[
	\icmltitle{\papertitle}

	\begin{icmlauthorlist}
		\icmlauthor{Jonathan Wenger}{unitue,mpiis,columbia}
		\icmlauthor{Geoff Pleiss}{columbia}
		\icmlauthor{Philipp Hennig}{unitue,mpiis}
		\icmlauthor{John P. Cunningham}{columbia}
		\icmlauthor{Jacob R. Gardner}{upenn}
	\end{icmlauthorlist}

	\icmlaffiliation{unitue}{University of T\" ubingen}
	\icmlaffiliation{mpiis}{Max Planck Institute for Intelligent Systems, T\" ubingen}
	\icmlaffiliation{columbia}{Columbia University}
	\icmlaffiliation{upenn}{University of Pennsylvania}

	\icmlcorrespondingauthor{Jonathan Wenger}{jonathan.wenger@uni-tuebingen.de}

	\icmlkeywords{Gaussian processes, preconditioning, numerical linear algebra}

	\vskip 0.3in
]

\printAffiliationsAndNotice{}

\begin{abstract}
	Gaussian process hyperparameter optimization requires linear solves
	with, and $\log$-determinants of, large kernel matrices.
	Iterative numerical techniques are becoming popular to scale to larger datasets,
	relying on the conjugate gradient method (CG) for the linear solves
	and stochastic trace estimation for the $\log$-determinant.
	This work introduces new algorithmic and theoretical insights for preconditioning these
	computations.
	While preconditioning is well understood in the context of CG,
	we demonstrate that it can also accelerate convergence and reduce variance of the
	estimates for the $\log$-determinant and its derivative.
	We prove general probabilistic
	error bounds for the preconditioned computation of the $\log$-determinant,
	$\log$-marginal likelihood and its derivatives. Additionally, we derive specific
	rates for a range of kernel-preconditioner combinations, showing that up to
	exponential convergence can be achieved. Our theoretical results enable provably
	efficient optimization of kernel hyperparameters, which we validate empirically on
	large-scale benchmark problems. There our approach accelerates training by up to an order of
	magnitude.
\end{abstract}

\section{Introduction}

Gaussian processes (GPs) are a theoretically well-founded and powerful
probabilistic model \cite{Rasmussen2006}. However, conditioning a GP on data is often
computationally prohibitive for large datasets. This problem is amplified when
optimizing kernel hyperparameters. Gradient-based optimization requires
repeated evaluation of the \(\log\)-marginal likelihood \(\mathcal{L}\) and its
derivatives. These computations both have cubic complexity in the size \(n\) of the data.

\begin{figure}
	\hspace{4.5em}
	\includegraphics[width=0.75\columnwidth]{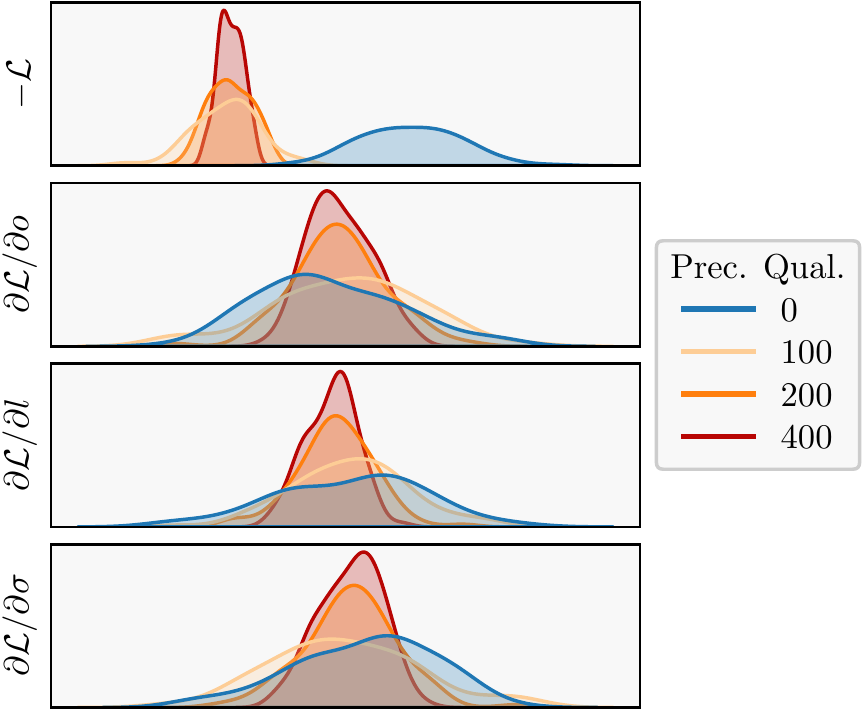}
	\caption{\emph{Preconditioning reduces not only bias but also variance in stochastic approximations to the
			\(\log\)-marginal likelihood \(\logmarglik\) and its derivatives.} GP hyperparameter optimization for large datasets requires cheap
		estimates of \(\logmarglik\) and its gradient. Preconditioning makes these
		more precise and less noisy as is shown here for increasing preconditioner quality on the
		``Elevators'' dataset using a Mat{\'e}rn\((\frac{3}{2})\) kernel.}
	\label{fig:precond-gp-hyperopt}
\end{figure}

\paragraph{GP Inference via Matrix-Vector Multiplication}
Recently, Krylov methods \cite{Golub2013}, based on iterative matrix-vector
multiplication
with the kernel matrix, have become popular for GP inference
\cite{Murray2009,Anitescu2012,Cutajar2016,Ubaru2017,Dong2017,Gardner2018,Wang2019}.
Methods primarily relying on matrix-vector products are advantageous. They can leverage structure in the kernel matrix \cite{Nocedal2006}, and importantly, they make effective use of modern
hardware and parallelization \cite{Gardner2018,Wang2019,Charlier2021}. When optimizing GP hyperparameters one needs to repeatedly solve linear systems with, and compute \(\log\)-determinants of, the kernel matrix.
Both can be done using Krylov methods. The linear systems are solved via the conjugate gradient method (CG)
\cite{Hestenes1952}, which reduces the cost of kernel matrix solves from
\(\bigO(n^3)\) to \(\bigO(n^2\idxCG)\) for \(\idxCG\) iterations. The \(\log\)-determinant can be approximated via stochastic trace estimation (STE) \cite{Hutchinson1989} combined with another Krylov method, the Lanczos algorithm \cite{Lanczos1950} (see \cite{Ubaru2017}). Its derivative may also be estimated via STE combined with CG \cite{Gardner2018}.

\paragraph{Challenges with this Approach}
Despite the advantages of combining Krylov methods with stochastic trace
estimation, there are considerable challenges in practice. These essentially reduce to bias and variance of the numerical approximations.
First, the convergence of CG depends on the conditioning of the kernel matrix, which can grow rapidly with \(n\) (e.g. for the RBF kernel).
Many iterations may be needed to achieve a desired error, and stopping the solver
early can result in biased solutions \citep{Potapczynski2021}.
However, if a preconditioner -- i.e. an approximation of the kernel matrix -- is available, convergence can be accelerated substantially \cite{Golub2013}.
Second, stochastic approximations of the \(\log\)-determinant and its derivative introduce variance into hyperparameter optimization.
While the estimates are unbiased (assuming sufficient Krylov iterations), variance can significantly slow down optimization. 
Reducing variance either requires further approximation at the cost of more bias
\citep{Artemev2021},
or a larger number of samples \(\idxrvs\) which only reduces error at a rate of
\(\bigO(\idxrvs^{-\frac{1}{2}})\) \cite{Avron2011}.
Now, while preconditioning is known to accelerate CG, it has not yet been explored for stochastic trace estimation in this context. 

\paragraph{Contributions}
We demonstrate that, with only a small algorithmic modification, preconditioning can be exploited for highly efficient \(\log\)-determinant estimation, and in turn GP hyperparameter optimization. We show that 
\begin{enumerate}[noitemsep,topsep=0pt,label=(\alph*)]
	\item \emph{preconditioning reduces variance} -- or equivalently accelerates convergence -- of the stochastic estimate of
	      the \(\log\)-determinant and its derivative (\Cref{thm:variance-reduced-trace-estimate}).
\end{enumerate}
We leverage this result, illustrated in \Cref{fig:precond-gp-hyperopt}, to prove
\begin{enumerate}[noitemsep,topsep=0pt,label=(\alph*),start=2]
	\item \emph{stronger theoretical guarantees} for the computation of the \(\log\)-determinant
	      (\Cref{thm:convergence-logdet,thm:convergence-logdet-backward}) and \(\log\)-marginal likelihood (\Cref{thm:convergence-log-marginal-likelihood}) than
	      previously known \cite{Ubaru2017,Gardner2018} and a \emph{novel error bound for the derivative} (\Cref{thm:convergence-derivative}).
\end{enumerate}
To make these general results concrete, we derive
\begin{enumerate}[noitemsep,topsep=0pt,label=(\alph*),start=3]
	\item \emph{specific rates for important combinations of kernels and preconditioners} (\Cref{tab:residual-decay-kernels-preconditioners}), making preconditioner choice for GP inference rigorous rather than heuristic.
\end{enumerate}
Finally, using our approach, we empirically observe
\begin{enumerate}[noitemsep,topsep=0pt,label=(\alph*),start=4]
	\item \emph{up to twelvefold speedup in training} of GP regression models applied to large-scale benchmark problems with up to
	      \(n \approx 325,\!000\) datapoints.
\end{enumerate}

\section{Background}
We want to infer a map \(h :\mathcal{X} \to \mathcal{Y}\)
from an input space \(\mathcal{X} \subset \R^d\) to an output space
\(\mathcal{Y} \subset \R\), given a dataset \(\mX \in \R^{n \times d}\) of
\(n\) training inputs \(\vx_i \in \R^d\) and outputs \(\vy \in \Rn\).

\subsection{Gaussian Processes}

A stochastic process \(f \sim \GP(\mu, k)\) with mean function
\(\mu\) and kernel \(k\) is called a \emph{Gaussian process} if \(\vf
= (f(\vx_1), \dots, f(\vx_n))^\top \sim
\Normal(\vmu, \mK)\) is jointly Gaussian with mean \(\vmu_i
= \mu(\vx_i)\) and
covariance \(\mK_{ij} =
k(\vx_i,
\vx_j)\).
Assuming \(\vy \mid \vf \sim \Normal(\vf, \sigma^2 \mI)\), the posterior
distribution
for test inputs \(\vx_\star\) is also Gaussian with
\begin{align*}
	\Exp[\vf_\star] & = \mu(\vx_\star) + k(\vx_\star, \mX)\shat{\mK}^{-1} (\vy - \vmu),                                       \\
	\Cov(\vf_\star) & = k(\vx_\star, \vx_\star) - k(\vx_\star, \mX)\shat{\mK}^{-1}k(\mX, \vx_\star),
\end{align*}
where \(\shat{\mK} = \mK + \sigma^2 \mI\). Without loss of generality we assume \(\mu = 0\) from now on.

\paragraph{Hyperparameter Optimization}
The computational bottleneck when optimizing kernel hyperparameters \(\vtheta\) is the repeated
evaluation of the \emph{\(\log\)-marginal likelihood}
\begin{equation}
	\label{eqn:likelihood}
	\begin{aligned}
		\logmarglik(\vtheta) & =\log p(\vy \mid \mX, \vtheta)                                                                                                   \\
		                     & =-\frac{1}{2} \big(\vy^\top \shat{\mK}^{-1} \vy + \underbrace{\log\det(\shat{\mK})}_{=\tr(\log(\shat{\mK}))} + n \log(2\pi)\big)
	\end{aligned}
\end{equation}
and its \emph{derivative} with respect to the hyperparameters
\begin{equation}
	{\textstyle \pdv{}{\evtheta}\logmarglik(\vtheta)} = \frac{1}{2}\vy^\top \shat{\mK}^{-1}
	{\textstyle \pdv{\shat{\mK}}{\evtheta} } \shat{\mK}^{-1}\vy - \frac{1}{2}
	\tr(\shat{\mK}^{-1} {\textstyle \pdv{\shat{\mK}}{\evtheta}}).
	\label{eqn:derivative_likelihood}
\end{equation}
Computing \eqref{eqn:likelihood} and \eqref{eqn:derivative_likelihood} via a Cholesky
decomposition has complexity \(\mathcal{O}(n^3)\) which is prohibitive for large \(n\). In response, many methods for approximate GP inference were developed \citep{Titsias2009,Hensman2013,Wilson2015,Wilson2015a,Pleiss2018}. 

In contrast, we aim for tractable \emph{numerically exact} GP inference in the large-scale setting \cite{Gardner2018,Wang2019}. To achieve this, we focus on efficient computation of the \(\log\)-determinant in \eqref{eqn:likelihood} and its derivative in \eqref{eqn:derivative_likelihood} (see \Cref{sec:precond-log-det-estimation}). This allows us to theoretically (\Cref{sec:precond-gp-hyperparam-opt}) and
empirically (\Cref{sec:experiments}) accelerate GP
hyperparameter optimization.

\subsection{Numerical Toolbox for Inference}

We will use the following
established numerical techniques.

\paragraph{Stochastic Trace Estimation (STE)}
The trace \(\tr(\mA)\) of a matrix can be approximated by drawing \(\idxrvs\)
independent random
vectors
\(\vz_i\) with \(\Exp[\vz_i] = \vzero\) and \(\Cov(\sqrt{n}\vz_i) =\mI\)
and computing
\emph{Hutchinson's estimator}
\cite{Hutchinson1989}
\begin{equation}
	\label{eqn:hutchinsons_estimator}
	\textstyle \trSTE(\mA) = \frac{n}{\idxrvs}\sum_{i=1}^\idxrvs
	\vz_i^\top \mA
	\vz_i \approx \tr(\mA).
\end{equation}
Here, we additionally assume the random vectors are normalized\footnote{Normalization is necessary for Lanczos quadrature \citep[see][Chap.~7.2]{Golub2009}.} \(\vz_i =
\tilde{\vz}_i / \norm{\tilde{\vz}_i}_2\) and that \(\sqrt{n}(\vz_1, \dots, \vz_\idxrvs)^\top \in
\R^{\idxrvs n}\) satisfies the
convex concentration property\footnote{Concentration enables us to prove probabilistic error bounds.} (see \Cref{def:convex-concentration-property}). These
assumptions are fulfilled by Rademacher-distributed random
vectors \(\tilde{\vz}_i\) with entries \(\{+1, -1\}\).\footnote{We conjecture they also hold for vectors	\(\tilde{\vz}_i
	\sim \Normal(\vzero, \mI)\).}
Evaluating the \(\log\)-marginal likelihood \eqref{eqn:likelihood} requires computing \(\tr(\log(\shat{\mK}))\). To use Hutchinson's estimator, we need to
efficiently compute \(\idxrvs\) quadratic terms \(\{\vz_i^\top \log(\shat{\mK}) \vz_i\}_{i=1}^\idxrvs\).

\paragraph{Stochastic Lanczos Quadrature (SLQ)}
Given a matrix function \(f\), one can approximate bilinear
forms \(\vz^\top f(\shat{\mK}) \vz\) using quadrature \citep[Chap.~7]{Golub2009}. The
nodes and weights of the quadrature rule can be computed
efficiently via \(\idxLanczos\) iterations of the \emph{Lanczos algorithm} \cite{Lanczos1950}  (or equivalently via CG \cite{Gardner2018}).
The combination with Hutchinson's estimator
\begin{equation}
	\trSLQ(f(\shat{\mK})) \approx \trSTE(f(\shat{\mK})) \approx \tr(f(\shat{\mK})),
\end{equation}
is called \emph{stochastic Lanczos quadrature} \cite{Ubaru2017}.

To compute the linear solves \(\vv \mapsto \shat{\mK}^{-1}\vv\) with the kernel matrix in \eqref{eqn:likelihood} and \eqref{eqn:derivative_likelihood}, we use the \emph{conjugate gradient method}.

\paragraph{Conjugate Gradient Method (CG)}
CG \cite{Hestenes1952} is an iterative method for
solving linear systems with symmetric positive definite matrix. It is particularly suited for
large systems since it is matrix-free, and relies primarily on matrix-vector multiplication with \(\shat{\mK}\).

\paragraph{Preconditioning}
It is well-known that CG can be accelerated via a symmetric positive definite
preconditioner \(\shat{\mP} \approx \shat{\mK}\), by solving an
equivalent linear system with matrix
\(\shat{\mP}^{-\frac{1}{2}}
\shat{\mK} \shat{\mP}^{-\frac{\top}{2}} \approx \mI\) \cite{Trefethen1997}.
CG's convergence is then determined by the condition number
\begin{equation}
	\textstyle  \kappa \coloneqq \kappa(\shat{\mP}^{-\frac{1}{2}}
	\shat{\mK} \shat{\mP}^{-\frac{\top}{2}}) \ll \kappa(\shat{\mK}) =
	\frac{\abs{\lambda_{\max}(\shat{\mK})}}{\abs{\lambda_{\min}(\shat{\mK})}}.
\end{equation}
Suppose the approximation quality of a sequence
of preconditioners \(\{\shat{\mP}_\idxrvs\}_\idxrvs\) indexed by \(\idxrvs\) is given by\footnote{The use of \(\idxrvs\) for the number of random vectors and the preconditioner sequence is deliberate. Setting them to the same value enables variance reduction as we prove in \Cref{thm:variance-reduced-trace-estimate}.}
\begin{equation}
	\label{eqn:preconditioner-quality}
	\lVert{\shat{\mK} - \shat{\mP}_\idxrvs}\rVert_F \leq \bigO(g(\idxrvs)) \lVert \shat{\mK} \rVert_F.
\end{equation}
If \(g(\idxrvs) \to 0\) quickly, a small amount
of precomputation can significantly accelerate CG, since by \Cref{lem:condition-number-preconditioner-quality}
\begin{equation}
	\label{eqn:condition-preconditioner-quality}
	\kappa \leq (1 + \bigO(g(\idxrvs))\norm{\shat{\mK}}_F)^2.
\end{equation}
Preconditioners must be cheap to
obtain and allow efficient linear solves \(\vv \mapsto \shat{\mP}^{-1}\vv\).\footnote{While
CG (and Lanczos) assume a symmetric
pos. definite matrix, both can be implemented using only
$\shat{\mP}^{-1}$, not \(\shat{\mP}^{-\frac{1}{2}}
\shat{\mK}
\shat{\mP}^{-\frac{\top}{2}}\).} 
As an example, \emph{diagonal-plus-low-rank} preconditioners \(\shat{\mP}_\idxrvs = \sigma^2\mI +
\mL_\idxrvs \mL_\idxrvs^\top\),
with \(\mL_\idxrvs \in \R^{n \times \idxrvs}\), admit linear solves in \(\bigO(n \idxrvs^2)\) via
the matrix inversion lemma.

\section{Log-Determinant Estimation}
\label{sec:precond-log-det-estimation}

Our goal is to compute \(\log \det(\shat{\mK}) = \tr (\log(\shat{\mK}))\) and its
derivative via
matrix-vector multiplication. As described, we can use stochastic trace estimation to do
so. Now assume we additionally have access to a preconditioner \(\shat{\mP} \approx
\shat{\mK}\).
As we will show, we can then not just accelerate the convergence of CG, but \emph{also} more efficiently compute the forward and backward pass for the \(\log\)-determinant.

By the properties of the matrix logarithm we can decompose the \(\log\)-determinant into a
\emph{deterministic approximation} based on the preconditioner and a \emph{residual trace} computed via
stochastic trace estimation.\footnote{Similar approaches have been suggested by \citet{Adams2018,Meyer2021}. Our
	work
	is notably different in that it a) uses preconditioning, b) also considers the backward pass and c) gives stronger theoretical guarantees.} It holds by \Cref{lem:logdet-decomposition}, that
\begin{equation}
	\label{eqn:logdet-decomposition}
	\log\det(\shat{\mK}) = \log\det(\shat{\mP}_\idxrvs) + \mathrm{tr}(\underbrace{\log(\shat{\mK}) - \log(\shat{\mP}_\idxrvs)}_{=\mDelta_{\log}})
	\vspace{-0.5em}
\end{equation}
where $\tr(\mDelta_{\log})=\mathrm{tr}\big(\log\big(\shat{\mP}_\idxrvs^{-\frac{1}{2}}
\shat{\mK} \shat{\mP}_\idxrvs^{-\frac{\top}{2}}\big)\big)$ and we assume $\log \det(\shat{\mP}_\idxrvs)$ is efficient to compute.
\Cref{eqn:logdet-decomposition} has two crucial benefits we can exploit. First and foremost, the faster
\(\shat{\mP}_\idxrvs \to \shat{\mK}\), i.e. \(g(\idxrvs) \to 0\), the less the stochastic approximation of \(\tr(\mDelta_{\log})\) affects the
estimate. Since its contribution to the overall error decreases the better \(\log\det(\shat{\mP}_\idxrvs)\) approximates \(\log\det(\shat{\mK}_\idxrvs)\), significantly fewer random vectors are needed to achieve a desired error with high probability. Second, we can now run Lanczos on the preconditioned matrix accelerating its convergence. As we will show later, one can also exploit \eqref{eqn:logdet-decomposition} for the backward pass.

\subsection{Variance-reduced Stochastic Trace Estimation}

This intuitive argument for the \(\log\)-determinant also holds generally, assuming a similar decomposition exists.

\begin{figure}
	\centering
	\includegraphics[width=\columnwidth]{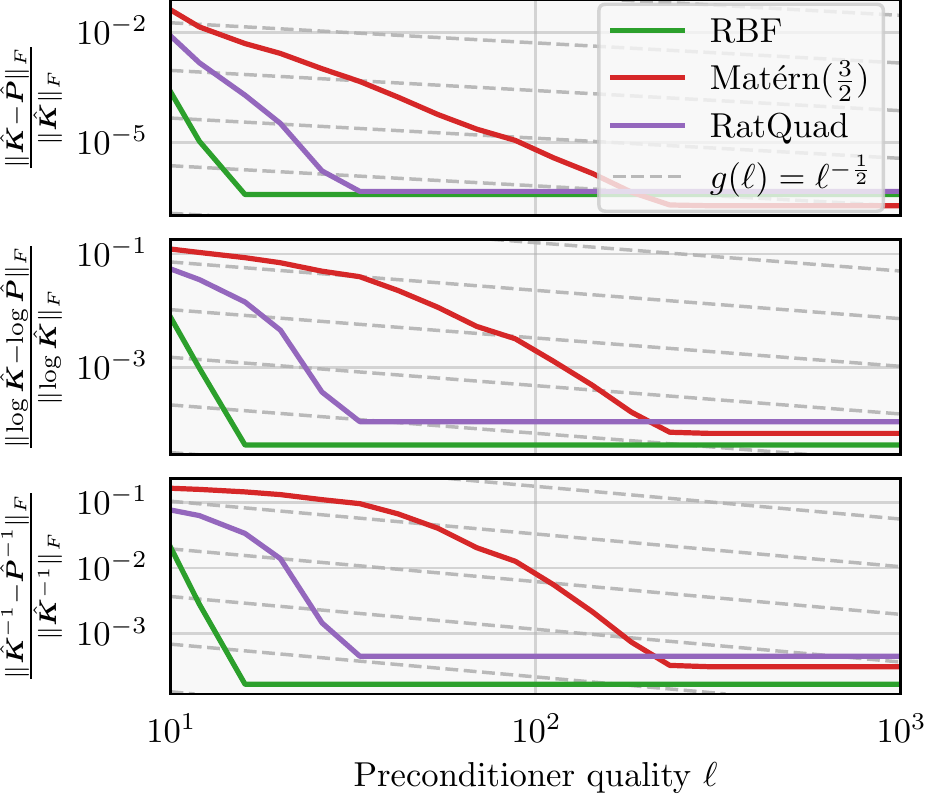}
	\caption{\emph{Relative error of matrix functions.} For analytic functions
		\(f(\shat{\mP_\idxrvs}) \to f(\shat{\mK})\)
		at the asymptotic rate of the preconditioner \(\shat{\mP}_\idxrvs \to \shat{\mK}\).
		Here, we use a partial Cholesky preconditioner on a synthetic dataset (\(n=1,\!000\)).}
	\label{fig:residual-decay}
\end{figure}

\begin{restatable}[Variance-reduced Stochastic Trace Estimation]{theorem}{thmvarreducedtraceestimate}
	\label{thm:variance-reduced-trace-estimate}
	Let \(\shat{\mK}, \shat{\mP}_\idxrvs \in \Rnn_{\mathrm{spd}}, \mDelta_f \in \Rnn\) and \(f:\Rnn_\mathrm{spd} \to \R\) such that \(\tr(f(\shat{\mK})) =
	\operatorname{tr}(f(\shat{\mP}_\idxrvs)) + \tr(\mDelta_f)\), and define the estimator \(\trvarredux =
	\operatorname{tr}(f(\shat{\mP}_\idxrvs)) +	\trSTE(\mDelta_f)\). Now, assume there exist
	\(c_{\mDelta} > 0\) and \(g:\N \to (0, \infty)\) such that
	\begin{equation}
		\norm{\mDelta_f}_F \leq c_{\mDelta}
		g(\idxrvs)\norm{f(\shat{\mK})}_F.\label{eqn:delta_frobenius_bound}
	\end{equation}
	Then there exists \(c_{\vz} > 0\) dependent on the choice of random vectors,
	such that, if \(\idxrvs \geq c_{\vz}\log(\delta^{-1})\), it holds with probability \(1 - \delta \in [\frac{1}{2}, 1)\) that
	\begin{gather}
		\abs{\trvarredux - \tr(f(\shat{\mK}))} \leq \varepsilon_{\idxtxtSTE} \norm{f(\shat{\mK})}_F.\\
		\intertext{where for \(C_1 = c_{\mDelta}\sqrt{c_{\vz}}\) the relative error is given by}
		\varepsilon_{\idxtxtSTE}(\delta, \idxrvs) = C_1 \sqrt{\log(\delta^{-1})}\idxrvs^{-\frac{1}{2}} g(\idxrvs).\label{eqn:varreduced-ste-error}
	\end{gather}
\end{restatable}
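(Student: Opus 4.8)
The idea is to reduce the error of $\trvarredux$ to the error of a plain Hutchinson estimator applied to the residual $\mDelta_f$, and then control that with a Hanson--Wright-type concentration inequality for random vectors satisfying the convex concentration property (\Cref{def:convex-concentration-property}). First, by the assumed decomposition $\tr(f(\shat{\mK})) = \tr(f(\shat{\mP}_\idxrvs)) + \tr(\mDelta_f)$ together with the definition $\trvarredux = \tr(f(\shat{\mP}_\idxrvs)) + \trSTE(\mDelta_f)$, one has the exact identity
\[
	\trvarredux - \tr(f(\shat{\mK})) \;=\; \trSTE(\mDelta_f) - \tr(\mDelta_f),
\]
so it suffices to bound $\abs{\trSTE(\mDelta_f) - \tr(\mDelta_f)}$. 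Since a quadratic form depends only on the symmetric part of its matrix, we may replace $\mDelta_f$ by $\tfrac12(\mDelta_f + \mDelta_f^\top)$, which increases neither $\norm{\mDelta_f}_F$ nor its spectral norm; so assume $\mDelta_f$ symmetric. With $\rvw = \sqrt{n}(\vz_1^\top, \dots, \vz_\idxrvs^\top)^\top \in \R^{\idxrvs n}$ and $\mB \in \R^{\idxrvs n \times \idxrvs n}$ the block-diagonal matrix with $\idxrvs$ copies of $\mDelta_f$ on the diagonal, the assumptions $\Exp[\vz_i] = \vzero$ and $\Cov(\sqrt{n}\vz_i) = \mI$ give $\trSTE(\mDelta_f) = \tfrac{1}{\idxrvs}\rvw^\top \mB \rvw$ with $\Exp[\trSTE(\mDelta_f)] = \tr(\mDelta_f)$; thus Hutchinson's error is a centered quadratic form in $\rvw$.

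The core step is concentration. Because $\rvw$ satisfies the convex concentration property, a Hanson--Wright inequality applies: there is a constant $c > 0$, depending only on the concentration constant of the random vectors, such that for every symmetric $\mB$,
\[
	\Pr\!\big[\,\abs{\rvw^\top \mB \rvw - \Exp[\rvw^\top \mB \rvw]} > t\,\big] \;\le\; 2\exp\!\big(-c\min(t^2/\norm{\mB}_F^2,\; t/\norm{\mB}_2)\big).
\]
Using $\norm{\mB}_F = \sqrt{\idxrvs}\,\norm{\mDelta_f}_F$, $\norm{\mB}_2 = \norm{\mDelta_f}_2 \le \norm{\mDelta_f}_F$, and rescaling $t \mapsto \idxrvs s$ to absorb the factor $1/\idxrvs$, this yields
\[
	\Pr\!\big[\,\abs{\trSTE(\mDelta_f) - \tr(\mDelta_f)} > s\,\big] \;\le\; 2\exp\!\big(-c\min(\idxrvs s^2/\norm{\mDelta_f}_F^2,\; \idxrvs s/\norm{\mDelta_f}_F)\big).
\]

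It remains to choose $s$ and $c_\vz$. Set $s = \varepsilon_{\idxtxtSTE}\norm{f(\shat{\mK})}_F$ with $\varepsilon_{\idxtxtSTE}(\delta,\idxrvs) = C_1\sqrt{\log(\delta^{-1})}\,\idxrvs^{-1/2}g(\idxrvs)$ and $C_1 = c_\mDelta\sqrt{c_\vz}$, and substitute the hypothesis $\norm{\mDelta_f}_F \le c_\mDelta g(\idxrvs)\norm{f(\shat{\mK})}_F$. Then the $\idxrvs$- and $g$-dependence cancels in the quadratic term, giving $\idxrvs s^2/\norm{\mDelta_f}_F^2 \ge c_\vz\log(\delta^{-1})$, while the linear term satisfies $\idxrvs s/\norm{\mDelta_f}_F \ge \sqrt{\idxrvs}\sqrt{c_\vz}\sqrt{\log(\delta^{-1})} \ge c_\vz\log(\delta^{-1})$ precisely once $\idxrvs \ge c_\vz\log(\delta^{-1})$. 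Choosing $c_\vz$ large enough in terms of $c$ so that $c\,c_\vz\log(\delta^{-1}) \ge \log(2/\delta)$ — possible because $\delta \le \tfrac12$ forces $\log(2/\delta) \le 2\log(\delta^{-1})$ — makes the right-hand side at most $\delta$, which is the claim.

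The main obstacle is the concentration step: extracting a genuine sub-exponential (Bernstein/Hanson--Wright) tail for the coupled sum $\sum_i \vz_i^\top \mDelta_f \vz_i$ from the convex concentration property alone, and bookkeeping the $\idxrvs$-dependence of the quadratic versus the linear exponent carefully enough that the sample-size condition $\idxrvs \ge c_\vz\log(\delta^{-1})$ is exactly what pushes the estimator into the sub-Gaussian regime responsible for the $\sqrt{\log(\delta^{-1})}$ rate. The rest — the algebraic reduction, the symmetrization, the block-diagonal norm identities, and the final choice of constants — is routine.
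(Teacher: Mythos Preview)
Your proposal is correct and follows essentially the same route as the paper: reduce to the Hutchinson error on $\mDelta_f$, stack the random vectors and form the block-diagonal matrix, apply the Hanson--Wright inequality for vectors with the convex concentration property (the paper cites Adamczak's version, which does not require symmetry, so your symmetrization step is harmless but unnecessary), use $\norm{\mB}_F^2 = \idxrvs\norm{\mDelta_f}_F^2$ and $\norm{\mB}_2 \le \norm{\mDelta_f}_F$, and then choose $c_\vz = 2/c$ so that the sample-size condition $\idxrvs \ge c_\vz\log(\delta^{-1})$ together with $\delta \le \tfrac12$ places the tail in the sub-Gaussian regime. The paper packages the concentration step as a standalone lemma (\Cref{lem:finite_sample_bound}) and then invokes it in a two-line proof of the theorem, but the content is identical to what you have written inline.
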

\begin{proof}
	See \Cref{suppsec:stochastic-trace-estimation}.
\end{proof}

Notice that \Cref{thm:variance-reduced-trace-estimate} assumes that the sequence \(\{f(\shat{\mP}_\idxrvs)\}_\idxrvs\) approximates \(f(\shat{\mK})\) sufficiently fast with \(\idxrvs\) in \eqref{eqn:delta_frobenius_bound}. Intuitively, if \(\shat{\mP}_\idxrvs \to \shat{\mK}\) quickly, one might expect the same for \(f(\shat{\mP}_\idxrvs) \to f(\shat{\mK})\) under certain conditions on \(f\).
Indeed, one obtains the same asymptotic rate \(g(\idxrvs)\) of the preconditioner \(\shat{\mP}_\idxrvs\) for the approximation of \(f(\shat{\mK})\) by \(f(\shat{\mP})\) (see \Cref{prop:preconditioner_bound_delta_frobenius_bound}). This is illustrated in \Cref{fig:residual-decay}. Therefore, the error of the variance-reduced stochastic trace estimate is \emph{determined by the quality \(g(\idxrvs)\) of the preconditioner}.

\paragraph{Comparison of \Cref{thm:variance-reduced-trace-estimate} and Existing Results}
Consider the case where \(f=\operatorname{id}\). If we are not using a preconditioner, i.e. \(\shat{\mP} = \mZero\) and thus \(c_\mDelta = g(\idxrvs) = 1\), we recover the well-known convergence rate
\(\bigO(\idxrvs^{-\frac{1}{2}})\) of Hutchinson's estimator \cite{Avron2011,RoostaKhorasani2015}.
If instead we choose a randomized low rank
approximation as a preconditioner with \(g(\idxrvs) =
\idxrvs^{-\frac{1}{2}}\), then \Cref{thm:variance-reduced-trace-estimate} recovers the convergence rate \(\varepsilon_{\idxtxtSTE} \in
\bigO(\idxrvs^{-1})\) of \textsc{Hutch++} \cite{Meyer2021,Persson2021,Jiang2021}
as a special case. However, as we will show, using preconditioning one can achieve polynomial -- even exponential -- convergence rates for common kernels. Such a drastic improvement is possible since neither variants of Hutchinson's make any assumptions about the kernel matrix, whereas preconditioners are designed to leverage structure.

\subsection{Forward Pass}
We can now analyze the error of the preconditioned stochastic \(\log\)-determinant estimate.
Combining \Cref{thm:variance-reduced-trace-estimate} with Lanczos quadrature error analysis, the following holds.

\begin{restatable}[Error Bound for \(\log \det(\shat{\mK})\)]{theorem}{convergencelogdet}
	\label{thm:convergence-logdet}
	Let \(f = \log\), \(\mDelta_{\log} = \log\big(\shat{\mP}^{-\frac{1}{2}}
	\shat{\mK} \shat{\mP}^{-\frac{\top}{2}}\big)\) and assume the
	conditions of \Cref{thm:variance-reduced-trace-estimate} hold. Then, with probability \(1-\delta\), it holds for
	\(\trvarreduxlogdet =
	\log(\det(\shat{\mP})) + \trSLQ(\mDelta_{\log})\), that
	\begin{equation*}
		\textstyle \abs{\trvarreduxlogdet - \log\det(\shat{\mK}) } \leq (\varepsilon_{\idxtxtLanczosSTE} +
		\varepsilon_{\idxtxtSTE})\norm{\log(\shat{\mK})}_F,
	\end{equation*}
	where the individual errors are bounded by
	\begin{align}
		\varepsilon_{\idxtxtLanczosSTE}(\kappa, \idxLanczos) & \leq \textstyle K_1\left(\frac{\sqrt{2\kappa + 1} - 1}{\sqrt{2\kappa +1}+1} \right)^{2 \idxLanczos} \\
		\varepsilon_{\idxtxtSTE}(\delta, \idxrvs)         & \leq \textstyle C_1\sqrt{\log(\delta^{-1})}\idxrvs^{-\frac{1}{2}} g(\idxrvs)
	\end{align}
	and \(K_1 =\frac{5 \kappa \log(2(\kappa + 1))}{2\norm{\log(\shat{\mK})}_F \sqrt{2\kappa +1}}\).
\end{restatable}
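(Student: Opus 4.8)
The plan is to recognize the error of $\trvarreduxlogdet$ as that of a stochastic Lanczos-quadrature estimate of $\tr(\mDelta_{\log})$, split it with one triangle inequality into a quadrature part and a stochastic part, and bound the two separately: the stochastic part by invoking \Cref{thm:variance-reduced-trace-estimate} verbatim, and the quadrature part by the standard Gauss--Lanczos error analysis. Concretely, by \Cref{lem:logdet-decomposition} one has $\log\det(\shat{\mK}) = \log\det(\shat{\mP}) + \tr(\mDelta_{\log})$ with $\tr(\mDelta_{\log}) = \tr(\log M) = \tr(\log\shat{\mK}) - \tr(\log\shat{\mP})$, where $M = \shat{\mP}^{-1/2}\shat{\mK}\shat{\mP}^{-\top/2}$ (the determinant identity $\det M = \det\shat{\mK}/\det\shat{\mP}$). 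Hence $\abs{\trvarreduxlogdet - \log\det(\shat{\mK})} = \abs{\trSLQ(\mDelta_{\log}) - \tr(\mDelta_{\log})}$. Introducing $\trSTE(\mDelta_{\log}) = \frac{n}{\idxrvs}\sum_i \vz_i^\top\log(M)\vz_i$, the Hutchinson estimator built from the \emph{exact} bilinear forms that Lanczos quadrature approximates, the triangle inequality bounds this by $\abs{\trSLQ(\mDelta_{\log}) - \trSTE(\mDelta_{\log})}$ (quadrature error) plus $\abs{\trSTE(\mDelta_{\log}) - \tr(\mDelta_{\log})}$ (stochastic error).

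For the stochastic part I would apply \Cref{thm:variance-reduced-trace-estimate} with $f = \log$. Its estimator $\trvarredux = \tr(\log\shat{\mP}) + \trSTE(\mDelta_{\log})$ satisfies $\abs{\trvarredux - \tr(\log\shat{\mK})} = \abs{\trSTE(\mDelta_{\log}) - \tr(\mDelta_{\log})}$ by the same determinant identity, so — under the assumed bound $\norm{\mDelta_{\log}}_F \le c_{\mDelta}g(\idxrvs)\norm{\log\shat{\mK}}_F$ and $\idxrvs \ge c_{\vz}\log(\delta^{-1})$ — this term is at most $\varepsilon_{\idxtxtSTE}\norm{\log\shat{\mK}}_F$ with $\varepsilon_{\idxtxtSTE} = C_1\sqrt{\log(\delta^{-1})}\idxrvs^{-1/2}g(\idxrvs)$, with probability $1-\delta$. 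All of the randomness is confined to this term; the quadrature term is deterministic given the realized $\vz_i$, so no union bound is needed.

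For the quadrature part, each $\vz_i$ has unit norm, hence $\vz_i^\top\log(M)\vz_i = \int\log(\lambda)\,d\mu_i(\lambda)$ for a probability measure $\mu_i$ on $[\lambda_{\min}(M),\lambda_{\max}(M)]$, and $\idxLanczos$ Lanczos iterations yield exactly the $\idxLanczos$-node Gauss rule for $\mu_i$. Since $M$ is SPD, $\log$ is analytic on a complex neighborhood of this interval whose only obstruction is the branch point at $0$, strictly to the left of the (rescaled) interval; the classical Gauss-quadrature error bound for integrands analytic on a Bernstein ellipse (as in \citet{Ubaru2017}) then gives a per-vector error that decays geometrically in $\idxLanczos$, with the admissible ellipse parameter $\rho$ and the maximum modulus of $\log$ over its image both controlled by $\kappa = \kappa(M)$. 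Summing the $\idxrvs$ per-vector errors and multiplying by $n/\idxrvs$ produces $\abs{\trSLQ(\mDelta_{\log}) - \trSTE(\mDelta_{\log})} \le \varepsilon_{\idxtxtLanczosSTE}\norm{\log\shat{\mK}}_F$ with $\varepsilon_{\idxtxtLanczosSTE} = K_1\big(\tfrac{\sqrt{2\kappa+1}-1}{\sqrt{2\kappa+1}+1}\big)^{2\idxLanczos}$, and adding the two bounds finishes the proof.

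The genuine work is in the last step's constant tracking: determining the largest admissible Bernstein-ellipse parameter from the distance of $0$ to the spectral interval of $M$ (this is what yields the base $\tfrac{\sqrt{2\kappa+1}-1}{\sqrt{2\kappa+1}+1}$ rather than the naive $\tfrac{\sqrt{\kappa}-1}{\sqrt{\kappa}+1}$), bounding the maximum of $\abs{\log}$ over the image ellipse by something of the form $\log(2(\kappa+1))$, and collecting the remaining factors — the geometric $1/(\rho^2-1)$ term and the normalization by $\norm{\log\shat{\mK}}_F$ — into $K_1 = \tfrac{5\kappa\log(2(\kappa+1))}{2\norm{\log\shat{\mK}}_F\sqrt{2\kappa+1}}$. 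Everything else — the decomposition, the triangle inequality, and the application of \Cref{thm:variance-reduced-trace-estimate} — is immediate.
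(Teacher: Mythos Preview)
Your proposal is correct and follows essentially the same route as the paper: decompose via \Cref{lem:logdet-decomposition}, split $\lvert\trSLQ(\mDelta_{\log})-\tr(\mDelta_{\log})\rvert$ by one triangle inequality into a stochastic piece handled by \Cref{thm:variance-reduced-trace-estimate} and a Lanczos-quadrature piece. The only difference is cosmetic: where you sketch the Bernstein-ellipse derivation for the quadrature error, the paper simply invokes the Ubaru--Chen--Saad bound as a black box (their Corollary in \Cref{suppsec:background-krylov-methods}), which directly yields the stated $K_1$ after dividing by $\lVert\log(\shat{\mK})\rVert_F$.
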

\begin{proof}
	See \Cref{suppsec:log-determinant}.
\end{proof}

\begin{restatable}{corollary}{proberrorboundlogdet}
	\label{cor:error-bound-logdet}
	Assume the conditions of \Cref{thm:convergence-logdet} hold. If the number of random vectors \(\idxrvs\)
	satisfies \eqref{eqn:varreduced-ste-error} with \(\varepsilon_{\idxtxtSTE} = \frac{\varepsilon}{2}\) and we
	run
	\begin{equation}
		\label{eqn:num_lanczos_steps_logdet}
		\textstyle \idxLanczos \geq
		\frac{\sqrt{3}}{4}\sqrt{\kappa}\log\big(2K_1\varepsilon^{-1}\big)
	\end{equation}
	iterations of Lanczos, then it holds that
	\begin{equation*}
		\boxed{
			\textstyle
			\Pr \left(\abs{\trvarreduxlogdet - \log\det(\shat{\mK})} \leq	\varepsilon \norm{\log(\shat{\mK})}_F\right) \geq
			1-\delta.
		}
	\end{equation*}
\end{restatable}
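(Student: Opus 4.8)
The plan is to combine the two error contributions from \Cref{thm:convergence-logdet} and show that the prescribed choices of $\idxrvs$ and $\idxLanczos$ each drive their respective contribution below $\frac{\varepsilon}{2}\norm{\log(\shat{\mK})}_F$, so that a triangle inequality and a union bound yield the claimed result. Concretely, \Cref{thm:convergence-logdet} already gives, with probability $1-\delta$,
\[
	\abs{\trvarreduxlogdet - \log\det(\shat{\mK})} \leq (\varepsilon_{\idxtxtLanczosSTE} + \varepsilon_{\idxtxtSTE})\norm{\log(\shat{\mK})}_F,
\]
so it suffices to verify $\varepsilon_{\idxtxtLanczosSTE} \leq \frac{\varepsilon}{2}$ and $\varepsilon_{\idxtxtSTE} \leq \frac{\varepsilon}{2}$ under the stated hypotheses. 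The second is immediate: it is exactly the assumption that $\idxrvs$ is chosen so that the expression in \eqref{eqn:varreduced-ste-error} equals $\frac{\varepsilon}{2}$.

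For the Lanczos term, I would start from the bound $\varepsilon_{\idxtxtLanczosSTE} \leq K_1 \rho^{2\idxLanczos}$ with $\rho = \frac{\sqrt{2\kappa+1}-1}{\sqrt{2\kappa+1}+1}$, and ask for $K_1 \rho^{2\idxLanczos} \leq \frac{\varepsilon}{2}$, i.e. $2\idxLanczos \log(\rho^{-1}) \geq \log(2K_1\varepsilon^{-1})$, i.e. $\idxLanczos \geq \frac{\log(2K_1\varepsilon^{-1})}{2\log(\rho^{-1})}$. The remaining work is to show $\frac{1}{2\log(\rho^{-1})} \leq \frac{\sqrt{3}}{4}\sqrt{\kappa}$, equivalently $\log(\rho^{-1}) \geq \frac{2}{\sqrt{3}\sqrt{\kappa}}$. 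Writing $t = \frac{1}{\sqrt{2\kappa+1}} \in (0,1]$, we have $\rho = \frac{1-t}{1+t}$ and $\log(\rho^{-1}) = \log\frac{1+t}{1-t} = 2\operatorname{arctanh}(t) \geq 2t$, so $\log(\rho^{-1}) \geq \frac{2}{\sqrt{2\kappa+1}}$. It then remains to check $\frac{2}{\sqrt{2\kappa+1}} \geq \frac{2}{\sqrt{3}\sqrt{\kappa}}$, i.e. $3\kappa \geq 2\kappa+1$, i.e. $\kappa \geq 1$, which always holds for a condition number. Hence the choice \eqref{eqn:num_lanczos_steps_logdet} guarantees $\varepsilon_{\idxtxtLanczosSTE} \leq \frac{\varepsilon}{2}$.

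Combining the two bounds via the triangle inequality inside the event of \Cref{thm:convergence-logdet} (which already has probability at least $1-\delta$, since the Lanczos part of that theorem is deterministic and only the STE part carries the failure probability $\delta$), we obtain $\abs{\trvarreduxlogdet - \log\det(\shat{\mK})} \leq \varepsilon\norm{\log(\shat{\mK})}_F$ with probability $\geq 1-\delta$, which is the boxed claim. The only mildly delicate point — and the part I would be most careful about — is the elementary inequality $\operatorname{arctanh}(t)\geq t$ and the bookkeeping that ties the definition of $K_1$ and $\rho$ (both in terms of $\kappa$) to the clean $\sqrt{\kappa}$ scaling in \eqref{eqn:num_lanczos_steps_logdet}; everything else is a direct substitution into \Cref{thm:convergence-logdet} plus a union bound, so no genuine obstacle is expected.
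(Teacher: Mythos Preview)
Your proof is correct and follows essentially the same approach as the paper: the paper also splits into the STE and Lanczos contributions, uses the assumption on $\idxrvs$ to get $\varepsilon_{\idxtxtSTE}\le\tfrac{\varepsilon}{2}$, and controls the Lanczos term via the inequality $\tfrac{x-1}{x+1}\le\exp(-2/x)$ (their \Cref{lem:upper-bound-conv-frac}), which with $x=\sqrt{2\kappa+1}$ is exactly your $\operatorname{arctanh}(t)\ge t$. The only cosmetic difference is that you phrase the elementary inequality via $\operatorname{arctanh}$ whereas the paper packages it as a lemma; also note that, as you yourself observe, no union bound is actually needed since the Lanczos error is deterministic.
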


\begin{proof}
	See \Cref{suppsec:log-determinant}.
\end{proof}

We note two major improvements over the bound by \citet[Corollary~4.5]{Ubaru2017}. First, the number of
Lanczos steps now depends on the condition number \(\kappa\) of
the \emph{preconditioned} matrix, implying faster convergence.
Second, depending on the preconditioner quality \(g(\idxrvs)\), we need significantly
fewer random vectors
by \Cref{thm:variance-reduced-trace-estimate}.

\subsection{Backward Pass}
By differentiating through \eqref{eqn:logdet-decomposition}, we obtain a decomposition into a \emph{deterministic approximation} based
on the
preconditioner and a \emph{residual trace} for the backward pass. For \(\mDelta_{\mathrm{inv}\partial} = \shat{\mK}^{-1} \pdv{\shat{\mK}}{\evtheta} - \shat{\mP}^{-1}
\pdv{\shat{\mP}}{\evtheta}\), we have
\begin{equation}
	\textstyle \pdv{}{\evtheta} \log \det(\shat{\mK}) = \tr(\shat{\mP}^{-1} \pdv{\shat{\mP}}{\evtheta}) +
	\tr(\mDelta_{\mathrm{inv}\partial}),
	\label{eqn:logdet-decomposition_deriv}
\end{equation}
Therefore the stochastic trace estimator
\begin{equation}
	\trSCG(\mDelta_{\mathrm{inv}\partial}) \approx \tr(\mDelta_{\mathrm{inv}\partial})
\end{equation} 
requires solves $\vz_i^\top \mK^{-1} \pdv{\shat{\mK}}{\evtheta} \vz_i$
and $\vz_i^\top \shat{\mP}^{-1} \pdv{\shat{\mP}}{\evtheta} \vz_i$.
The former can be computed with \(\idxCG\) iterations of preconditioned CG,
while the latter is simply a solve with the preconditioner.
Note that the deterministic term $\mathrm{tr}\big(\shat{\mP}^{-1} \pdv{\shat{\mP}}{\evtheta}\big)$
is efficient to calculate for many types of preconditioners. For example, if $\shat{\mP}$
is a diagonal-plus-low-rank preconditioner it can be
computed in \(\bigO(n\idxrvs^2)\) (see
\Cref{suppsec:derivative-log-determinant}).
Using \Cref{thm:variance-reduced-trace-estimate}, we obtain a probabilistic error bound for the derivative estimate.

\begin{restatable}[Error Bound for \(\mathrm{tr}\big(\shat{\mK}^{-1} \pdv{\shat{\mK}}{\evtheta}\big)\)]{theorem}{convergencelogdetbackward}
	\label{thm:convergence-logdet-backward}
	Let \(f(\shat{\mK}) = \shat{\mK}^{-1} \pdv{\shat{\mK}}{\evtheta}\),
	\(\mDelta_{\mathrm{inv}\partial} = \shat{\mK}^{-1} \pdv{\shat{\mK}}{\evtheta} - \shat{\mP}^{-1}
	\pdv{\shat{\mP}}{\evtheta}\) and assume the conditions of \Cref{thm:variance-reduced-trace-estimate} hold. If we solve
	\(\shat{\mK}^{-1} \pdv{\shat{\mK}}{\evtheta}\vz_i\)
	with \(\idxCG\) iterations of preconditioned CG, initialized at \(\vzero\) or better, then it holds
	with probability \(1-\delta\)
	for \(\trvarreduxinvderiv =
	\mathrm{tr}\big(\shat{\mP}^{-1} \pdv{\shat{\mP}}{\evtheta}\big) + \trSCG(\mDelta_{\mathrm{inv}\partial})\), that
	\begin{equation*}
		\textstyle \big \lvert \trvarreduxinvderiv - \mathrm{tr}\big(\shat{\mK}^{-1} \pdv{\shat{\mK}}{\evtheta}\big) \big\rvert
		\leq
		(\varepsilon_{\idxtxtCGSTE} +
		\varepsilon_{\idxtxtSTE}) \norm{\mK^{-1}\pdv{\mK}{\evtheta}}_F,
	\end{equation*}
	where the individual errors are bounded by
	\begin{align}
		\varepsilon_{\idxtxtCGSTE}(\kappa, \idxCG)   & \textstyle \leq K_2\left(\frac{\sqrt{\kappa} - 1}{\sqrt{\kappa}+1} \right)^{\idxCG} \\
		\varepsilon_{\idxtxtSTE}(\delta, \idxrvs) & \textstyle \leq C_1\sqrt{\log(\delta^{-1})}\idxrvs^{-\frac{1}{2}} g(\idxrvs)
	\end{align}
	and \(K_2 = 2 \sqrt{\kappa(\shat{\mK})}n\).
\end{restatable}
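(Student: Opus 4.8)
The plan is to decompose the total error into two parts that are controlled separately: the stochastic trace estimation error on the residual $\mDelta_{\mathrm{inv}\partial}$, and the error incurred by solving the linear systems $\shat{\mK}^{-1}\pdv{\shat{\mK}}{\evtheta}\vz_i$ only approximately with $\idxCG$ iterations of preconditioned CG. Write the exact residual estimator $\trSTE(\mDelta_{\mathrm{inv}\partial}) = \frac{n}{\idxrvs}\sum_i \vz_i^\top\big(\shat{\mK}^{-1}\pdv{\shat{\mK}}{\evtheta} - \shat{\mP}^{-1}\pdv{\shat{\mP}}{\evtheta}\big)\vz_i$ and let $\widetilde{\trestimate}$ denote its computable version $\trSCG(\mDelta_{\mathrm{inv}\partial})$, in which $\shat{\mK}^{-1}\pdv{\shat{\mK}}{\evtheta}\vz_i$ is replaced by the CG iterate $\vx_i^{(\idxCG)}$. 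By the triangle inequality,
\begin{equation*}
	\Big\lvert \trvarreduxinvderiv - \tr\big(\shat{\mK}^{-1}\tfrac{\partial\shat{\mK}}{\partial\evtheta}\big)\Big\rvert
	\leq
	\underbrace{\big\lvert \widetilde{\trestimate} - \trSTE(\mDelta_{\mathrm{inv}\partial}) \big\rvert}_{\text{CG truncation}}
	+ \underbrace{\big\lvert \trestimate_{*}^{\mathrm{inv}\partial} - \tr\big(\shat{\mK}^{-1}\tfrac{\partial\shat{\mK}}{\partial\evtheta}\big)\big\rvert}_{\text{STE error}},
\end{equation*}
where $\trestimate_{*}^{\mathrm{inv}\partial} = \tr(\shat{\mP}^{-1}\pdv{\shat{\mP}}{\evtheta}) + \trSTE(\mDelta_{\mathrm{inv}\partial})$ is the idealized estimator. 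The second term is exactly the object bounded by \Cref{thm:variance-reduced-trace-estimate} applied with $f(\shat{\mK}) = \shat{\mK}^{-1}\pdv{\shat{\mK}}{\evtheta}$, yielding the claimed $\varepsilon_{\idxtxtSTE}\norm{\shat{\mK}^{-1}\pdv{\shat{\mK}}{\evtheta}}_F$ bound on the event of probability $1-\delta$.

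The bulk of the work is the CG truncation term. For each $i$, the CG error after $\idxCG$ preconditioned iterations satisfies the standard bound $\norm{\vx_i^{(\idxCG)} - \shat{\mK}^{-1}\vb_i}_{\shat{\mK}} \leq 2\big(\tfrac{\sqrt{\kappa}-1}{\sqrt{\kappa}+1}\big)^{\idxCG}\norm{\vx_i^{(0)} - \shat{\mK}^{-1}\vb_i}_{\shat{\mK}}$ with $\vb_i = \pdv{\shat{\mK}}{\evtheta}\vz_i$ (initialization at $\vzero$ or better ensures the leading constant and the initial-error factor are controlled). Converting from the $\shat{\mK}$-norm to the Euclidean norm costs a factor $\sqrt{\kappa(\shat{\mK})}$, and bounding $\norm{\shat{\mK}^{-1}\vb_i}_2 \le \norm{\shat{\mK}^{-1}\pdv{\shat{\mK}}{\evtheta}}_2\norm{\vz_i}_2 = \norm{\shat{\mK}^{-1}\pdv{\shat{\mK}}{\evtheta}}_2$ (using $\norm{\vz_i}_2 = 1$) gives a per-vector residual error of order $\sqrt{\kappa(\shat{\mK})}\,\rho^{\idxCG}\norm{\shat{\mK}^{-1}\pdv{\shat{\mK}}{\evtheta}}_2$ with $\rho = \tfrac{\sqrt{\kappa}-1}{\sqrt{\kappa}+1}$. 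Then
\begin{equation*}
	\big\lvert \widetilde{\trestimate} - \trSTE(\mDelta_{\mathrm{inv}\partial})\big\rvert
	= \Big\lvert \tfrac{n}{\idxrvs}\textstyle\sum_i \vz_i^\top\big(\vx_i^{(\idxCG)} - \shat{\mK}^{-1}\vb_i\big)\Big\rvert
	\le \tfrac{n}{\idxrvs}\textstyle\sum_i \norm{\vx_i^{(\idxCG)} - \shat{\mK}^{-1}\vb_i}_2
	\le 2n\sqrt{\kappa(\shat{\mK})}\,\rho^{\idxCG}\norm{\shat{\mK}^{-1}\tfrac{\partial\shat{\mK}}{\partial\evtheta}}_2,
\end{equation*}
using Cauchy–Schwarz and $\norm{\vz_i}_2=1$. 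Finally, bound the spectral norm by the Frobenius norm, $\norm{\shat{\mK}^{-1}\pdv{\shat{\mK}}{\evtheta}}_2 \le \norm{\shat{\mK}^{-1}\pdv{\shat{\mK}}{\evtheta}}_F$, to arrive at $\varepsilon_{\idxtxtCGSTE} = K_2\rho^{\idxCG}$ with $K_2 = 2\sqrt{\kappa(\shat{\mK})}\,n$ as stated. Adding the two contributions and using that the STE event has probability $1-\delta$ while the CG bound is deterministic completes the argument.

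The main obstacle I anticipate is handling the $\vz_i^\top(\cdot)\vz_i$ form cleanly: one must be careful that $\mDelta_{\mathrm{inv}\partial}$ is not symmetric, so \Cref{thm:variance-reduced-trace-estimate} is being invoked on a nonsymmetric argument, and the quadratic-form bound $\lvert\vz_i^\top \mM\vz_i\rvert\le\norm{\mM\vz_i}_2\norm{\vz_i}_2$ is the right tool rather than any symmetric eigenvalue estimate. One should also double-check the claim about initialization: starting CG at $\vzero$ makes the initial error $\norm{\shat{\mK}^{-1}\vb_i}_{\shat{\mK}}$, which feeds into the $\sqrt{\kappa(\shat{\mK})}$-to-Euclidean conversion; ``or better'' only decreases this, so the stated $K_2$ remains a valid upper bound. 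The remaining steps — the CG convergence bound, the norm conversions, Cauchy–Schwarz over the $\idxrvs$ samples — are routine.
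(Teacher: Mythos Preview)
Your proposal is correct and mirrors the paper's proof essentially step for step: the same triangle-inequality split into an STE term handled by \Cref{thm:variance-reduced-trace-estimate} and a CG-truncation term, followed by Cauchy--Schwarz with $\norm{\vz_i}_2=1$, the Euclidean CG bound from \Cref{thm:convergence_rate_cg} (which already contains the $\sqrt{\kappa(\shat{\mK})}$ factor), the ``$\vzero$ or better'' initialization to bound $\norm{\vx_i^{(0)}-\shat{\mK}^{-1}\vb_i}_2$ by $\norm{\shat{\mK}^{-1}\vb_i}_2$, and finally $\norm{\shat{\mK}^{-1}\pdv{\shat{\mK}}{\evtheta}\vz_i}_2\le\norm{\shat{\mK}^{-1}\pdv{\shat{\mK}}{\evtheta}}_F$. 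Your remark that $\mDelta_{\mathrm{inv}\partial}$ need not be symmetric is a useful sanity check but does not affect the argument, since both \Cref{thm:variance-reduced-trace-estimate} and the Cauchy--Schwarz step apply to arbitrary matrices.
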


\begin{proof}
	See \Cref{suppsec:derivative-log-determinant}.
\end{proof}

\begin{restatable}{corollary}{proberrorboundlogdetbackward}
	\label{cor:error-bound-logdet-backward}
	Assume the conditions of \Cref{thm:convergence-logdet-backward} hold. If the number of random vectors \(\idxrvs\)
	satisfies \eqref{eqn:varreduced-ste-error} with \(\varepsilon_{\idxtxtSTE} = \frac{\varepsilon}{2}\), and we run
	\begin{equation}
		\label{eqn:num-cg-steps-logdet-backward}
		\textstyle \idxCG \geq \frac{1}{2}\sqrt{\kappa}\log(2K_2 \varepsilon^{-1})
	\end{equation}
	iterations of CG, then
	\begin{equation*}
		\boxed{
			\textstyle
			\Pr \left(\big\lvert\trvarreduxinvderiv - \mathrm{tr}\big(\shat{\mK}^{-1} \pdv{\shat{\mK}}{\evtheta}\big)\big\rvert \leq	\varepsilon \norm{\shat{\mK}^{-1}\pdv{\shat{\mK}}{\evtheta}}_F\right) \geq
			1-\delta.
		}
	\end{equation*}

\end{restatable}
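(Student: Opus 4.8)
The plan is to mirror the argument used for \Cref{cor:error-bound-logdet}: invoke the two-term bound of \Cref{thm:convergence-logdet-backward}, drive each of the two error contributions below \(\varepsilon/2\), and add them. Concretely, \Cref{thm:convergence-logdet-backward} already gives, with probability \(1-\delta\),
\begin{equation*}
	\textstyle \big\lvert \trvarreduxinvderiv - \tr\big(\shat{\mK}^{-1}\pdv{\shat{\mK}}{\evtheta}\big)\big\rvert \leq (\varepsilon_{\idxtxtCGSTE} + \varepsilon_{\idxtxtSTE})\,\norm{\mK^{-1}\pdv{\mK}{\evtheta}}_F,
\end{equation*}
so the statement reduces to showing that the hypotheses force \(\varepsilon_{\idxtxtSTE}\le\varepsilon/2\) and \(\varepsilon_{\idxtxtCGSTE}\le\varepsilon/2\). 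The first is immediate: by assumption \(\idxrvs\) is chosen so that \eqref{eqn:varreduced-ste-error} evaluates to \(\varepsilon_{\idxtxtSTE}=\varepsilon/2\) (and, inherited from \Cref{thm:variance-reduced-trace-estimate}, \(\idxrvs\ge c_{\vz}\log(\delta^{-1})\), so the probabilistic guarantee is in force).

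The only real work is the second inequality, i.e.\ translating the stated Lanczos/CG iteration count into \(\varepsilon_{\idxtxtCGSTE}\le\varepsilon/2\). From \Cref{thm:convergence-logdet-backward}, \(\varepsilon_{\idxtxtCGSTE}(\kappa,\idxCG)\le K_2\,\rho^{\idxCG}\) with contraction factor \(\rho=\tfrac{\sqrt{\kappa}-1}{\sqrt{\kappa}+1}\in(0,1)\). Requiring \(K_2\rho^{\idxCG}\le\varepsilon/2\) is equivalent to \(\idxCG\ge \log(2K_2\varepsilon^{-1})/\log(\rho^{-1})\). I would then lower-bound the denominator using the elementary identity \(\log\tfrac{1+t}{1-t}=2\operatorname{artanh}(t)\ge 2t\) for \(t\in(0,1)\): writing \(t=1/\sqrt{\kappa}\) gives \(\rho^{-1}=\tfrac{\sqrt{\kappa}+1}{\sqrt{\kappa}-1}=\tfrac{1+t}{1-t}\), hence \(\log(\rho^{-1})\ge 2/\sqrt{\kappa}\). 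Therefore
\begin{equation*}
	\frac{\log(2K_2\varepsilon^{-1})}{\log(\rho^{-1})} \;\le\; \tfrac{1}{2}\sqrt{\kappa}\,\log(2K_2\varepsilon^{-1}),
\end{equation*}
which is exactly the threshold \eqref{eqn:num-cg-steps-logdet-backward}, so under that iteration count \(\varepsilon_{\idxtxtCGSTE}\le\varepsilon/2\).

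Finally I would combine: the CG bound is deterministic once \(\idxCG\) is fixed, so no additional union bound is needed beyond the event of probability \(1-\delta\) from \Cref{thm:convergence-logdet-backward}; on that event the total error is at most \((\varepsilon/2+\varepsilon/2)\norm{\shat{\mK}^{-1}\pdv{\shat{\mK}}{\evtheta}}_F=\varepsilon\norm{\shat{\mK}^{-1}\pdv{\shat{\mK}}{\evtheta}}_F\), which is the boxed claim. The expected sticking point is purely bookkeeping: getting a clean closed-form iteration threshold out of the geometric CG rate, which hinges on the \(\log\tfrac{1+t}{1-t}\ge 2t\) estimate, and keeping straight that \(K_2=2\sqrt{\kappa(\shat{\mK})}\,n\) is expressed via the condition number of the \emph{unpreconditioned} \(\shat{\mK}\) while the contraction factor \(\rho\) uses the \emph{preconditioned} \(\kappa\) — this distinction only affects the constant inside the logarithm and not the structure of the argument.
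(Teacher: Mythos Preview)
Your proposal is correct and follows essentially the same route as the paper: invoke the two-term bound from \Cref{thm:convergence-logdet-backward}, take \(\varepsilon_{\idxtxtSTE}=\varepsilon/2\) by the assumption on \(\idxrvs\), and force \(\varepsilon_{\idxtxtCGSTE}\le\varepsilon/2\) via the geometric CG rate together with the elementary estimate \(\tfrac{\sqrt{\kappa}-1}{\sqrt{\kappa}+1}\le\exp(-2/\sqrt{\kappa})\). The paper packages that last inequality as a standalone lemma (\Cref{lem:upper-bound-conv-frac}) rather than via the \(\operatorname{artanh}\) formulation, but it is the same inequality and the argument is otherwise identical.
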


\begin{proof}
	See \Cref{suppsec:derivative-log-determinant}.
\end{proof}

\section{Efficient GP Hyperparameter Optimization}
\label{sec:precond-gp-hyperparam-opt}

Having established an efficient way to compute the forward and backward pass for the
\(\log\)-determinant,
we can use these results to accelerate GP hyperparameter optimization by fully exploiting preconditioning not just for the linear solves, but \emph{also} for the \(\log\)-determinant and its derivative.

\subsection{Log-Marginal Likelihood}
We obtain a bound on the \(\log\)-marginal likelihood by combining
\Cref{thm:convergence-logdet} with standard CG convergence analysis.

\begin{restatable}[Error Bound for the \(\log\)-Marginal Likelihood]{theorem}{convergencelogmarginallikelihood}
	\label{thm:convergence-log-marginal-likelihood}
	Assume the conditions of \Cref{thm:convergence-logdet} hold and
	we solve \(\shat{\mK} \vu = \vy\) via preconditioned CG initialized at \(\vu_0\) and
	terminated after
	\(\idxCG\) iterations.
	Then with probability \(1-\delta\), the error in the estimate \(\eta = -\frac{1}{2}(\vy^\top \vu_{\idxCG} + \trvarreduxlogdet + n \log(2\pi))\)
	of the \(\log\)-marginal likelihood \(\logmarglik\) satisfies
	\begin{equation*}
		\textstyle \abs{\eta - \logmarglik } \leq \varepsilon_{\idxtxtCG} +
		\frac{1}{2}(\varepsilon_{\idxtxtLanczosSTE} +
		\varepsilon_{\idxtxtSTE}) \norm{\log(\shat{\mK})}_F,
	\end{equation*}
	where the individual errors are bounded by
	\begin{align}
		\varepsilon_{\idxtxtCG}(\kappa, \idxCG)           & \leq \textstyle K_3\left(\frac{\sqrt{\kappa} - 1}{\sqrt{\kappa}+1} \right)^{\idxCG}                 \\
		\varepsilon_{\idxtxtLanczosSTE}(\kappa, \idxLanczos) & \leq \textstyle K_1\left(\frac{\sqrt{2\kappa + 1} - 1}{\sqrt{2\kappa +1}+1} \right)^{2 \idxLanczos} \\
		\varepsilon_{\idxtxtSTE}(\delta, \idxrvs)         & \leq \textstyle C_1\sqrt{\log(\delta^{-1})}\idxrvs^{-\frac{1}{2}} g(\idxrvs)
	\end{align}
	for \(K_3 = \sqrt{\kappa(\shat{\mK})}
	\norm{\vy}_2 \norm{\vu_0 - \vu}_2\).
\end{restatable}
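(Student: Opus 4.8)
The plan is to decompose the total error $\abs{\eta - \logmarglik}$ into three contributions, corresponding to the three terms in $\logmarglik$, and bound each separately. Writing out $\eta - \logmarglik = -\tfrac12\big[(\vy^\top \vu_{\idxCG} - \vy^\top \shat{\mK}^{-1}\vy) + (\trvarreduxlogdet - \log\det(\shat{\mK}))\big]$, I would apply the triangle inequality to split off the linear-solve error $\varepsilon_{\idxtxtCG} := \tfrac12\abs{\vy^\top \vu_{\idxCG} - \vy^\top \shat{\mK}^{-1}\vy}$ from the $\log$-determinant error, which is already controlled by \Cref{thm:convergence-logdet}: under its hypotheses, with probability $1-\delta$, $\abs{\trvarreduxlogdet - \log\det(\shat{\mK})} \leq (\varepsilon_{\idxtxtLanczosSTE} + \varepsilon_{\idxtxtSTE})\norm{\log(\shat{\mK})}_F$ with the stated $\varepsilon_{\idxtxtLanczosSTE}$ and $\varepsilon_{\idxtxtSTE}$. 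The factor $\tfrac12$ then produces the $\tfrac12(\varepsilon_{\idxtxtLanczosSTE} + \varepsilon_{\idxtxtSTE})\norm{\log(\shat{\mK})}_F$ term in the claim, and since this bound already holds with probability $1-\delta$, no additional failure probability is incurred by the deterministic CG estimate.

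The substantive step is bounding $\varepsilon_{\idxtxtCG}$. Let $\vu = \shat{\mK}^{-1}\vy$ be the exact solution and $\ve_{\idxCG} = \vu_{\idxCG} - \vu$ the CG error after $\idxCG$ iterations of preconditioned CG. I would write $\vy^\top \vu_{\idxCG} - \vy^\top \vu = \vy^\top \ve_{\idxCG}$ and then use $\vy = \shat{\mK}\vu$ to get $\vy^\top \ve_{\idxCG} = \vu^\top \shat{\mK}\, \ve_{\idxCG} \leq \norm{\vu}_{\shat{\mK}}\norm{\ve_{\idxCG}}_{\shat{\mK}}$ by Cauchy–Schwarz in the $\shat{\mK}$-inner product. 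The standard preconditioned CG convergence bound gives $\norm{\ve_{\idxCG}}_{\shat{\mK}} \leq 2\big(\tfrac{\sqrt{\kappa}-1}{\sqrt{\kappa}+1}\big)^{\idxCG}\norm{\ve_0}_{\shat{\mK}}$, where $\kappa$ is the condition number of the preconditioned matrix and $\ve_0 = \vu_0 - \vu$ is the initial error. To match the stated constant $K_3 = \sqrt{\kappa(\shat{\mK})}\norm{\vy}_2\norm{\vu_0 - \vu}_2$, I would convert the $\shat{\mK}$-norms to Euclidean norms: $\norm{\vu}_{\shat{\mK}} = \sqrt{\vu^\top\shat{\mK}\vu} = \sqrt{\vy^\top\shat{\mK}^{-1}\vy} \leq \norm{\vy}_2/\sqrt{\lambda_{\min}(\shat{\mK})}$ and $\norm{\ve_0}_{\shat{\mK}} \leq \sqrt{\lambda_{\max}(\shat{\mK})}\,\norm{\vu_0 - \vu}_2$; multiplying yields a prefactor $\sqrt{\kappa(\shat{\mK})}\,\norm{\vy}_2\norm{\vu_0-\vu}_2$. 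Together with the factor $\tfrac12$ in front and the $2$ from the CG bound (which cancel), this gives $\varepsilon_{\idxtxtCG} \leq K_3\big(\tfrac{\sqrt{\kappa}-1}{\sqrt{\kappa}+1}\big)^{\idxCG}$, exactly as claimed.

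The main obstacle — or at least the one point requiring care — is tracking which condition number appears where: the CG contraction rate involves $\kappa = \kappa(\shat{\mP}^{-1/2}\shat{\mK}\shat{\mP}^{-\top/2})$ (the \emph{preconditioned} condition number, small by \Cref{lem:condition-number-preconditioner-quality}), whereas the constant $K_3$ involves $\kappa(\shat{\mK})$ (the \emph{unpreconditioned} one), arising purely from the norm-conversion step above and not from the convergence rate. One must also be careful that the $\shat{\mK}$-norm convergence bound for preconditioned CG is the correct object — preconditioned CG on $\shat{\mP}^{-1}\shat{\mK}\vu = \shat{\mP}^{-1}\vy$ still minimizes the $\shat{\mK}$-norm of the error over the relevant Krylov subspace, so the contraction factor is $(\sqrt{\kappa}-1)/(\sqrt{\kappa}+1)$ with $\kappa$ the preconditioned condition number, as used. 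Finally, I would note the probability bookkeeping: the only randomness is in $\trvarreduxlogdet$, so the event of \Cref{thm:convergence-logdet} holding is exactly the event on which the full bound holds, giving probability $1-\delta$ with no union bound needed. Assembling the three pieces via the triangle inequality completes the proof.
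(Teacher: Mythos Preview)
Your proposal is correct and follows essentially the same approach as the paper: decompose via the triangle inequality into the CG error and the $\log$-determinant error, invoke \Cref{thm:convergence-logdet} for the latter, and bound the former via the preconditioned CG convergence rate. The only cosmetic difference is that you apply Cauchy--Schwarz in the $\shat{\mK}$-inner product and then convert to Euclidean norms, whereas the paper applies Cauchy--Schwarz directly in the Euclidean norm and invokes the Euclidean form of the CG bound from \Cref{thm:convergence_rate_cg}; both routes produce the identical constant $K_3$.
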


\begin{proof}
	See \Cref{suppsec:log-marginal-likelihood}.
\end{proof}

\subsection{Derivative of the Log-Marginal Likelihood}
Similarly, we can leverage \Cref{thm:convergence-logdet-backward} for the
derivative.

\begin{restatable}[Error Bound for the Derivative]{theorem}{convergencederivative}
	\label{thm:convergence-derivative}
	Assume the conditions of \Cref{thm:convergence-logdet-backward} hold and
	we solve \(\shat{\mK} \vu = \vy\) via preconditioned CG initialized at \(\vzero\) or
	better and terminated
	after \(\idxCG\) iterations.
	Then with probability \(1-\delta\), the error in the estimate \(\phi = \frac{1}{2}(\vu_{\idxCG}^\top \pdv{\shat{\mK}}{\evtheta} \vu_{\idxCG} - \trvarreduxinvderiv)\)
	of the derivative of the \(\log\)-marginal likelihood \(\pdv{}{\evtheta}\logmarglik\)
	satisfies
	\begin{equation*}
		\textstyle \abs{\phi - \pdv{}{\evtheta}\logmarglik } \leq \varepsilon_{\idxtxtCG} +
		\frac{1}{2}(\varepsilon_{\idxtxtCGSTE} +
		\varepsilon_{\idxtxtSTE})  \norm{\shat{\mK}^{-1}\pdv{\shat{\mK}}{\evtheta}}_F,
	\end{equation*}
	where the individual errors are bounded by
	\begin{align}
		\varepsilon_{\idxtxtCG}(\kappa, \idxCG)        & \leq \textstyle K_4\left(\frac{\sqrt{\kappa} - 1}{\sqrt{\kappa}+1} \right)^{\idxCG}      \\
		\varepsilon_{\idxtxtCGSTE}(\kappa, \idxLanczos) & \leq \textstyle K_2\left(\frac{\sqrt{\kappa} - 1}{\sqrt{\kappa}+1} \right)^{\idxLanczos} \\
		\varepsilon_{\idxtxtSTE}(\delta, \idxrvs)      & \leq \textstyle C_1\sqrt{\log(\delta^{-1})}\idxrvs^{-\frac{1}{2}} g(\idxrvs)
	\end{align}
	for \(K_4 = 6\kappa(\shat{\mK}) \max(\norm{\vu}_2, \norm{\vu}_2^3) \norm{\pdv{\shat{\mK}}{\evtheta}}_2\).
\end{restatable}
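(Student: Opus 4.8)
The plan is to split the error of the estimate $\phi$ into a \emph{linear-solve} part---coming from replacing $\shat{\mK}^{-1}\vy$ by the preconditioned-CG iterate $\vu_{\idxCG}$ in the quadratic term---and a \emph{trace-estimation} part---coming from $\trvarreduxinvderiv$. Write $\vu = \shat{\mK}^{-1}\vy$ and $\mA = \pdv{\shat{\mK}}{\evtheta}$, so that $\vy^\top\shat{\mK}^{-1}\mA\shat{\mK}^{-1}\vy = \vu^\top\mA\vu$ and the exact derivative \eqref{eqn:derivative_likelihood} reads $\pdv{}{\evtheta}\logmarglik = \tfrac12\vu^\top\mA\vu - \tfrac12\tr(\shat{\mK}^{-1}\mA)$. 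A single triangle inequality then gives
\begin{equation*}
	\textstyle \big\lvert\phi - \pdv{}{\evtheta}\logmarglik\big\rvert \leq \tfrac12\big\lvert\vu_{\idxCG}^\top\mA\vu_{\idxCG} - \vu^\top\mA\vu\big\rvert + \tfrac12\big\lvert\trvarreduxinvderiv - \tr(\shat{\mK}^{-1}\mA)\big\rvert .
\end{equation*}
The second term is bounded, with probability $1-\delta$, by $\tfrac12(\varepsilon_{\idxtxtCGSTE}+\varepsilon_{\idxtxtSTE})\norm{\shat{\mK}^{-1}\mA}_F$ through a direct appeal to \Cref{thm:convergence-logdet-backward} (whose stochastic solves already run $\idxCG$ steps of preconditioned CG); in particular the entire probabilistic content is inherited from there, the first term being deterministic. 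It then remains to bound that first term by $\varepsilon_{\idxtxtCG}$.

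For the linear-solve term I would first use the bilinear perturbation identity $\vu_{\idxCG}^\top\mA\vu_{\idxCG} - \vu^\top\mA\vu = \vu_{\idxCG}^\top\mA(\vu_{\idxCG}-\vu) + (\vu_{\idxCG}-\vu)^\top\mA\vu$, so that by submultiplicativity
\begin{equation*}
	\textstyle \big\lvert\vu_{\idxCG}^\top\mA\vu_{\idxCG} - \vu^\top\mA\vu\big\rvert \leq \norm{\mA}_2\,\norm{\vu_{\idxCG}-\vu}_2\,\big(\norm{\vu_{\idxCG}}_2 + \norm{\vu}_2\big) .
\end{equation*}
Next I would invoke the textbook preconditioned-CG bound $\norm{\vu_{\idxCG}-\vu}_{\shat{\mK}} \leq 2\big(\tfrac{\sqrt{\kappa}-1}{\sqrt{\kappa}+1}\big)^{\idxCG}\norm{\vu_0-\vu}_{\shat{\mK}}$, with $\kappa$ the \emph{preconditioned} condition number, and convert norms via $\norm{\cdot}_2 \leq \lambda_{\min}(\shat{\mK})^{-1/2}\norm{\cdot}_{\shat{\mK}}$ and $\norm{\cdot}_{\shat{\mK}} \leq \lambda_{\max}(\shat{\mK})^{1/2}\norm{\cdot}_2$ to obtain $\norm{\vu_{\idxCG}-\vu}_2 \leq 2\sqrt{\kappa(\shat{\mK})}\big(\tfrac{\sqrt{\kappa}-1}{\sqrt{\kappa}+1}\big)^{\idxCG}\norm{\vu_0-\vu}_2$, where initialization at $\vzero$ or better makes $\norm{\vu_0-\vu}_2 \leq \norm{\vu}_2$. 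Since the contraction ratio is $<1$, the same bound without the exponent gives $\norm{\vu_{\idxCG}}_2 + \norm{\vu}_2 \leq (2 + 2\sqrt{\kappa(\shat{\mK})})\norm{\vu}_2 \leq 4\sqrt{\kappa(\shat{\mK})}\norm{\vu}_2$. Multiplying the three factors, halving, and using $\norm{\vu}_2^2 \leq \max(\norm{\vu}_2,\norm{\vu}_2^3)$ yields $\varepsilon_{\idxtxtCG} \leq K_4\big(\tfrac{\sqrt{\kappa}-1}{\sqrt{\kappa}+1}\big)^{\idxCG}$ with $K_4 = 6\kappa(\shat{\mK})\max(\norm{\vu}_2,\norm{\vu}_2^3)\norm{\pdv{\shat{\mK}}{\evtheta}}_2$, the constant $6$ leaving slack over the $4$ the calculation actually produces. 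Adding the two contributions gives the claimed bound.

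The calculation is otherwise routine arithmetic of constants; the two points requiring care are (i) keeping the preconditioned condition number $\kappa$, which governs the geometric CG rate, separate from $\kappa(\shat{\mK})$, which enters only through the $\norm{\cdot}_2\!\leftrightarrow\!\norm{\cdot}_{\shat{\mK}}$ conversions and the prefactor $K_4$, and (ii) observing that the $\max(\norm{\vu}_2,\norm{\vu}_2^3)$ in $K_4$ is present only to absorb $\norm{\vu}_2^2$ uniformly over the regimes $\norm{\vu}_2 \leq 1$ and $\norm{\vu}_2 > 1$. No genuinely new estimate is needed beyond \Cref{thm:variance-reduced-trace-estimate}, \Cref{thm:convergence-logdet-backward}, and classical CG convergence, so the main (mild) obstacle is simply the bookkeeping that turns the raw bound into the stated constants.
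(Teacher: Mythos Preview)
Your proposal is correct and follows essentially the same approach as the paper: the same triangle-inequality split into a quadratic-term error and a trace-estimation error, the same appeal to \Cref{thm:convergence-logdet-backward} for the latter, and the same preconditioned-CG bound (with the $\norm{\cdot}_{\shat{\mK}}\!\to\!\norm{\cdot}_2$ conversion) for the former. Your bilinear perturbation identity is a slightly cleaner variant of the paper's seminorm expansion with the auxiliary $g(t)=\max(t,t^2)$, but the structure and constants line up.
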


\begin{proof}
	See \Cref{suppsec:derivative-log-marginal-likelihood}.
\end{proof}

\begin{table*}
	\caption{\emph{Error rates for combinations of kernels and preconditioners.}
		The rate \(g(\idxrvs)\) measures how fast a sequence of preconditioners
		\(\{\shat{\mP}_{\idxrvs}\}_\idxrvs\) approaches the kernel matrix \(\shat{\mK}\)
		constructed from data \(\mX \in \R^{n \times d}\). Thus it determines both the convergence
		speed of Krylov methods and the preconditioned stochastic trace estimator. Or, equivalently, the faster \(g(\idxrvs) \to 0\) the fewer CG iterations \(\idxCG\) and random vectors \(\idxrvs\)
		are needed to approximate the \(\log\)-marginal likelihood and its gradient.\label{tab:residual-decay-kernels-preconditioners}}
	\centering
	\small
	\begin{tabular}{lclcccc}
    \toprule
    Kernel                  & \(d\)  & Preconditioner      & \(g(\idxrvs)\)                                                            & Condition                                                  & Proof                                         \\
    \midrule
    any & \(\N\) & none & \(1\) & & \Cref{thm:variance-reduced-trace-estimate}\\
    \midrule
    any                     & \(\N\) & truncated SVD       & \(\idxrvs^{-\frac{1}{2}}\)                                                &                                                            & \Cref{prop:truncated-svd-approx-quality}      \\
    any                     & \(\N\) & randomized SVD      & \(\idxrvs^{-\frac{1}{2}} + \bigO(\idxrvs^{\frac{1}{4}}s^{-\frac{1}{4}})\) & w/ high probability for \(s\) samples                            & \Cref{prop:randomized-svd-approx-quality}     \\
    any                     & \(\N\) & randomized Nyström  & \(\idxrvs^{-\frac{1}{2}} + \bigO(\idxrvs^{\frac{1}{4}}s^{-\frac{1}{4}})\) & w/ high probability for \(s\) samples                            & \Cref{prop:nystroem-approx-quality}           \\
    any                     & \(\N\) & RFF                 & \(\idxrvs^{-\frac{1}{2}}\)                                                & w/ high probability                                              & \Cref{prop:rff-approx-quality}                \\
    \midrule
    RBF                     & \(1\)  & partial Cholesky & \(\exp(-c\idxrvs)\)                                                       & for some \(c>0\)                                           & \Cref{prop:cholesky-approx-quality}           \\
    RBF                     & \(\N\) & QFF                 & \({\exp}(-b \idxrvs^\frac{1}{d})\)                                        & for some \(b>0\) if \(\idxrvs^\frac{1}{d} > 2\gamma^{-2}\) & \Cref{prop:qff-approx-quality}                \\
    Mat\' ern(\(\nu\))      & \(\N\) & partial Cholesky & \(\idxrvs^{-(\frac{2\nu}{d}+1)}\)                                         & \(2\nu \in \N\) and  maximin ordering                      & \cite{Schaefer2021a}                           \\
    Mat\' ern(\(\nu\))      & \(1\)  & QFF                 & \(\idxrvs^{-(s(\nu) + 1)}\)                                               & where \(s(\nu) \in \N\)                                    & \Cref{prop:general-qff-approx-quality}        \\
    mod. Mat\' ern(\(\nu\)) & \(\N\) & QFF                 & \(\idxrvs^{-\frac{s(\nu)+1}{d}}\)                                         & where \(s(\nu) \in \N\)                                    & \Cref{prop:general-qff-approx-quality}        \\
    \midrule
    additive                & \(\N\) & any                 & \(d g(\idxrvs)\)                                                          & all summands have rate \(g(\idxrvs)\)                          & \Cref{lem:additive-kernel-approx-quality}     \\
    any                     & \(\N\) & any kernel approx.   & \(g(\idxrvs)\)                                                            & \(\exists\) uniform convergence bound                      & \Cref{lem:uniform-convergence-approx-quality} \\
    \bottomrule
\end{tabular}
	\vspace{-1em}
\end{table*}

\subsection{Preconditioner Choice}
Our theoretical convergence results fundamentally depend on how quickly the preconditioner approximates the kernel matrix, either directly via \(g(\idxrvs)\), or indirectly via the condition number improvement \eqref{eqn:condition-preconditioner-quality}. This leaves the question which preconditioners should be chosen in practice and what rates \(g(\idxrvs)\) they attain. In \Cref{tab:residual-decay-kernels-preconditioners}, we
give an extensive list of
kernel-preconditioner combinations with associated rates (see \Cref{suppsec:preconditioning} for proofs). This includes the commonly used RBF and Mat{\'e}rn(\(\nu\)) kernels for which the Cholesky \cite{Kershaw1978} and
QFF \cite{Mutny2018} preconditioners result in exponential and polynomial convergence rates, respectively. For STE in this context this is a substantial improvement over
the rate of Hutchinson's estimator \(\bigO(\idxrvs^{-\frac{1}{2}})\) \cite{Avron2011,RoostaKhorasani2015,Skorski2021} and \textsc{Hutch++} with \(\bigO(\idxrvs^{-1})\)
\cite{Meyer2021,Persson2021,Jiang2021}. Depending on the problem this can mean a difference of tens vs. thousands of random vectors. To the best of our knowledge, for the use of CG in GP inference, only the one-dimensional RBF kernel and partial Cholesky preconditioner have been previously analyzed theoretically \cite{Gardner2018}. In contrast, \Cref{tab:residual-decay-kernels-preconditioners} gives convergence rates for arbitrary \(d\)-dimensional kernels and multiple preconditioners. In fact, our results also apply to any
kernel approximation with a uniform convergence bound (such as RFF \cite{Rahimi2007}). All the while for many, e.g. diagonal-plus-low-rank preconditioners, the amount of precomputation needed \emph{amortizes with more data}, i.e. the cost of preconditioning becomes negligible the larger the dataset.

\subsection{Algorithms}
The above leads to \Cref{alg:log_marginal_likelihood,alg:derivative_log_marginal_likelihood} computing \(\logmarglik\) and
\(\pdv{}{\evtheta}\logmarglik\) for GP hyperparameter
optimization.\footnote{While presented sequentially for clarity, in practice one would presample all random vectors and
	run a single call of (parallelized) CG with multiple right hand sides, as in \citep{Gardner2018}.
}
Our algorithms are similar to those presented in prior work by
\citet{Cutajar2016,Ubaru2017,Gardner2018},
yet crucially they leverage preconditioning
for \textcolor{precondcgcolor}{faster CG convergence} \emph{and} \textcolor{varreductioncolor}{variance reduction} of the $\log$-determinant estimate and its derivative. In the following, \(\textsc{CG}(\shat{\mK}, \vy, \shat{\mP}, \idxCG)\) denotes a CG solve of
\(\shat{\mK}\vu =\vy\) with preconditioner
\(\shat{\mP}\) run for \(\idxCG\) iterations.
Here, we equivalently use CG instead of Lanczos, as suggested by
\citet{Gardner2018}.
\vspace{-1em}

\begin{algorithm}
	\caption{\(\log\)-Marginal Likelihood\label{alg:log_marginal_likelihood}}
	\small
	\begin{algorithmic}[1]
		\Require{ $\vy$ (labels), $\shat \mK$ (kernel matrix), $\shat \mP$ (preconditioner),
			$\idxrvs$ ($\#$ of random STE vectors), $\idxCG$ ($\#$ of CG iterations) }
		\Procedure{\textsc{LogMargLikelihood}}{$\vy, \shat{\mK}, \shat{\mP}, \idxrvs, \idxCG$}
		\State \(\vu \gets \textsc{CG}(\shat{\mK}, \vy, \textcolor{precondcgcolor}{\shat{\mP}}, \idxCG)\) \Comment{\(\approx \shat{\mK}^{-1} \vy\)}
		\textcolor{varreductioncolor}{\State \(\trprecondlogdet \gets \log \det(\shat{\mP})\)}
		\For{\(i = 1, \dots, \idxrvs\)} 
		\State \(\vz_i \gets \tilde{\vz}_i / \norm{\tilde{\vz}_i}_2\) for rand. vector
		\(\tilde{\vz}_i\)
		\State \(\textcolor{varreductioncolor}{\mT} \gets \textsc{CG}(\shat{\mK}, \vz_i, \textcolor{precondcgcolor}{\shat{\mP}}, \idxCG)\) \Comment{equiv. to \textsc{Lanczos}}
		\State \([\textcolor{varreductioncolor}{\mW}, \textcolor{varreductioncolor}{\vlambda}] \gets \textsc{EigenDecomp}(\mT)\)
		\Comment{\(\mT\) tridiagonal}
		\State \(\textcolor{varreductioncolor}{\omega_j} \gets (\ve_1^\top \textcolor{varreductioncolor}{\vw_j})^2\) for \(j = 0, \dots, \idxCG\)
		\Comment{quad. weights}
		\State \(\gamma_i \gets \sum_{j=0}^\idxCG \textcolor{varreductioncolor}{\omega_j} \log(\textcolor{varreductioncolor}{\evlambda_j})\)
		\Comment{\(\approx \vz_i^\top \mDelta_{\log} \vz_i\)}
		\EndFor
		\State \(\trvarreduxlogdet \gets \textcolor{varreductioncolor}{\trprecondlogdet} + \frac{n}{\idxrvs} \sum_{i=1}^\idxrvs \gamma_i\)
		\Comment{\(\approx \log\det(\shat{\mK})\)}
		\State \Return {$-\frac{1}{2}(\vy^\top \vu + \trvarreduxlogdet + n \log(2\pi))$} \Comment{\(\approx \logmarglik(\vtheta)\)}
		\EndProcedure
	\end{algorithmic}
\end{algorithm}
\begin{algorithm}
	\caption{Derivative of the \(\log\)-Marginal Likelihood\label{alg:derivative_log_marginal_likelihood}}
	\small
	\begin{algorithmic}[1]
		\Require{ $\vy$ (labels), $\shat \mK$ (kernel matrix), $\shat \mP$ (preconditioner),
			$\idxrvs$ ($\#$ of random STE vectors), $\idxCG$ ($\#$ of CG iterations),
			$\pdv{\shat{\mK}}{\evtheta}()$ / $\pdv{\shat{\mP}}{\evtheta}()$ (functions for computing kernel / preconditioner
			derivatives)}
		\Procedure{\textsc{Derivative}}{$\vy, \shat{\mK}, \pdv{\shat{\mK}}{\evtheta}(), \shat{\mP},
				\pdv{\shat{\mP}}{\evtheta}(), \idxrvs, \idxCG$}
		\State \(\vu \gets \textsc{CG}(\shat{\mK}, \vy, \textcolor{precondcgcolor}{\shat{\mP}}, \idxCG)\) \Comment{\(\approx \shat{\mK}^{-1} \vy\)}
		\State \textcolor{varreductioncolor}{\(\trprecondinvderiv \gets \tr(\shat{\mP}^{-1} \pdv{\shat{\mP}}{\evtheta})\)}
		\For{\(i = 1, \dots, \idxrvs\)} 
		\State \(\vz_i \gets \tilde{\vz}_i / \norm{\tilde{\vz}_i}_2\) for rand. vector
		\(\tilde{\vz}_i\)
		\State \(\vw_i \gets \textsc{CG}(\shat{\mK}, \pdv{\shat{\mK}}{\evtheta}\vz_i, \textcolor{precondcgcolor}{\shat{\mP}}, \idxCG)\) \Comment{\(\approx \shat{\mK}^{-1}\pdv{\shat{\mK}}{\evtheta}\vz_i\)}
		\State \textcolor{varreductioncolor}{\(\tilde{\vw}_i \gets \shat{\mP}^{-1}\pdv{\shat{\mP}}{\evtheta}\vz_i\)}
		\State \(\gamma_i \gets \vz_i^\top(\vw_i \textcolor{varreductioncolor}{- \tilde{\vw}_i})\) \Comment{\(\approx \vz_i^\top \mDelta_{\mathrm{inv}\partial} \vz_i\)}
		\EndFor
		\State \(\trvarreduxinvderiv \gets \textcolor{varreductioncolor}{\trprecondinvderiv} + \frac{n}{\idxrvs} \sum_{i=1}^\idxrvs \gamma_i\)
		\Comment{\(\approx \tr(\mK^{-1} \pdv{\mK}{\evtheta})\).}
		\State \Return {$\frac{1}{2}(\vu^\top \pdv{\mK}{\evtheta} \vu - \trvarreduxinvderiv)$} \Comment{\(\approx \pdv{}{\evtheta}\logmarglik(\vtheta)\)}
		\EndProcedure
	\end{algorithmic}
\end{algorithm}

\begin{figure*}[th!]
	\centering
	\includegraphics[width=\textwidth]{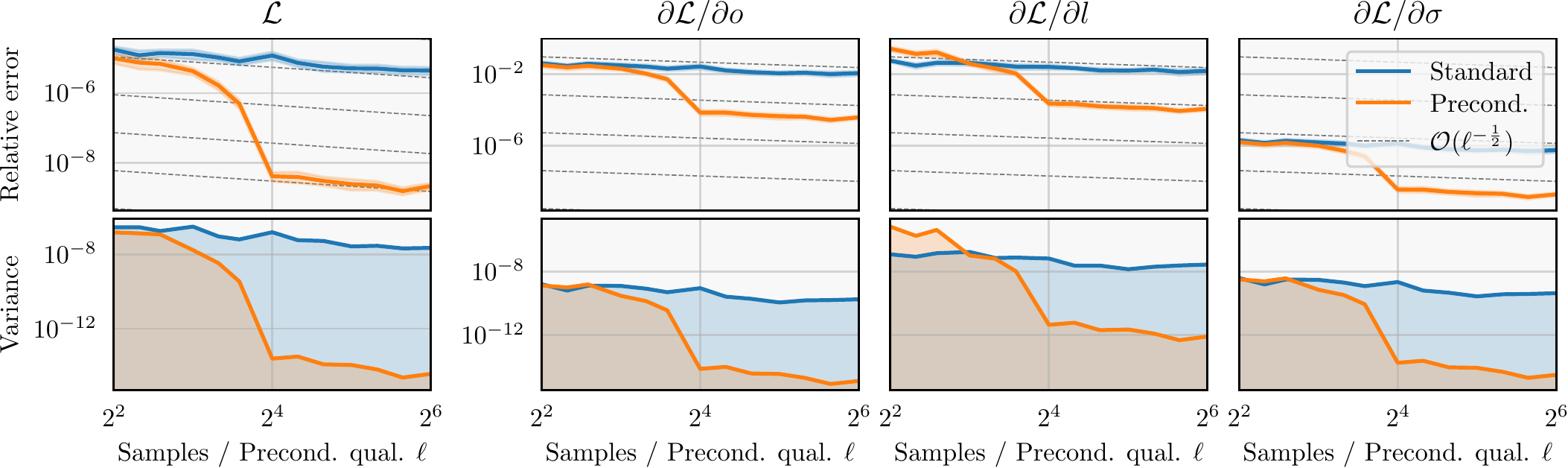}
	\caption{\emph{Bias and variance of the estimators for the \(\log\)-marginal likelihood \(\logmarglik\) and its
			derivatives.} The relative error and variance decrease faster with the number of random
		vectors \(\idxrvs\) when using a preconditioner \(\shat{\mP}_\idxrvs\). The decrease
		rate \(\bigO(\idxrvs^{-\frac{1}{2}}g(\idxrvs))\), determined by the preconditioner, significantly improves upon the standard Hutchinson's rate \(\bigO(\idxrvs^{-\frac{1}{2}})\).}
	\label{fig:synthetic-data}
\end{figure*}

\paragraph{Computational Complexity}
\Cref{alg:log_marginal_likelihood} has complexity \(\bigO(n^2\idxCG\idxrvs + P_{\log\det})\)
and \Cref{alg:derivative_log_marginal_likelihood} has complexity \(\bigO((n^2 \idxCG +
P_{\mathrm{solve}})\idxrvs +
P_{\tr \mathrm{inv}\partial})\), where
\(P_{(\cdot)}\)
denotes the cost of an operation with the preconditioner.\footnote{For diagonal-plus-low-rank preconditioners,
	such as the partial Cholesky, \(P_{\mathrm{solve}}\),
	\(P_{\log\det}\), and \(P_{\tr \mathrm{inv}\partial}\) are in \(\bigO(n\idxrvs^2)\) by the matrix
	inversion and
	determinant lemmas.}
Assuming \(\idxCG, \idxrvs \ll n\),
this is asymptotically faster than Cholesky-based inference with complexity
\(\bigO(n^3)\).
Due to the reduction to matrix-vector multiplication, if \(\vv \mapsto \shat{\mK}
\vv\) is more efficient than \(\bigO(n^2)\) (e.g. for structured or sparse matrices)
the complexity reduces further. Finally, the \textbf{for}-loops are embarrassingly
parallel, giving additional speedup in practice.

\subsection{Related Work}
Krylov methods have been used for GP inference since the work of \citet{Gibbs1997}.
While these methods were primarily relegated to structured GPs that afford fast matrix-vector
products \citep{Cunningham2008,Saatci2012,Wilson2015a},
they have seen growing use as a general purpose method, especially when coupled with specialized, parallel
hardware \citep{Murray2009,Anitescu2012,Gardner2018,Wang2019,Artemev2021}. Preconditioners can be used to accelerate and stabilize the necessary linear solves with the kernel matrix \cite{Faul2005,Gumerov2007,Stein2012,Chen2013,Cutajar2016}.
To compute the $\log$-determinant of the kernel matrix, some recent works propose variance-free (but biased) estimates \citep[e.g.][]{Artemev2021},
though many works compute this term by combining STE \cite{Hutchinson1989,Bekas2007,Avron2011,RoostaKhorasani2015} with SLQ
\cite{Golub2009,Ubaru2017,Dong2017,Gardner2018,Cortinovis2021}.
Our work builds on ideas for variance-reduced
stochastic trace esimation
\cite{Adams2018,Meyer2021,Persson2021,Jiang2021}, but, by leveraging preconditioning, requires significantly fewer random
vectors than existing approaches.
When applied to GP hyperparameter optimization,
we obtain stronger theoretical guarantees
for the forward pass than previously known \cite{Ubaru2017} and novel guarantees for the backward pass. Finally, our results on preconditioners for kernel
matrices (\Cref{tab:residual-decay-kernels-preconditioners}) give a rigorous foundation to their use for GPs as proposed by \citet{Cutajar2016} and others.

\section{Experiments}
\label{sec:experiments}

\begin{figure*}
	\centering
	\begin{subfigure}[t]{0.32\textwidth}
		\centering
		\includegraphics[width=\textwidth]{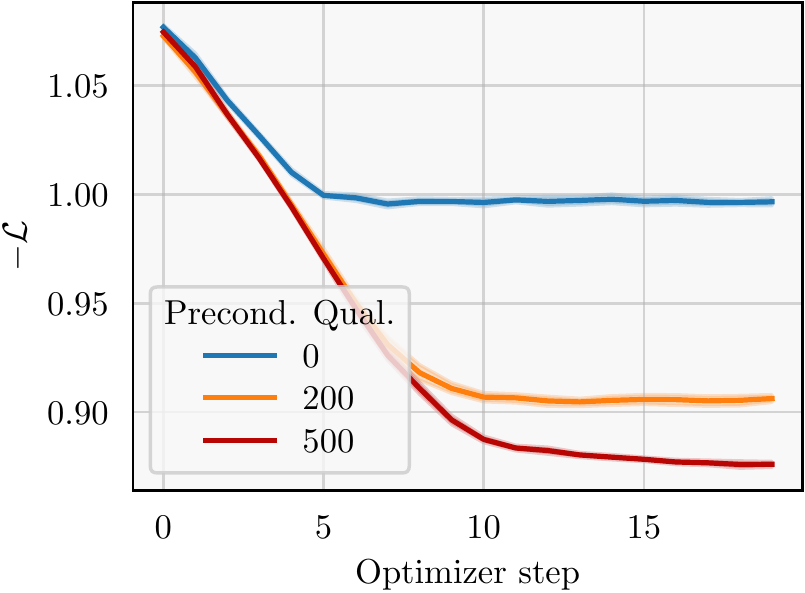}
		\caption{Training loss for ``Protein''.}
		\label{subfig:protein-training-loss}
	\end{subfigure}%
	\hspace{0.5em}
	\begin{subfigure}[t]{0.32\textwidth}
		\centering
		\includegraphics[width=\textwidth]{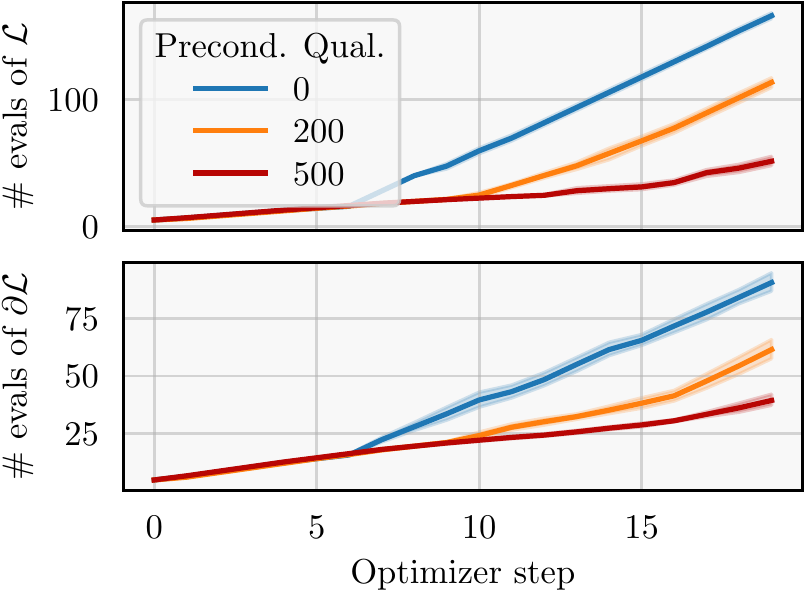}
		\caption{Line search computations for ``Protein''.}
		\label{subfig:protein-linesearch}
	\end{subfigure}%
	\hspace{0.5em}
	\begin{subfigure}[t]{0.32\textwidth}
		\centering
		\includegraphics[width=\textwidth]{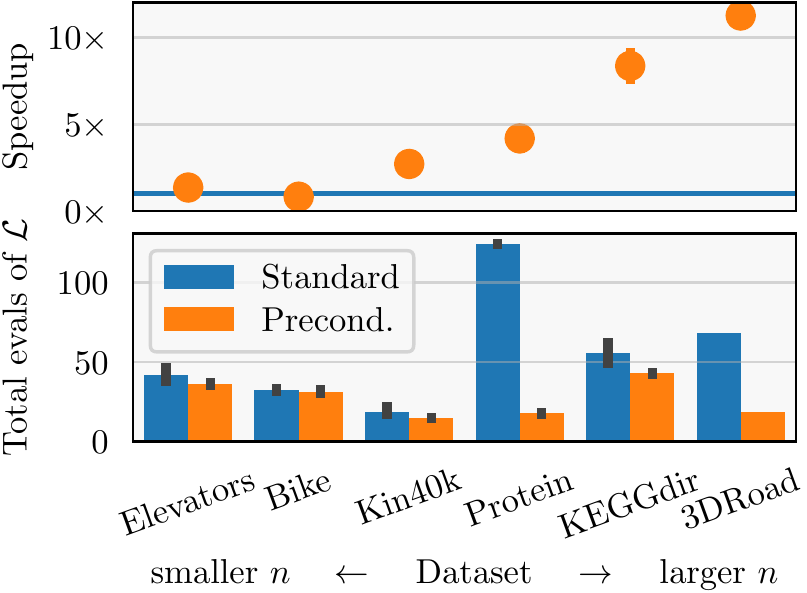}
		\caption{Speedup on UCI datasets.}
		\label{subfig:uci-speedup}
	\end{subfigure}%
	\caption{\emph{Preconditioning reduces noise and in turn accelerates hyperparameter optimization.} Variance reduction improves optimization via better search directions and
		fewer evaluations of \(\logmarglik\) and \(\pdv{}{\evtheta_i}\logmarglik\) for the line
		search. \subref{subfig:protein-training-loss} Training loss and \subref{subfig:protein-linesearch} model evaluations for
		line search decrease with preconditioner size, as shown for the ``Protein'' dataset. \subref{subfig:uci-speedup} The reduction in loss (and gradient)
		evaluations and of noise in the gradients results in an order of magnitude speedup on UCI datasets.}
	\label{fig:uci-results}
\end{figure*}

\begin{table*}[t]
	\caption{\emph{Hyperparameter optimization on UCI datasets.} GP regression using a Mat\'ern\((\frac{3}{2})\) kernel and
		partial Cholesky preconditioner of size \(500\) with \(\idxrvs=50\) random samples.
		Hyperparameters
		were optimized with L-BFGS for at most 20 steps using early stopping. 
		All results, but ``3DRoad'', are averaged over 10 runs. Differences \(\geq 1\) standard deviation in bold.}
	\label{tab:uci-results}
	\small
	\centering
	\renewrobustcmd{\bfseries}{\fontseries{b}\selectfont}
	\sisetup{detect-weight=true,detect-inline-weight=math}
	\begin{tabular}{cS[table-format=6]S[table-format=2]S[table-format=-1.4,round-mode=places,round-precision=4]S[table-format=-1.4,round-mode=places,round-precision=4]S[table-format=-1.4,round-mode=places,round-precision=4]S[table-format=-1.4,round-mode=places,round-precision=4]S[table-format=-1.4,round-mode=places,round-precision=4]S[table-format=-1.4,round-mode=places,round-precision=4]S[table-format=5,round-mode=places,round-precision=0]S[table-format=5,round-mode=places,round-precision=0]}
\toprule
{Dataset} &  {$n$} & {$d$} & \multicolumn{2}{c}{{$-\logmarglik_{\mathrm{train}} \downarrow$}} & \multicolumn{2}{c}{{$-\logmarglik_{\mathrm{test}} \downarrow$}} & \multicolumn{2}{c}{{RMSE $\downarrow$}} & \multicolumn{2}{c}{{Runtime (s)}} \\
       {} &     {} &    {} &                                  {Standard} & {Precond.} &                                 {Standard} & {Precond.} &          {Standard} & {Precond.} &    {Standard} &  {Precond.} \\
\midrule
Elevators &  12449 &    18 &                                    0.464722 &   \bfseries0.437725 &                                   0.402140 &   0.402184 &            0.348366 &   0.348241 &     53.000000 &   \bfseries39.181818 \\
     Bike &  13034 &    17 &                                   -0.997622 &  -0.998517 &                                  -0.993428 &  -0.987725 &            0.044620 &   0.045370 &     \bfseries30.636364 &   37.090909 \\
   Kin40k &  30000 &     8 &                                   -0.333929 &  \bfseries-0.433196 &                                  -0.314085 &  -0.313514 &            \bfseries0.092942 &   0.094906 &    186.545455 &   \bfseries44.636364 \\
  Protein &  34297 &     9 &                                    0.996320 &   \bfseries0.927287 &                                   0.886924 &   0.883540 &            0.572161 &   \bfseries0.557747 &    892.636364 &   \bfseries42.545455 \\
  KEGGdir &  36620 &    20 &                                   -0.950094 &  \bfseries-1.004278 &                                  -0.945906 &  -0.948952 &            0.086087 &   0.086368 &   1450.272727 &  \bfseries173.727273 \\
   3DRoad & 326155 &     3 &                                    0.773300 &   \bfseries0.128400 &                                   1.436000 &   \bfseries1.169000 &            0.298200 &   \bfseries0.126500 &  82200.000000 & \bfseries7306.000000 \\
\bottomrule
\end{tabular}

\end{table*}

We validate our theoretical findings empirically via GP hyperparameter optimization on synthetic and
benchmark datasets with and without preconditioning. We find that
\begin{enumerate}[noitemsep,topsep=0pt,label=(\alph*)]
	\item \emph{preconditioning reduces bias and variance} in the forward and backward pass, which results in
	\item \emph{less noisy search directions} and \emph{fewer \(\log\)-likelihood and gradient evaluations for the line search}.
\end{enumerate}
This allows the use of rapidly converging optimizers, and
\begin{enumerate}[noitemsep,topsep=0pt,label=(\alph*),start=3]
	\item \emph{accelerates training significantly.}.
\end{enumerate}

\paragraph{Experimental Setup}
We consider a one-dimensional synthetic dataset of \(n=10,\!000\) iid standard normal samples, as
well as a range of UCI datasets \cite{Dua2017} with
training set sizes ranging from \(n=12,\!449\) to \(326,\!155\) (see \Cref{tab:uci-results}).
All experiments were performed on single NVIDIA GPUs, a GeForce RTX 2080 and Titan RTX,
respectively. We perform GP regression using an RBF and
Mat{\'e}rn\((\frac{3}{2})\) kernel with output scale \(o\),
lengthscales \(l_j\) -- one per input dimension -- and noise \(\sigma^2\). Hyperparameters
were optimized with L-BFGS using an
Armijo-Wolfe line search and early stopping via a validation set. We use a partial Cholesky
preconditioner throughout. An implementation of our method is available as part of 
\textsc{GPyTorch} \cite{Gardner2018}.\footnote{\href{https://github.com/cornellius-gp/gpytorch}{\texttt{github.com/cornellius-gp/gpytorch}}}

\paragraph{Preconditioning reduces bias \& variance in \(\logmarglik\) and
	\(\pdv{}{\evtheta}\logmarglik\)}
\Cref{fig:synthetic-data} shows the relative error of the marginal \(\log\)-likelihood and its
derivatives on synthetic data. Already for \(\idxrvs \geq 16\) random samples
\emph{bias and variance are reduced by several orders of magnitude}.
We observe exponential decrease and then a return to the standard Hutchinson's rate of
\(\mathcal{O}(\idxrvs^{-\frac{1}{2}})\). After \(\idxrvs=16\) iterations the algorithm computing the
preconditioner
has reached a specified tolerance and terminates, invalidating the approximation quality assumption
\eqref{eqn:preconditioner-quality} for \(\idxrvs \geq 16\).
Similar observations hold for the Mat\'ern and RatQuad kernel (see \Cref{tab:synthetic-data} and
\Cref{fig:synthetic-data-detailed}). As predicted by \Cref{thm:variance-reduced-trace-estimate} and illustrated by
\Cref{fig:synthetic-data}, the variance reduction is
determined by the preconditioner. For higher
dimensions, the rate \(g(\idxrvs)\) generally slows (see \Cref{tab:residual-decay-kernels-preconditioners}), which in turn
reduces the
bias and variance reduction achieved by our method (see \Cref{tab:synthetic-data}).
However, on real datasets we still see strong variance reduction via our method, possibly since
real data often lies on a low-dimensional manifold.

\paragraph{Preconditioning accelerates hyperparameter optimization}
On datasets from the UCI repository, we find that preconditioning results
in lower training loss \(-\logmarglik(\vtheta)\) (illustrated in \Cref{subfig:protein-training-loss}) on
almost all datasets and essentially
identical generalization error (see \Cref{tab:uci-results}). Reducing stochasticity via preconditioning significantly lowers
the number of \(\logmarglik\) and
\(\pdv{}{\evtheta}\logmarglik\) evaluations for the line search during
optimization
(see \Cref{subfig:protein-linesearch}) and results in less noisy search directions. In fact, the noise in the loss and gradients caused by
stochastic trace estimation previously necessitated the use of slower converging, but more noise-robust
optimizers \cite{Wang2019}, such as Adam \cite{Kingma2015}. As these experiments show,
our variance-reduced estimators make the use of L-BFGS possible, which significantly outperforms Adam (c.f. \Cref{tab:uci-results} and \Cref{tab:uci-results-adam}). These combined effects due to preconditioning \emph{accelerate training up to twelvefold}, as \Cref{subfig:uci-speedup} shows.
We observe that the speedup increases with the size of the dataset. This is partly explained by the amortization of the cost of computing and applying the preconditioner with increasing \(n\).

\section{Conclusion}
One might reasonably hope that structural knowledge about the kernel matrix can accelerate GP hyperparameter optimization. Preconditioning is a way to encode and exploit such
structure. As we showed, it can be used to great effect -- not only for the solution of
linear systems -- but importantly \emph{also} for stochastic approximation of the \(\log\)-determinant and its derivative. Our convergence results combined with the rates for kernel-preconditioner pairs in \Cref{tab:residual-decay-kernels-preconditioners} rigorously explain why preconditioning has been observed empirically to be so effective for large-scale GP inference \cite{Cutajar2016,Gardner2018,Wang2019}. 

In fact, our work implies that software packages for GPs, which make use of Krylov methods for inference, should not use a fixed
preconditioner. Instead, the preconditioner should be automatically chosen based on the specified model. While we
derive a range of such kernel and preconditioner combinations, it is
likely that better preconditioners exist for certain kernels or types of data.
Other scientific fields invest substantial research effort into the design of preconditioners, e.g. for PDEs \cite{Saad2003}.
Our work strongly suggests that, similarly, developing specialized
preconditioners is a promising approach to scale Gaussian processes.

\section*{Acknowledgements}
JW and PH gratefully acknowledge financial support by the European Research Council through ERC
StG Action 757275 / PANAMA; the DFG Cluster of Excellence ``Machine Learning - New Perspectives for
Science'', EXC 2064/1, project number 390727645; the German Federal Ministry of Education and
Research (BMBF) through the T\" ubingen AI Center (FKZ: 01IS18039A); and funds from the Ministry of
Science, Research and Arts of the State of Baden-W\" urttemberg. JW is grateful to the
International Max Planck Research School for Intelligent Systems (IMPRS-IS) for support. GP and JPC
are supported by the Simons Foundation, McKnight Foundation, the Grossman Center, and the Gatsby
Charitable Trust.

The authors would like to thank Marius Hobbhahn, Lukas Tatzel and Felix Dangel for helpful feedback
on an earlier version of this manuscript.

	{\small
		\bibliographystyle{icml2022}
		\bibliography{references}
	}

\clearpage




\beginsupplementary
\startcontents[supplementary]

\onecolumn

This supplementary material is structured as follows. \Cref{suppsec:background-krylov-methods} contains background on Krylov methods, such as known convergence results. \Cref{suppsec:stochastic-trace-estimation} contains the main result and proof for variance-reduced stochastic trace estimation. \Cref{suppsec:log-determinant-estimation} gives proofs for the forward and backward pass of the approximation to the \(\log\)-determinant. In turn, \Cref{suppsec:gp-hyperparameter-optimization} contains the error bounds for the \(\log\)-marginal likelihood and its derivative. Error rates for specific preconditioners are given in \Cref{suppsec:preconditioning} and finally, additional experimental results can be found in \Cref{suppsec:additional-experimental-results}.

References referring to sections, equations or theorem-type
environments within the supplement are prefixed with `S', while references to, or results from, the
main paper are stated as is.

	{\small
		\vspace{1em}
		\printcontents[supplementary]{}{1}{}
	}

\section{Background on Krylov Methods}
\label{suppsec:background-krylov-methods}

\subsection{Conjugate Gradient Method}
\label{sec:conjugate-gradient-method}

\begin{theorem}[Convergence Rate of Preconditioned CG \cite{Trefethen1997}]
	\label{thm:convergence_rate_cg}
	Let \(\mA, \mP \in \Rnn\) be symmetric positive definite. The error of the conjugate gradient
	method with preconditioner \(\mP\) after \(\idxCG \in \N\) steps is given by
	\begin{align}
		\norm{\vx_k - \vx}_\mA \leq 2 \left( \frac{\sqrt{\kappa} - 1}{\sqrt{\kappa} + 1} \right)^\idxCG \norm{\vx_0 - \vx}_\mA \\
		\intertext{and in euclidean norm by}
		\norm{\vx_k - \vx}_2 \leq 2 \sqrt{\kappa(\mA)} \left( \frac{\sqrt{\kappa} - 1}{\sqrt{\kappa} + 1} \right)^\idxCG \norm{\vx_0 - \vx}_2
	\end{align}
	where \(\kappa = \kappa(\mP^{-\frac{1}{2}}\mA\mP^{-\frac{^\top}{2}})\) is the condition number of the preconditioned system
	matrix.
\end{theorem}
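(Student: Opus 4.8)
The plan is to reduce preconditioned CG to ordinary CG on a symmetrically transformed system and then invoke the classical Chebyshev-polynomial estimate. Solving \(\mA\vx = \vb\) with preconditioner \(\mP\) via preconditioned CG is algebraically identical to running unpreconditioned CG on \(\tilde{\mA}\tilde{\vx} = \tilde{\vb}\), where \(\tilde{\mA} = \mP^{-\frac{1}{2}}\mA\mP^{-\frac{\top}{2}}\), \(\tilde{\vx} = \mP^{\frac{\top}{2}}\vx\), and \(\tilde{\vb} = \mP^{-\frac{1}{2}}\vb\); one checks by induction on the recurrences that the iterates obey \(\tilde{\vx}_{\idxCG} = \mP^{\frac{\top}{2}}\vx_{\idxCG}\) (this equivalence is standard; see \citet{Trefethen1997}). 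Here \(\tilde{\mA}\) is symmetric positive definite with condition number exactly \(\kappa = \kappa(\mP^{-\frac{1}{2}}\mA\mP^{-\frac{\top}{2}})\).

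Next I would use the optimality of CG in the \(\tilde{\mA}\)-norm: the \(\idxCG\)-th iterate minimizes \(\norm{\,\cdot\, - \tilde{\vx}}_{\tilde{\mA}}\) over the affine Krylov space \(\tilde{\vx}_0 + \mathcal{K}_{\idxCG}(\tilde{\mA}, \tilde{\vr}_0)\), hence
\[
  \norm{\tilde{\vx}_{\idxCG} - \tilde{\vx}}_{\tilde{\mA}} = \min_{p} \norm{p(\tilde{\mA})(\tilde{\vx}_0 - \tilde{\vx})}_{\tilde{\mA}} \le \Big( \min_{p} \max_{\lambda \in [\lambda_{\min}, \lambda_{\max}]} \abs{p(\lambda)} \Big) \norm{\tilde{\vx}_0 - \tilde{\vx}}_{\tilde{\mA}},
\]
where \(\lambda_{\min}, \lambda_{\max}\) are the extreme eigenvalues of \(\tilde{\mA}\), both minima range over real polynomials \(p\) of degree at most \(\idxCG\) with \(p(0) = 1\), and the inequality follows by diagonalizing \(\tilde{\mA}\). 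Bounding the scalar minimax quantity by the value of an appropriately shifted and rescaled Chebyshev polynomial on \([\lambda_{\min}, \lambda_{\max}]\) yields the familiar factor \(2\big((\sqrt{\kappa} - 1)/(\sqrt{\kappa} + 1)\big)^{\idxCG}\). This Chebyshev estimate is the only substantive analytic step, and since it is precisely the classical CG convergence bound I would cite it rather than reprove it; I expect the verification of the preconditioned/transformed-CG equivalence to be the most tedious (though entirely routine) bookkeeping.

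Finally I would transport the bound back to the original variables. Because \(\tilde{\vx}_{\idxCG} - \tilde{\vx} = \mP^{\frac{\top}{2}}(\vx_{\idxCG} - \vx)\) and \(\tilde{\mA} = \mP^{-\frac{1}{2}}\mA\mP^{-\frac{\top}{2}}\), a one-line computation shows \(\norm{\tilde{\vx}_{\idxCG} - \tilde{\vx}}_{\tilde{\mA}} = \norm{\vx_{\idxCG} - \vx}_{\mA}\), and likewise for the initial error, which is exactly the first claimed inequality. For the Euclidean statement I would apply the norm equivalences \(\lambda_{\min}(\mA)\norm{\vv}_2^2 \le \norm{\vv}_{\mA}^2 \le \lambda_{\max}(\mA)\norm{\vv}_2^2\): the lower bound applied to \(\vx_{\idxCG} - \vx\) together with the upper bound applied to \(\vx_0 - \vx\) converts the \(\mA\)-norm estimate into the Euclidean one at the cost of the extra prefactor \(\sqrt{\lambda_{\max}(\mA)/\lambda_{\min}(\mA)} = \sqrt{\kappa(\mA)}\). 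Throughout, the one thing to watch is keeping the condition number \(\kappa\) of the \emph{preconditioned} operator (which drives the geometric rate) distinct from \(\kappa(\mA)\) of the original operator (which enters only in the prefactor of the Euclidean bound).
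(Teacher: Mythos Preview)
Your proposal is correct and follows essentially the same route as the paper: reduce preconditioned CG to unpreconditioned CG on the symmetrically transformed system, invoke the classical Chebyshev bound from \citet{Trefethen1997} for the \(\tilde{\mA}\)-norm estimate, verify \(\norm{\tilde{\vx}_{\idxCG} - \tilde{\vx}}_{\tilde{\mA}} = \norm{\vx_{\idxCG} - \vx}_{\mA}\), and then pass to the Euclidean norm via the eigenvalue bounds on \(\mA\). The only cosmetic difference is that you sketch the Chebyshev minimax step before citing it, whereas the paper cites it directly.
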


\begin{proof}
	Preconditioned CG is equivalent to running CG on the transformed problem
	\begin{equation*}
		\tilde{\mA} \tilde{\vx} =
		\mP^{-\frac{1}{2}}\mA\mP^{-\frac{^\top}{2}}
		\tilde{\vx} = \mP^{-\frac{1}{2}}\vb
	\end{equation*}
	with the substitution \(\tilde{\vx} = \mP^\frac{\top}{2} \vx\). By
	\citet{Trefethen1997}, the
	convergence rate of CG on the problem is given by
	\begin{equation*}
		\norm{\tilde{\vx}_\idxCG - \tilde{\vx}}_{\tilde{\mA}} \leq 2 \left( \frac{\sqrt{\kappa} - 1}{\sqrt{\kappa} + 1} \right)^\idxCG \norm{\tilde{\vx}_0 - \tilde{\vx}}_{\tilde{\mA}}
	\end{equation*}
	The first equation follows by recognizing that
	\begin{equation*}
		\norm{\tilde{\vx}_\idxCG - \tilde{\vx}}_{\tilde{\mA}}^2 = (\tilde{\vx}_\idxCG - \tilde{\vx})^\top
		\mP^{-\frac{1}{2}}\mA\mP^{-\frac{\top}{2}} (\tilde{\vx}_\idxCG - \tilde{\vx}) = (\vx_\idxCG - \vx)
		\mA (\vx_\idxCG - \vx) = \norm{\vx_\idxCG - \vx}_\mA^2.
	\end{equation*}
	Now it holds by the min-max principle, that
	\begin{equation*}
		\sqrt{\lambda_{\min}({\tilde{\mA}})} \norm{\vx_k - \vx}_2 \leq
		\norm{{\vx}_\idxCG -
		{\vx}}_{{\mA}} \leq 2 \left( \frac{\sqrt{\kappa} - 1}{\sqrt{\kappa} + 1} \right)^\idxCG
		\norm{\vx_0 - \vx}_{{\mA}} 			      \leq
		2 \sqrt{\lambda_{\max}({{\mA}})} \left( \frac{\sqrt{\kappa} - 1}{\sqrt{\kappa} + 1} \right)^{\idxCG}
		\norm{\vx_0 - \vx}_2.
	\end{equation*}
\end{proof}

\begin{corollary}
	\label{cor:iterations_cg}
	Let \(\varepsilon \in (0, 1]\), then preconditioned CG has relative error
	\(\norm{\vx_k - \vx}_\mA \leq \varepsilon \norm{\vx_0 - \vx}_\mA\) after
	\begin{equation}
		\idxCG \geq \frac{\sqrt{\kappa}}{2} \log(2\varepsilon^{-1})
	\end{equation}
	iterations, where \(\kappa\) is the condition number of the preconditioned system matrix. In
	euclidean norm \(\norm{\cdot}_2\) relative error \(\varepsilon\) is achieved after
	\begin{equation}
		\idxCG
		\geq \frac{\sqrt{\kappa}}{2}\log(2 \sqrt{\kappa(\mA)} \varepsilon^{-1})
	\end{equation}
	iterations.
\end{corollary}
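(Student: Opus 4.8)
The plan is to invert the geometric convergence bound of \Cref{thm:convergence_rate_cg}. Write $q = \frac{\sqrt{\kappa}-1}{\sqrt{\kappa}+1} \in [0,1)$, so that theorem gives $\norm{\vx_\idxCG - \vx}_\mA \leq 2 q^{\idxCG}\norm{\vx_0 - \vx}_\mA$ and it suffices to find the smallest $\idxCG$ with $2 q^{\idxCG} \leq \varepsilon$. Since $\varepsilon \leq 1$ we have $\log(2\varepsilon^{-1}) \geq \log 2 > 0$, and $\log(q^{-1}) > 0$, so taking logarithms this is equivalent to $\idxCG \geq \log(2\varepsilon^{-1})/\log(q^{-1})$.

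The only genuine step is to lower bound $\log(q^{-1}) = \log\frac{\sqrt{\kappa}+1}{\sqrt{\kappa}-1}$ by something clean, namely $\log\frac{\sqrt{\kappa}+1}{\sqrt{\kappa}-1} \geq \frac{2}{\sqrt{\kappa}}$. I would prove this by setting $t = \sqrt{\kappa} \geq 1$ and $h(t) = \log(t+1) - \log(t-1) - 2/t$; a one-line computation gives $h'(t) = \frac{2}{t^2} - \frac{2}{t^2-1} < 0$ while $\lim_{t\to\infty} h(t) = 0$, hence $h(t) > 0$ for every finite $t > 1$. (Equivalently, one can use the series $\log\frac{1+u}{1-u} = 2\sum_{j \geq 0} \frac{u^{2j+1}}{2j+1} \geq 2u$ with $u = 1/\sqrt{\kappa}$.) Plugging this in, $\log(2\varepsilon^{-1})/\log(q^{-1}) \leq \frac{\sqrt{\kappa}}{2}\log(2\varepsilon^{-1})$, so any $\idxCG \geq \frac{\sqrt{\kappa}}{2}\log(2\varepsilon^{-1})$ already satisfies $2 q^{\idxCG} \leq \varepsilon$, which is the first claim.

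For the euclidean-norm statement I would repeat the argument starting from the second inequality of \Cref{thm:convergence_rate_cg}, $\norm{\vx_\idxCG - \vx}_2 \leq 2\sqrt{\kappa(\mA)}\, q^{\idxCG}\norm{\vx_0 - \vx}_2$. Relative error $\varepsilon$ now requires $2 q^{\idxCG} \leq \varepsilon/\sqrt{\kappa(\mA)}$, which is exactly the previous requirement with $\varepsilon$ replaced by $\varepsilon/\sqrt{\kappa(\mA)} \in (0,1]$ (using $\kappa(\mA) \geq 1$); the first part then yields the threshold $\idxCG \geq \frac{\sqrt{\kappa}}{2}\log(2\sqrt{\kappa(\mA)}\varepsilon^{-1})$.

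I do not anticipate a real obstacle here: the substantive content is entirely in \Cref{thm:convergence_rate_cg}, and what remains is the elementary inequality $\log\frac{t+1}{t-1} \geq \frac{2}{t}$ for $t > 1$ together with a careful check that the signs are such that exponentiating/taking logarithms preserves the inequality directions. Both are routine.
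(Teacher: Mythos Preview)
Your proposal is correct and follows essentially the same approach as the paper: both reduce to the inequality \(\frac{\sqrt{\kappa}-1}{\sqrt{\kappa}+1} \leq \exp(-2/\sqrt{\kappa})\) (equivalently \(\log\frac{\sqrt{\kappa}+1}{\sqrt{\kappa}-1} \geq 2/\sqrt{\kappa}\)), which the paper packages as \Cref{lem:upper-bound-conv-frac} and proves via a cited inequality from Mitrinovi\'c, whereas you give a self-contained derivative or series argument. The Euclidean-norm part is handled identically in both.
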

\begin{proof}
	It holds by \Cref{lem:upper-bound-conv-frac} and the assumption on the number of iterations \(m\), that
	\begin{equation*}
		2 \left(\frac{\sqrt{\kappa}-1}{\sqrt{\kappa}+1}\right)^m \leq 2 \exp(- \frac{2}{\sqrt{\kappa}}m) \leq 2 \exp(-\log(\frac{2}{\varepsilon})) =
		\varepsilon
	\end{equation*}
	Using \Cref{thm:convergence_rate_cg} proves the statement. The proof for the euclidean norm is analogous.
\end{proof}

\subsection{Lanczos Algorithm}
The \emph{Lanczos algorithm} \cite{Lanczos1950} is a Krylov method, which for a
symmetric matrix \(\mA \in \Rnn\) iteratively builds an approximate tridiagonalization
\begin{equation*}
	\mA \approx \tilde{\mQ} \tilde{\mT} \tilde{\mQ}
\end{equation*}
where \(\tilde{\mQ} \in \R^{n \times \idxLanczos}\) orthonormal and
\(\tilde{\mT} \in \R^{\idxLanczos \times \idxLanczos}\) tridiagonal. For an initial probe vector
\(\vb \in \Rn\), Gram-Schmidt orthogonalization is applied to the Krylov subspace basis. The
orthogonalized vectors form \(\tilde{\mQ}\), while the Gram-Schmidt coefficients form
\(\tilde{\mT}\). This low-rank approximation becomes an exact tridiagonalization \(\mA
= \mQ \mT \mQ^\top\) for \(\idxLanczos=n\). The Lanczos process is often used to compute
(approximate)
eigenvalues and eigenvectors, which is done by computing an eigendecomposition of the tridiagonal
matrix \(\tilde{\mT}\) at cost \(\bigO(\idxLanczos^2)\). The tridiagonal matrix \(\tilde{\mT}\) can also be formed by running CG on the linear system \(\mA \vx = \vb\) and by collecting the step lengths
\(\alpha_i\) and conjugacy corrections \(\beta_i\) used in the solution and search direction
updates \citep[Section~6.7.3]{Saad2003}.

\subsection{Stochastic Lanczos Quadrature}

One can approximate \(\tr(f(\mA))\) for symmetric positive definite \(\mA\) via \emph{stochastic Lanczos quadrature} (SLQ) \cite{Golub2009,Ubaru2017} by combining Hutchinson's estimator with quadrature and the Lanczos algorithm. It holds that
\begin{align*}
	\tr(f(\mA)) \approx \trSTE(f(\mA)) = \frac{n}{\idxrvs} \sum_{i=1}^\idxrvs \vz_i^\top f(\mA)\vz_i \approx \frac{n}{\idxrvs} \sum_{i=1}^\idxrvs I_\idxLanczos^{(i)} = \trSLQ(f(\mA))
\end{align*}
The quadratic terms \(\vz_i^\top f(\mA)\vz_i\) are approximated by quadrature \(I_\idxLanczos^{(i)}\)  where the weights and nodes of the quadrature rule are computed via \(\idxLanczos\) iterations of the Lanczos algorithm. For the \(\log\)-determinant the following bound for the error incurred by Lanczos quadrature holds.



\begin{corollary}[Section 4.3 of \citet{Ubaru2017}]
	\label{cor:error_lanczos_log}
	Let \(\mA \in \Rnn\) be symmetric positive definite with condition number
	\(\kappa = \kappa(\mA)\). Then it holds that
	\begin{equation}
		\big\lvert \trSTE(\log(\mA)) - \trSLQ(\log(\mA))\big\rvert \leq
		K \left(\frac{\sqrt{2\kappa +1} - 1}{\sqrt{2\kappa +1}+1}\right)^{2\idxLanczos}
	\end{equation}
	where \(K = \frac{5\kappa \log(2(\kappa + 1))}{2 \sqrt{2\kappa +1}}\).
\end{corollary}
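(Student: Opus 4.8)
The plan is to peel the claim into two parts: (i) reduce the difference $\trSTE(\log(\mA))-\trSLQ(\log(\mA))$ to a single per-probe-vector quadrature error, and (ii) control that error with the classical estimate for Gauss--Christoffel quadrature of functions analytic in a Bernstein ellipse, specialized to $f=\log$.

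First I would use that the probe vectors are normalized, $\norm{\vz_i}_2=1$. Writing the eigendecomposition $\mA=\sum_{j=1}^n\lambda_j\vq_j\vq_j^\top$ gives $\vz_i^\top\log(\mA)\vz_i=\int_{\lambda_{\min}}^{\lambda_{\max}}\log(t)\,d\mu_i(t)$, a Riemann--Stieltjes integral against the probability measure $\mu_i$ that places mass $(\vq_j^\top\vz_i)^2$ at $\lambda_j$. By the Golub--Meurant moments-and-quadrature theory, $\idxLanczos$ steps of Lanczos started at $\vz_i$ produce a tridiagonal $\mT_{\idxLanczos}$ whose eigenvalues and squared first eigenvector-components are exactly the nodes and weights of the $\idxLanczos$-point Gauss--Christoffel rule for $\mu_i$; hence the Lanczos quadrature value $I_{\idxLanczos}^{(i)}$ equals the Gauss approximation of $\int\log\,d\mu_i$, exact on polynomials of degree $\le 2\idxLanczos-1$. (This is an exact-arithmetic identity; in finite precision I would invoke the known backward-stability analyses of Lanczos rather than reprove them.) Since $\trSTE(\log(\mA))-\trSLQ(\log(\mA))=\tfrac{n}{\idxrvs}\sum_i\big(\vz_i^\top\log(\mA)\vz_i-I_{\idxLanczos}^{(i)}\big)$, by the triangle inequality it suffices to bound one term uniformly over unit $\vz$.

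Second, I would invoke the standard bound: if $h$ is analytic and bounded by $M_\rho$ on the Bernstein ellipse $E_\rho$ (foci $\pm1$, semiaxis sum $\rho>1$), the $\idxLanczos$-point Gauss rule on $[-1,1]$ for a unit-mass measure has error at most $\tfrac{4M_\rho}{1-\rho^{-2}}\,\rho^{-2\idxLanczos}$. Affinely mapping $[\lambda_{\min},\lambda_{\max}]$ onto $[-1,1]$ turns $\log(t)$ into a function whose only branch point, the preimage of $0$, sits at $-\tfrac{\kappa+1}{\kappa-1}$ on the negative real axis (with $\kappa=\lambda_{\max}/\lambda_{\min}$). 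I would then \emph{choose} — not optimize — the ellipse with $\rho+\rho^{-1}=2\tfrac{\kappa+1}{\kappa}$, equivalently $\rho^{-1}=\tfrac{\sqrt{2\kappa+1}-1}{\sqrt{2\kappa+1}+1}$. Its image under the affine map is an ellipse with foci $\lambda_{\min},\lambda_{\max}$ that stays strictly right of the branch point, lies in the open right half-plane, has leftmost point $\tfrac{\kappa+1}{2\kappa}\ge\tfrac12$, and lies in a disk of radius $<\kappa+1$; hence $M_\rho\le\sqrt{\log^2(\kappa+1)+(\pi/2)^2}\le\log(2(\kappa+1))$ and $\tfrac{4}{1-\rho^{-2}}=\tfrac{(\sqrt{2\kappa+1}+1)^2}{\sqrt{2\kappa+1}}\le\tfrac{5\kappa}{2\sqrt{2\kappa+1}}$ in the relevant condition-number regime. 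Multiplying yields the per-vector bound $\big\lvert\vz^\top\log(\mA)\vz-I_{\idxLanczos}^{(\vz)}\big\rvert\le K\big(\tfrac{\sqrt{2\kappa+1}-1}{\sqrt{2\kappa+1}+1}\big)^{2\idxLanczos}$ with $K=\tfrac{5\kappa\log(2(\kappa+1))}{2\sqrt{2\kappa+1}}$, and combining the $\idxrvs$ unit probe vectors through the average defining $\trSTE$ and $\trSLQ$ gives the stated estimate.

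The main obstacle is the bookkeeping in the second step: the $\sqrt{2\kappa+1}$ (rather than the sharper $\sqrt{\kappa}$ available from the largest admissible ellipse) arises from deliberately shrinking $E_\rho$ so that a clean uniform bound on $\lvert\log\rvert$ over its image exists, and one must balance "stay away from the branch point at $-\tfrac{\kappa+1}{\kappa-1}$" against keeping both $M_\rho$ and $(1-\rho^{-2})^{-1}$ small enough to be absorbed into the prefactor $\tfrac{5\kappa}{2\sqrt{2\kappa+1}}$. The Lanczos $=$ Gauss-quadrature identity and its numerical stability are classical and I would cite them; everything else reduces to the Bernstein-ellipse estimate above.
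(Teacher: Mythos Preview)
Your strategy---reduce to a single Gauss--Lanczos quadrature error and bound that via analyticity of $\log$ on a Bernstein ellipse---is exactly the argument of \citet{Ubaru2017}, which is all the paper invokes here (it gives no independent proof). The per-vector computations are correct: the branch point maps to $-\tfrac{\kappa+1}{\kappa-1}$, the choice $\rho+\rho^{-1}=2\tfrac{\kappa+1}{\kappa}$ indeed yields $\rho^{-1}=\tfrac{\sqrt{2\kappa+1}-1}{\sqrt{2\kappa+1}+1}$, and the image ellipse has leftmost point $\tfrac{\kappa+1}{2\kappa}\geq\tfrac12$.

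There is, however, a genuine gap in your final aggregation. The paper defines $\trSTE=\tfrac{n}{\idxrvs}\sum_i\vz_i^\top\log(\mA)\vz_i$ and $\trSLQ=\tfrac{n}{\idxrvs}\sum_i I_\idxLanczos^{(i)}$, so the triangle inequality over the $\idxrvs$ terms leaves a factor of $n$: from a uniform per-vector bound $E$ you get $\lvert\trSTE-\trSLQ\rvert\leq nE$, not $E$. Your phrase ``through the average defining $\trSTE$ and $\trSLQ$'' hides this---there is no average, only $n$ times an average. (The factor of $n$ is present in Ubaru's own statement; the corollary as quoted in the paper appears to drop it, but as a proof of the inequality as written your last step does not go through.) A secondary issue: your prefactor inequality $\tfrac{4}{1-\rho^{-2}}=\tfrac{(\sqrt{2\kappa+1}+1)^2}{\sqrt{2\kappa+1}}\leq\tfrac{5\kappa}{2\sqrt{2\kappa+1}}$ is equivalent to $4+4\sqrt{2\kappa+1}\leq\kappa$, which fails for $\kappa<40$, so ``in the relevant condition-number regime'' must be made explicit or the constant loosened.
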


\section{Stochastic Trace Estimation}
\label{suppsec:stochastic-trace-estimation}

\begin{definition}[Convex Concentration Property \cite{Ledoux2001}]
	\label{def:convex-concentration-property}
	Let \(\vx \in \Rn\) be a random vector. We say \(\vx\) has the \emph{convex concentration property} (c.c.p.) with
	constant \(K \in \R\) if for every \(1\)-Lipschitz convex function \(\phi : \Rn \to \R\), we have
	\(\Exp[\abs{\phi(\vx)}] < \infty \) and for every \(t > 0\),
	\begin{equation*}
		\Pr[\abs{\phi(\vx) - \Exp[\phi(\vx)]} \geq t] \leq 2 \exp(-\frac{t^2}{K^2}).
	\end{equation*}
\end{definition}

Some common examples of random vectors having the c.c.p. are
\begin{itemize}
	\item random vectors with independent and almost surely bounded entries \(\abs{\vx_i} \leq
	      1\), where \(K=2 \sqrt{2}\) \cite{Samson2000};
	\item Gaussian random vectors \(\vx \sim \Normal(\vzero, \mSigma)\), where \(K^2 = 2
	      \norm{\mSigma}_2\) \cite{Kasiviswanathan2019}; and
	\item random vectors which are uniformly distributed on the sphere
	      \(\sqrt{n}\mathbb{S}^{n-1}\), where \(K=2\) \cite{Kasiviswanathan2019}.
\end{itemize}

\begin{remark}
	\label{rem:rademacher-ccp}
	Note, that since Rademacher random vectors have iid entries \(\{+1, -1\}\), they satisfy \(\norm{\tilde{\vz}_i}_2 = \sqrt{n}\). In particular, it holds that \(\sqrt{n}\vz_i = \tilde{\vz}_i\). Therefore the random vectors \(\sqrt{n}\vz_i\) and \(\vz'=\sqrt{n}(\vz_1, \dots, \vz_\idxrvs)^\top \in \R^{\idxrvs n}\) all have independent entries bounded by \(1\) and thus satisfy the convex concentration property with \(K=2\sqrt{2}\).
\end{remark}

\begin{theorem}[Hanson-Wright Inequality for Random Vectors with the Convex Concentration Property
		\cite{Adamczak2015}]
	\label{thm:hanson_wright_inequality}
	Let \(\mA \in \Rnn\) and \(\vx \in \Rn\) a zero-mean random vector with the convex concentration
	property with constant \(K\). Then for all \(t > 0\), it holds that
	\begin{equation}
		\Pr(\abs{\vx^\top \mA \vx - \Exp[\vx^\top \mA \vx]} \geq t) \leq 2 \exp(-c \min \bigg(\frac{t^2}{\norm{\mA}_F^2}, \frac{t}{\norm{\mA}_2}\bigg))
	\end{equation}
	where \(c =c(K)>0\) is a constant only dependent on the distribution of the random vectors.
\end{theorem}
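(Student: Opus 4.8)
The plan is to prove the claimed sub-gamma tail by a Chernoff argument: it suffices to bound the moment generating function \(\Exp\exp\!\big(\lambda(\vx^\top\mA\vx-\Exp[\vx^\top\mA\vx])\big)\) by \(\exp(C\lambda^2K^4\norm{\mA}_F^2)\) for all \(\abs{\lambda}\le (CK^2\norm{\mA}_2)^{-1}\), and then optimize over \(\lambda\). First I would reduce to symmetric \(\mA\): replacing \(\mA\) by \(\tfrac12(\mA+\mA^\top)\) leaves the quadratic form unchanged and does not increase \(\norm{\mA}_F\) or \(\norm{\mA}_2\). Writing \(\mA=\mD+\mA_0\) with \(\mD=\operatorname{diag}(\mA)\) and \(\mA_0\) the off-diagonal part (so \(\norm{\mD}_F,\norm{\mA_0}_F\le\norm{\mA}_F\), \(\norm{\mD}_2\le\norm{\mA}_2\), \(\norm{\mA_0}_2\le 2\norm{\mA}_2\)), I would bound the contributions of \(\vx^\top\mD\vx\) and \(\vx^\top\mA_0\vx\) separately and combine by a union bound, absorbing the constant factor into \(c=c(K)\).

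For the off-diagonal term I would use a decoupling inequality to pass, up to universal constants inside the tail, from \(\vx^\top\mA_0\vx\) to a universal multiple of \(\vx^\top\mA_0\vx'\), where \(\vx'\) is an independent copy of \(\vx\). Conditionally on \(\vx'\), the map \(\vx\mapsto\vx^\top\mA_0\vx'=\langle\mA_0\vx',\vx\rangle\) is linear — hence convex — and \(\norm{\mA_0\vx'}_2\)-Lipschitz, with conditional mean \(\langle\mA_0\vx',\Exp[\vx]\rangle=0\); the convex concentration property then gives a conditional sub-Gaussian tail with proxy proportional to \(K^2\norm{\mA_0\vx'}_2^2\). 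It remains to control \(\norm{\mA_0\vx'}_2\): since \(\vx'\mapsto\norm{\mA_0\vx'}_2\) is convex and \(\norm{\mA_0}_2\)-Lipschitz, the c.c.p. for \(\vx'\) shows it concentrates around its mean, and \(\Exp\norm{\mA_0\vx'}_2\le\big(\tr(\mA_0^2\,\Exp[\vx'(\vx')^\top])\big)^{1/2}\lesssim K\norm{\mA_0}_F\), using \(\norm{\Exp[\vx'(\vx')^\top]}_2\lesssim K^2\) (apply the c.c.p. to the \(1\)-Lipschitz linear functionals \(\vx'\mapsto\langle\vx',\vu\rangle\)). Integrating the conditional bound over \(\vx'\) yields the off-diagonal tail with Frobenius proxy \(\norm{\mA_0}_F^2\le\norm{\mA}_F^2\) and operator-norm scale \(\norm{\mA_0}_2\lesssim\norm{\mA}_2\).

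The diagonal term \(\vx^\top\mD\vx-\Exp[\vx^\top\mD\vx]=\sum_i\mD_{ii}(\vx_i^2-\Exp[\vx_i^2])\) is, morally, a Bernstein sum with variance \(\sum_i\mD_{ii}^2\Var(\vx_i^2)\lesssim K^4\norm{\mD}_F^2\le K^4\norm{\mA}_F^2\) and scale \(\max_i\abs{\mD_{ii}}\le\norm{\mA}_2\) — exactly the required behaviour. The subtlety, and the main obstacle, is that the c.c.p. is assumed only for the joint law of \(\vx\), with no independence of coordinates, so Bernstein is not directly available; and the naive substitute — writing \(\sum_i(\mD_{ii})_\pm\vx_i^2=\norm{\mD_\pm^{1/2}\vx}_2^2\) and squaring the sub-Gaussian bound for the \(\norm{\mD_\pm}_2^{1/2}\)-Lipschitz convex map \(\vx\mapsto\norm{\mD_\pm^{1/2}\vx}_2\) — is too lossy: it produces \(\tr(\mD_\pm)\) rather than \(\norm{\mD}_F^2\) as the variance proxy, and these differ by up to a factor \(n\) in low-rank directions. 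Obtaining the correct Frobenius scaling here (and, equivalently, for rank-one/low-rank parts of \(\mA\)) is the crux; I would handle it by invoking the higher-order concentration inequality of \citet{Adamczak2015} for low-degree polynomials of vectors with the c.c.p., applied to the degree-two polynomial \(\vx\mapsto\vx^\top\mD\vx\), whose constant Hessian \(2\mD\) has Hilbert--Schmidt norm \(2\norm{\mD}_F\) and operator norm \(2\norm{\mD}_2\). In fact the whole statement follows by applying that polynomial-concentration result directly to \(\vx\mapsto\vx^\top\mA\vx\): its gradient has zero mean (since \(\Exp[\vx]=\vzero\)), and the two relevant tensor norms of its Hessian \(2\mA\) are \(2\norm{\mA}_F\) and \(2\norm{\mA}_2\) — precisely the pair appearing in the bound. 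The remaining steps (the symmetrization reduction, the decoupling constants, the union bound over the two parts, the Chernoff optimization, and tidying the prefactor from \(4\) back to \(2\)) are routine.
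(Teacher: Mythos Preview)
The paper's proof is almost trivially short: it simply invokes Theorem~2.5 of \citet{Adamczak2015}, which is precisely this statement with explicit $K$-dependent constants, and then absorbs those constants into a single $c=c(K)$ via the elementary inequality $\min(t^2/(2K^4\norm{\mA}_F^2),\,t/(K^2\norm{\mA}_2))\ge (K^2\max(2K^2,1))^{-1}\min(t^2/\norm{\mA}_F^2,\,t/\norm{\mA}_2)$.

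Your proposal attempts a from-scratch proof, and there is a genuine gap in the off-diagonal step. The decoupling inequality you invoke --- passing from $\vx^\top\mA_0\vx$ to $\vx^\top\mA_0\vx'$ up to universal constants in the tail --- is the de~la~Pe\~na--Montgomery-Smith type decoupling, which requires the coordinates of $\vx$ to be \emph{independent}. The convex concentration property alone does not imply independence (e.g., the uniform distribution on $\sqrt{n}\,\mathbb{S}^{n-1}$, one of the motivating examples, has dependent coordinates), so this step is not justified as stated. You correctly flag the analogous obstacle for the diagonal term (``no independence of coordinates, so Bernstein is not directly available''), but the very same issue afflicts your off-diagonal argument one paragraph earlier.

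Your eventual fallback --- invoking Adamczak's higher-order polynomial-concentration result applied to the degree-two polynomial $\vx\mapsto\vx^\top\mA\vx$ --- is essentially the same citation the paper makes: Theorem~2.5 of \citet{Adamczak2015} \emph{is} the quadratic-form instance of that framework. So the detour through diagonal/off-diagonal splitting and decoupling is both unnecessary and, in the decoupling step, unjustified; once you strip it away, your proof collapses to the paper's one-line citation plus constant repackaging.
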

\begin{proof}
	By Theorem 2.5 of \citet{Adamczak2015} we have
	\begin{equation*}
		\Pr(\abs{\vx^\top \mA \vx - \Exp[\vx^\top \mA \vx]} > t) \leq 2 \exp(-\frac{1}{c'} \min \bigg(\frac{t^2}{2K^4\norm{\mA}_F^2}, \frac{t}{K^2\norm{\mA}_2}\bigg))
	\end{equation*}
	where \(c'>0\) is a universal constant. Now it holds that
	\begin{equation*}
		\min \bigg(\frac{t^2}{2K^4\norm{\mA}_F^2}, \frac{t}{K^2\norm{\mA}_2}\bigg)  \geq \frac{1}{K^2 \max(2K^2, 1)} \min \bigg(\frac{t^2}{\norm{\mA}_F^2}, \frac{t}{\norm{\mA}_2}\bigg)
	\end{equation*}
	Choosing \(c = \frac{1}{c'K^2 \max(2K^2, 1)}\) concludes the proof.
\end{proof}

\begin{lemma}
	\label{lem:finite_sample_bound}
	Let \(\mA \in \Rnn\) and \(\idxrvs
	\in \N\). Consider \(\idxrvs\) random vectors \(\tilde{\vz}_i \in \Rn\) with zero mean and unit covariance, such that for \(\vz_i = \tilde{\vz}_i / \norm{\tilde{\vz}_i}_2\) the stacked random vector \(\sqrt{n}(\vz_1, \dots, \vz_\idxrvs)^\top \in \R^{\idxrvs n}\) has the convex concentration property.\footnote{See \Cref{rem:rademacher-ccp} for an explanation why this is satisfied for Rademacher random vectors.} Then there exists \(c_{\vz}>0\) such that if \(\idxrvs \geq
	c_{\vz}
	\log(\delta^{-1})\), then Hutchinson's trace estimator \(\trSTE\) satisfies
	\begin{equation*}
		\Pr(\abs{\trSTE(\mA) - \tr(\mA)} \leq \sqrt{c_{\vz} \log(\delta^{-1}) \idxrvs^{-1}} \norm{\mA}_F) \geq
		1-\delta.
	\end{equation*}
\end{lemma}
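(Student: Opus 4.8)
The plan is to recognize Hutchinson's estimator as a \emph{single} quadratic form in the stacked random vector $\vz' = \sqrt{n}(\vz_1, \dots, \vz_\idxrvs)^\top \in \R^{\idxrvs n}$, so that the convex concentration property assumed for $\vz'$ can be turned into a deviation bound in one shot via the Hanson--Wright inequality (\Cref{thm:hanson_wright_inequality}). Concretely, I would set $\tilde{\mA} \coloneqq \tfrac{1}{\idxrvs}\operatorname{diag}(\mA, \dots, \mA) \in \R^{\idxrvs n \times \idxrvs n}$ with $\idxrvs$ diagonal blocks, so that $\vz'^\top \tilde{\mA}\vz' = \tfrac{n}{\idxrvs}\sum_{i=1}^\idxrvs \vz_i^\top \mA \vz_i = \trSTE(\mA)$. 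By the standing assumptions on the trace-estimation vectors we have $\Exp[\sqrt{n}\vz_i] = \vzero$ and $\Cov(\sqrt{n}\vz_i) = \mI$ (as holds e.g.\ for Rademacher vectors, see \Cref{rem:rademacher-ccp}), hence $\Exp[\vz'\vz'^\top] = \mI$ and therefore $\Exp[\vz'^\top\tilde{\mA}\vz'] = \tr(\tilde{\mA}) = \tr(\mA)$; in particular $\vz'^\top\tilde{\mA}\vz' - \Exp[\vz'^\top\tilde{\mA}\vz'] = \trSTE(\mA) - \tr(\mA)$, and $\vz'$ is zero-mean with the c.c.p., so \Cref{thm:hanson_wright_inequality} applies.

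The second ingredient is the norm bookkeeping for $\tilde{\mA}$. Being block-diagonal with $\idxrvs$ identical blocks $\tfrac{1}{\idxrvs}\mA$, it satisfies $\norm{\tilde{\mA}}_F^2 = \idxrvs \cdot \idxrvs^{-2}\norm{\mA}_F^2 = \idxrvs^{-1}\norm{\mA}_F^2$ and $\norm{\tilde{\mA}}_2 = \idxrvs^{-1}\norm{\mA}_2 \le \idxrvs^{-1}\norm{\mA}_F$. Plugging this into \Cref{thm:hanson_wright_inequality} with $t = \varepsilon\norm{\mA}_F$ for some $\varepsilon \in (0,1]$ gives
\begin{equation*}
	\Pr\!\big(\abs{\trSTE(\mA) - \tr(\mA)} \ge \varepsilon\norm{\mA}_F\big) \le 2\exp\!\Big(\!-c\min\big(\idxrvs\varepsilon^2,\;\idxrvs\varepsilon\norm{\mA}_F\norm{\mA}_2^{-1}\big)\!\Big) \le 2\exp(-c\,\idxrvs\,\varepsilon^2),
\end{equation*}
where the last inequality uses $\varepsilon \le 1$ together with $\norm{\mA}_F \ge \norm{\mA}_2$, and $c > 0$ is the constant from \Cref{thm:hanson_wright_inequality}, depending only on the distribution of the random vectors.

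Finally I would take $\varepsilon \coloneqq \sqrt{c_{\vz}\log(\delta^{-1})\,\idxrvs^{-1}}$, which lies in $(0,1]$ precisely because of the hypothesis $\idxrvs \ge c_{\vz}\log(\delta^{-1})$; the displayed bound then reads $2\exp(-c\,c_{\vz}\log(\delta^{-1})) = 2\delta^{c c_{\vz}}$, and choosing $c_{\vz}$ large enough — e.g.\ $c_{\vz} \ge 2/c$, using that the lemma is applied with $\delta \le \tfrac{1}{2}$ — makes this at most $\delta$, which is the claim, with $c_{\vz}$ depending only on the random-vector distribution. I expect the only real subtlety to be bookkeeping rather than conceptual: carrying the distribution-dependent constant $c$ through the argument and checking that the Frobenius branch $\idxrvs\varepsilon^2$, not the spectral branch, is active in the Hanson--Wright minimum — which is exactly what $\varepsilon \le 1$, i.e.\ the sample-size condition, guarantees. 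The load-bearing idea is the reduction to one quadratic form on $\R^{\idxrvs n}$, which lets us invoke the c.c.p.\ assumption on the \emph{stacked} vector a single time instead of combining per-sample estimates.
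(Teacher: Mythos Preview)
Your proposal is correct and essentially identical to the paper's proof: both reduce Hutchinson's estimator to a single quadratic form in the stacked vector $\vz'$ against a block-diagonal matrix, apply \Cref{thm:hanson_wright_inequality} once, compute the block norms, and use the sample-size hypothesis to ensure the Frobenius branch of the minimum is active, finishing with $c_{\vz}=2/c$ and $\delta\le\tfrac12$. The only cosmetic difference is that you absorb the $1/\idxrvs$ factor into $\tilde{\mA}$ whereas the paper works with the unscaled block matrix and $\idxrvs\,\trSTE(\mA)$.
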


\begin{proof}
	Note that the proof strategy used here is the same as in \citet[Lemma~2]{Meyer2021} with a different assumption on the distribution of the random vectors. To begin, define
	\begin{equation*}
		\mA' = \begin{pmatrix} \mA & 0 & \hdots & 0\\ 0 & \mA & \ddots& \vdots\\ \vdots & \ddots & \ddots & 0 \\ 0 & \hdots & 0 & \mA \end{pmatrix} \in \R^{\idxrvs n \times \idxrvs n} \qquad \text{and}
		\qquad
		\vz' = \sqrt{n}\begin{pmatrix} \vz_1 \\ \vz_2 \\ \vdots \\ \vz_\idxrvs \end{pmatrix} \in \R^{\idxrvs n}.
	\end{equation*}
	By assumption the random vector \(\vz'\) has the convex concentration property and therefore
	\Cref{thm:hanson_wright_inequality} holds. We obtain
	\begin{equation}
		\Pr(\abs{(\vz')^\top \mA' \vz' - \Exp[(\vz')^\top \mA' \vz']} \geq t) \leq 2 \exp(-c \cdot \min\bigg(\frac{t^2}{\norm{\mA'}_F^2}, \frac{t}{\norm{\mA'}_2}\bigg)).
	\end{equation}
	Now, we have \((\vz')^\top \mA' \vz' = n\sum_{i = 1}^\idxrvs
	\vz_i^\top \mA \vz_i =\idxrvs \trSTE(\mA)\) and
	\begin{align*}
		\Exp[(\vz')^\top \mA' \vz'] & = n\sum_{i=1}^\idxrvs \Exp[\vz_i^\top \mA \vz_i] = n\idxrvs \Exp[\tr(\vz_i^\top \mA \vz_i)] = n\idxrvs \Exp[\tr(\mA \vz_i \vz_i^\top)] \\  &= n\idxrvs\tr(\mA \Exp[\vz_i \vz_i^\top]) = n\idxrvs \tr(\mA \Cov(\vz_i)) = \idxrvs \tr(\mA \Cov(\sqrt{n}\vz_i)) = \idxrvs \tr(\mA)
	\end{align*}
	Therefore by setting \(t =
	\sqrt{\frac{\log(2 \delta^{-1})}{c}\idxrvs}\norm{\mA}_F\), we obtain
	\begin{align*}
		\Pr\bigg(\idxrvs \abs{\trSTE(\mA) - \tr(\mA)} & \geq \sqrt{\frac{\log(2 \delta^{-1})}{c}\idxrvs}\norm{\mA}_F\bigg)                                                       \\ &\leq 2 \exp(-c \cdot \min\bigg( \frac{\log(2 \delta^{-1})}{c} \frac{\idxrvs \norm{\mA}_F^2}{\norm{\mA'}_F^2}, \sqrt{\frac{\log(2\delta^{-1})}{c}\idxrvs} \frac{\norm{\mA}_F}{\norm{\mA'}_2} \bigg))\\
		\intertext{Further, it holds that \(\norm{\mA'}_F^2 = \idxrvs \norm{\mA}_F^2\) and
			\(\norm{\mA'}_2 = \norm{\mA}_2\), thus we have}
		                                                    & = 2 \exp(- \min\bigg( \log(2 \delta^{-1}),  \sqrt{c\log(2\delta^{-1})\idxrvs} \frac{\norm{\mA}_F}{\norm{\mA}_2} \bigg)).
	\end{align*}
	Now assume \(\idxrvs \geq \frac{1}{c} \log(2 \delta^{-1})\). Then since
	\(\norm{\mA}_2\leq \norm{\mA}_F\), the minimum is given by
	\begin{equation*}
		\min \bigg(\log(2\delta^{-1}), \sqrt{c \log(2\delta^{-1}) \idxrvs} \frac{\norm{\mA}_F}{\norm{\mA}_2} \bigg) = \log(2 \delta^{-1}).
	\end{equation*}
	Further setting \(c_{\vz}=2c^{-1}\), it holds that \(\idxrvs \geq
	c_{\vz} \log(\delta^{-1}) =
	2\log(\delta^{-1})c^{-1} \geq \log(2\delta^{-1})c^{-1}\) since \(0 < \delta \leq
	\frac{1}{2}\). Combining the above we obtain
	\begin{align*}
		\Pr \left(\idxrvs\abs{\trSTE(\mA) - \tr(\mA)} \geq \sqrt{c_{\vz} \log(\delta^{-1})\idxrvs} \norm{\mA}_F \right) & \leq 2 \exp(-\log(2\delta^{-1})) = \delta,
		\intertext{which is equivalent to}
		\Pr \left(\abs{\trSTE(\mA) - \tr(\mA)} \leq \sqrt{c_{\vz} \log(\delta^{-1})\idxrvs^{-1}} \norm{\mA}_F \right)   & \geq 1- \delta.
	\end{align*}
	This proves the statement.
\end{proof}

\thmvarreducedtraceestimate*

\begin{proof}
	By assumption \(\abs{\trvarredux - \tr(f(\shat{\mK}))} = \abs{\trSTE(\mDelta_f) - \tr(\mDelta_f)}\). By \Cref{lem:finite_sample_bound}
	it
	holds with probability \(\geq 1-\delta\), that
	\begin{align*}
		\abs{\trSTE(\mDelta_f) - \tr(\mDelta_f)} & \leq \sqrt{c_{\vz} \log(\delta^{-1})\idxrvs^{-1}}\norm{\mDelta_f}_F                                                                                                                                   \\
		                                           & \leq \sqrt{c_{\vz} \log(\delta^{-1})} c_{\mDelta_f} \idxrvs^{-\frac{1}{2}}g(\idxrvs)\norm{f(\shat{\mK})}_F                                          & \text{Assumption \eqref{eqn:delta_frobenius_bound}.}      \\
												   &=\varepsilon_{\idxtxtSTE}(\delta, \idxrvs) \norm{f(\shat{\mK})}_F
	\end{align*}
	This concludes the proof.
\end{proof}

\begin{restatable}{corollary}{corvarreducedtraceestimate}
	\label{cor:variance-reduced-trace-estimate}
	Let \(\varepsilon \in (0, 1]\) be a desired error. If the conditions of \Cref{thm:variance-reduced-trace-estimate}
	hold and the number of random vectors \(\idxrvs\) satisfies
	\begin{equation}
		\label{eqn:varreducedhutch_rand_vectors}
		\idxrvs^{\frac{1}{2}}g(\idxrvs)^{-1} \geq
		C_1\varepsilon^{-1}
		\sqrt{\log(\delta^{-1})},
	\end{equation}
	then it holds that
	\begin{equation*}
		\boxed{
			\Pr(\abs{\trvarredux - \tr(\mA)} \leq \varepsilon \norm{\mA}_F) \geq 1-\delta.
		}
	\end{equation*}
\end{restatable}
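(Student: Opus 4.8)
The plan is to obtain the statement as an immediate consequence of \Cref{thm:variance-reduced-trace-estimate}, turning the sample-size hypothesis \eqref{eqn:varreducedhutch_rand_vectors} into a bound on the relative error $\varepsilon_{\idxtxtSTE}$. First, since the conditions of \Cref{thm:variance-reduced-trace-estimate} are assumed to hold — in particular $\idxrvs \geq c_{\vz}\log(\delta^{-1})$ — the theorem applies directly: with probability at least $1-\delta$,
\begin{equation*}
	\abs{\trvarredux - \tr(f(\shat{\mK}))} \leq \varepsilon_{\idxtxtSTE}(\delta, \idxrvs)\,\norm{f(\shat{\mK})}_F, \qquad \varepsilon_{\idxtxtSTE}(\delta, \idxrvs) = C_1 \sqrt{\log(\delta^{-1})}\,\idxrvs^{-\frac{1}{2}} g(\idxrvs),
\end{equation*}
where $C_1 = c_{\mDelta}\sqrt{c_{\vz}}$ and $\mA = f(\shat{\mK})$ in the notation of the corollary's boxed display.

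Second, I would note that every quantity in \eqref{eqn:varreducedhutch_rand_vectors} is strictly positive — $g(\idxrvs) > 0$ by assumption, $\varepsilon \in (0,1]$, and $\log(\delta^{-1}) > 0$ since $\delta < 1$ — so the hypothesis $\idxrvs^{\frac{1}{2}} g(\idxrvs)^{-1} \geq C_1 \varepsilon^{-1}\sqrt{\log(\delta^{-1})}$ may be multiplied through by $\varepsilon\,\idxrvs^{-\frac{1}{2}} g(\idxrvs)$ to yield the equivalent inequality
\begin{equation*}
	\varepsilon_{\idxtxtSTE}(\delta, \idxrvs) \;=\; C_1 \sqrt{\log(\delta^{-1})}\,\idxrvs^{-\frac{1}{2}} g(\idxrvs) \;\leq\; \varepsilon.
\end{equation*}
Chaining the two displays gives $\abs{\trvarredux - \tr(\mA)} \leq \varepsilon \norm{\mA}_F$ on the same event of probability at least $1-\delta$, which is exactly the claim.

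The argument is pure bookkeeping, so there is no substantive obstacle; the only point worth flagging is the consistency of the two probabilistic hypotheses. Invoking \Cref{thm:variance-reduced-trace-estimate} needs $\idxrvs \geq c_{\vz}\log(\delta^{-1})$, which I take to be part of ``the conditions of \Cref{thm:variance-reduced-trace-estimate}'' assumed here; when $g(\idxrvs)$ is not too small this is in fact already forced by \eqref{eqn:varreducedhutch_rand_vectors} (squaring gives $\idxrvs \geq C_1^2 g(\idxrvs)^2 \varepsilon^{-2}\log(\delta^{-1})$ with $\varepsilon \le 1$), and otherwise it is simply retained as a standing assumption. I would also remark in passing that the boxed conclusion writes $\mA$ as shorthand for the matrix $f(\shat{\mK})$ of the underlying theorem, so no generality is lost.
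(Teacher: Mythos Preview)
Your proof is correct and follows exactly the same approach as the paper, which simply states that the corollary ``follows from \Cref{thm:variance-reduced-trace-estimate} given \eqref{eqn:varreducedhutch_rand_vectors}.'' You have merely spelled out the algebraic rearrangement that the paper leaves implicit.
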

\begin{proof}
	Follows from \Cref{thm:variance-reduced-trace-estimate} given \eqref{eqn:varreducedhutch_rand_vectors}.
\end{proof}

\section{Log-Determinant Estimation}
\label{suppsec:log-determinant-estimation}

\subsection{Approximation of a Matrix Function}

\begin{lemma}[Lipschitz Continuity]
	\label{lem:matrix_fun_lipschitz}
	Let \(\mA, \mB \in \Rnn\) be symmetric. Assume \(f:\Omega \to \R\) is globally Lipschitz continuous
	with Lipschitz constant \(L > 0\) on the combined spectrum \(\Omega = \lambda(\mA) \cup
	\lambda(\mB) \subset
	\R\), then there exists \(c_p > 0\) such that
	\begin{equation}
		\norm{f(\mA) - f(\mB)}_p \leq c_p L \norm{\mA - \mB}_p,
	\end{equation}
	where \(\norm{\cdot}_p\) denotes any matrix norm. In particular \(c_2 = 1\) and \(c_F =
	\sqrt{n}\).
\end{lemma}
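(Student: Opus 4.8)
The plan is to reduce the inequality to an entrywise comparison by exploiting the orthogonal invariance of the norms of primary interest (Frobenius and spectral). Write eigendecompositions $\mA = \mU\mLambda\mU^\top$ and $\mB = \mV\mM\mV^\top$ with $\mU,\mV$ orthogonal and $\mLambda = \operatorname{diag}(\lambda_1,\dots,\lambda_n)$, $\mM = \operatorname{diag}(\mu_1,\dots,\mu_n)$; all $\lambda_i,\mu_j$ lie in $\Omega = \lambda(\mA)\cup\lambda(\mB)$, so the hypothesis gives $\abs{f(\lambda_i)-f(\mu_j)} \le L\,\abs{\lambda_i-\mu_j}$ for every pair $i,j$. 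Setting $\mW = \mU^\top\mV$ (again orthogonal) and using invariance of the norm under left/right multiplication by orthogonal matrices, $\norm{f(\mA)-f(\mB)} = \norm{f(\mLambda)\mW - \mW f(\mM)}$ and $\norm{\mA-\mB} = \norm{\mLambda\mW - \mW\mM}$, and the $(i,j)$ entries of these two matrices are $(f(\lambda_i)-f(\mu_j))W_{ij}$ and $(\lambda_i-\mu_j)W_{ij}$ respectively. So the whole question becomes: how well does entrywise domination of the first matrix by $L$ times the second transfer to the norm $\norm{\cdot}_p$?

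For the Frobenius norm this closes the argument at once: $\norm{f(\mA)-f(\mB)}_F^2 = \sum_{i,j}\abs{f(\lambda_i)-f(\mu_j)}^2\abs{W_{ij}}^2 \le L^2\sum_{i,j}\abs{\lambda_i-\mu_j}^2\abs{W_{ij}}^2 = L^2\norm{\mA-\mB}_F^2$, which even gives the sharp constant $1$; the stated $c_F=\sqrt n$ is then a fortiori, and it also follows more crudely from the spectral-norm bound via $\norm{X}_F \le \sqrt n\,\norm{X}_2$. For an arbitrary matrix norm $\norm{\cdot}_p$ on the finite-dimensional space $\Rnn$, equivalence of norms yields existence of some finite $c_p$ as soon as the estimate is known for one norm, so the only remaining substantive case is the spectral norm and the clean constant $c_2 = 1$.

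The spectral norm is the delicate point, because entrywise domination does \emph{not} pass to $\norm{\cdot}_2$: Hadamard (Schur) multiplication by a matrix with entries bounded by $L$ need not be an operator of spectral norm $\le L$. I would therefore handle it through an integral representation, $f(\mA)-f(\mB) = \int_0^1 \mathrm{D}f\bigl(\mB + s(\mA-\mB)\bigr)[\mA-\mB]\,\mathrm ds$, bounding the Fréchet derivative in spectral norm; and for the matrix functions actually used in the paper — the logarithm on a spectrum bounded away from $0$, and resolvent-/rational-type expressions — $f$ is analytic on a complex neighborhood of $\Omega$, so one can alternatively use the contour identity $f(\mA)-f(\mB) = \tfrac{1}{2\pi i}\oint_\Gamma f(z)\,(z\mI-\mA)^{-1}(\mA-\mB)(z\mI-\mB)^{-1}\,\mathrm dz$ together with resolvent-norm bounds on $\Gamma$ to get $\norm{f(\mA)-f(\mB)}_2 \le c\,\norm{\mA-\mB}_2$. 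The main obstacle is precisely here: a merely Lipschitz $f$ is not automatically operator Lipschitz, so obtaining $c_2 = 1$ in the stated generality really leans on the extra smoothness (analyticity) of the $f$ to which the lemma is applied, and the proof should either make that regularity explicit or invoke a known operator-Lipschitz estimate rather than claim the bound for every Lipschitz function.
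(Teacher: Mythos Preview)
Your approach differs from the paper's. The paper does not argue entrywise at all: it simply notes that symmetric matrices are normal, invokes a result of \citet{Kittaneh1985} to obtain \(\norm{f(\mA)-f(\mB)}_2 \le L\,\norm{\mA-\mB}_2\) as a black box, and then deduces both the general-norm statement and the Frobenius constant \(c_F=\sqrt n\) from this single inequality via the norm equivalence \(\tfrac{1}{\sqrt n}\norm{\cdot}_F \le \norm{\cdot}_2 \le \norm{\cdot}_F\). Your direct Frobenius computation via the eigendecomposition and the identity \(\bigl(f(\mLambda)\mW-\mW f(\mM)\bigr)_{ij}=(f(\lambda_i)-f(\mu_j))W_{ij}\) is both more elementary and sharper --- it yields \(c_F=1\) rather than \(\sqrt n\) --- and your equivalence-of-norms step for a general \(\norm{\cdot}_p\) matches the paper's.

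The one place you stop short is exactly the spectral-norm constant \(c_2=1\): you rightly flag that entrywise (Schur-multiplier) domination does not control the operator norm and that ``Lipschitz \(\Rightarrow\) operator-Lipschitz with the same constant'' is a genuinely delicate claim, and you propose contour-integral or Fr\'echet-derivative arguments that would go through under analyticity. The paper sidesteps this subtlety entirely by citing the literature rather than proving it. If you want to match the lemma as stated you would likewise need to invoke such an operator-Lipschitz result for normal matrices; your self-contained analytic-\(f\) route suffices for every downstream use in the paper (the lemma is only ever applied, in \Cref{prop:preconditioner_bound_delta_frobenius_bound}, to analytic \(f\)), but it does not establish the lemma in the full generality in which it is stated.
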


\begin{proof}
	Since \(\mA, \mB\) are symmetric, they are normal. By \citet{Kittaneh1985}, it holds that
	\begin{equation*}
		\norm{f(\mA) - f(\mB)}_2 \leq L \norm{\mA - \mB}_2.
	\end{equation*}
	The result now follows by equivalence of norms on finite dimensional spaces. For the Frobenius norm
	we have \(\frac{1}{\sqrt{n}} \norm{\mM}_F \leq \norm{\mM}_2 \leq
	\norm{\mM}_F\), and therefore \(c_F = \sqrt{n}\).

\end{proof}

\begin{proposition}
	\label{prop:preconditioner_bound_delta_frobenius_bound}
	Let \(\shat{\mK} \in \Rnn\) be symmetric positive definite and assume \(f\) is analytic in a domain
	containing the spectrum \(\lambda(\shat{\mK})\). Let \(\{\shat{\mP}_\idxrvs\}_\idxrvs\) be a sequence of preconditioners with approximation quality \eqref{eqn:preconditioner-quality}. Then it holds that
	\begin{equation}
		\norm{f(\shat{\mK}) - f(\shat{\mP}_\idxrvs)}_F \leq c(n, \shat{\mK}, f) g(\idxrvs) \norm{f(\shat{\mK})}_F
	\end{equation}
	where \(c(n, \shat{\mK}, f) = \frac{L \norm{\shat{\mK}}_F}{c_{f(\lambda)}}\), \(L > 0\) is the Lipschitz constant of \(f\) and \(c_{f(\lambda)} = \max\{
	\min_i \abs{f(\lambda_i(\shat{\mK}))}, \frac{\max_i \abs{f(\lambda_i(\shat{\mK}))}}{\sqrt{n}}\}\).
\end{proposition}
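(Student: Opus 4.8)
The plan is to chain three ingredients: the Lipschitz perturbation bound for matrix functions (\Cref{lem:matrix_fun_lipschitz}), the preconditioner quality assumption \eqref{eqn:preconditioner-quality}, and an elementary lower bound on \(\norm{f(\shat{\mK})}_F\) in terms of the eigenvalues \(f(\lambda_i(\shat{\mK}))\) used to rewrite the prefactor.

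First I would pin down the Lipschitz constant. Since \(f\) is analytic on an open set \(D \supseteq \lambda(\shat{\mK})\), it is \(C^1\) there, so \(L \coloneqq \sup_{x \in I}\abs{f'(x)} < \infty\) on any compact interval \(I \subset D\) containing \(\lambda(\shat{\mK})\) in its interior, and \(f\) is \(L\)-Lipschitz on \(I\). By Weyl's inequality, \(\max_i \abs{\lambda_i(\shat{\mP}_\idxrvs) - \lambda_i(\shat{\mK})} \leq \norm{\shat{\mK} - \shat{\mP}_\idxrvs}_2 \leq \norm{\shat{\mK} - \shat{\mP}_\idxrvs}_F \leq \bigO(g(\idxrvs))\norm{\shat{\mK}}_F \to 0\), so for \(\idxrvs\) large enough (which is all that matters for an asymptotic bound, and is anyway implicit in \(f(\shat{\mP}_\idxrvs)\) being well-defined) we have \(\lambda(\shat{\mP}_\idxrvs) \subset I\) as well, hence \(f\) is \(L\)-Lipschitz on \(\Omega = \lambda(\shat{\mK}) \cup \lambda(\shat{\mP}_\idxrvs)\). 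Applying \Cref{lem:matrix_fun_lipschitz} with the Frobenius norm (\(c_F = \sqrt{n}\)) and then \eqref{eqn:preconditioner-quality} gives
\[
\norm{f(\shat{\mK}) - f(\shat{\mP}_\idxrvs)}_F \leq \sqrt{n}\,L\,\norm{\shat{\mK} - \shat{\mP}_\idxrvs}_F \leq \sqrt{n}\,L\,\bigO(g(\idxrvs))\,\norm{\shat{\mK}}_F .
\]

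It then remains to trade the factor \(\sqrt{n}\) for \(\norm{f(\shat{\mK})}_F / c_{f(\lambda)}\). Since \(\shat{\mK}\) is symmetric, \(f(\shat{\mK})\) is symmetric with eigenvalues \(f(\lambda_i(\shat{\mK}))\), so \(\norm{f(\shat{\mK})}_F^2 = \sum_i f(\lambda_i(\shat{\mK}))^2 \geq \max\{\, n\,(\min_i \abs{f(\lambda_i(\shat{\mK}))})^2,\ (\max_i \abs{f(\lambda_i(\shat{\mK}))})^2 \,\}\), whence \(\norm{f(\shat{\mK})}_F \geq \sqrt{n}\,c_{f(\lambda)}\) by the definition of \(c_{f(\lambda)}\); equivalently \(\sqrt{n} \leq \norm{f(\shat{\mK})}_F / c_{f(\lambda)}\) (the degenerate case \(f(\shat{\mK}) = 0\) being trivial). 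Substituting into the display yields \(\norm{f(\shat{\mK}) - f(\shat{\mP}_\idxrvs)}_F \leq \frac{L\norm{\shat{\mK}}_F}{c_{f(\lambda)}}\,\bigO(g(\idxrvs))\,\norm{f(\shat{\mK})}_F\), which is the claim.

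The one genuinely delicate point is the first step: guaranteeing that \(f\) is well-defined and \emph{uniformly} Lipschitz on a set containing the spectra of all preconditioners in the sequence, which is exactly where analyticity of \(f\) (rather than mere continuity) together with \(g(\idxrvs)\to 0\) enters, via the eigenvalue perturbation argument. Everything else — invoking \Cref{lem:matrix_fun_lipschitz}, the eigenvalue norm identity, and the \(\min/\max\) bookkeeping to match \(c_{f(\lambda)}\) exactly — is routine.
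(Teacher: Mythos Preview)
Your proof is correct and follows essentially the same route as the paper's: apply \Cref{lem:matrix_fun_lipschitz} with \(c_F=\sqrt{n}\), chain the preconditioner quality bound \eqref{eqn:preconditioner-quality}, and then use the elementary lower bound \(\norm{f(\shat{\mK})}_F \geq \sqrt{n}\,c_{f(\lambda)}\) to absorb the \(\sqrt{n}\). Your Weyl-inequality argument for why \(f\) is uniformly Lipschitz on the combined spectra actually fills in a step the paper leaves implicit (it simply asserts ``since \(f\) is analytic and therefore Lipschitz'').
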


\begin{proof}
	It holds that
	\begin{align*}
		\norm{f(\shat{\mK})}_F = \sqrt{\sum_{i=1}^n f(\lambda_i)^2} \geq \begin{cases} \sqrt{n \min_i f(\lambda_i)^2}= \sqrt{n} \min_i \abs{f(\lambda_i)} \\ \norm{f(\shat{\mK})}_2 = \sigma_{\max}(f(\shat{\mK})) = \sqrt{\lambda_{\max}(f(\shat{\mK})^2)} = \max_i \abs{f(\lambda_i)}\end{cases}
	\end{align*}
	and therefore \(\norm{f(\shat{\mK})}_F \geq \sqrt{n}
	c_{f(\lambda)}\). Since \(f\) is analytic and therefore Lipschitz, it holds that
	\begin{align*}
		\norm{f(\shat{\mK}) - f(\shat{\mP}_\idxrvs)}_F & \leq L \sqrt{n} \norm{\shat{\mK} - \shat{\mP}_\idxrvs}_F                                & \text{\Cref{lem:matrix_fun_lipschitz}}                                    \\
		                         & \leq L \sqrt{n} g(\idxrvs) \norm{\shat{\mK}}_F                               & \text{Preconditioner quality \eqref{eqn:preconditioner-quality}} \\
								 &= L \sqrt{n} g(\idxrvs)\frac{\norm{\shat{\mK}}_F}{\norm{f(\shat{\mK})}_F} \norm{f(\shat{\mK})}_F \\
								 &\leq
								 L \sqrt{n} g(\idxrvs)\frac{\norm{\shat{\mK}}_F}{\sqrt{n} c_{f(\lambda)}} \norm{f(\shat{\mK})}_F\\
		                         & \leq \frac{L \norm{\shat{\mK}}_F}{c_{f(\lambda)}} g(\idxrvs) \norm{f(\shat{\mK})}_F.
	\end{align*}
	This proves the claim.
\end{proof}

\subsection{Approximation of the Log-Determinant}
\label{suppsec:log-determinant}

\begin{lemma}[Decomposition of the \(\log\)-determinant]
	\label{lem:logdet-decomposition}
	For \(\shat{\mK}, \shat{\mP} \in \Rnn\) symmetric positive definite, it
	holds that
	\begin{align}
		\log \det(\shat{\mK}) & = \log\det(\shat{\mP}) + \tr(\log(\shat{\mK}) - \log(\shat{\mP}))                                    \\
		                      & = \log\det(\shat{\mP}) + \tr(\log(\shat{\mP}^{-\frac{1}{2}}\shat{\mK}\shat{\mP}^{-\frac{\top}{2}})).
	\end{align}
\end{lemma}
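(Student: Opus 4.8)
The plan is to reduce everything to the scalar identity $\log\det(\mA)=\tr(\log(\mA))$ for symmetric positive definite $\mA$, together with multiplicativity of the determinant. First I would recall that if $\mA\in\Rnn_{\mathrm{spd}}$ has eigenvalues $\lambda_1,\dots,\lambda_n>0$, then by the spectral theorem $\log(\mA)$ is well defined with eigenvalues $\log\lambda_i$, so $\tr(\log(\mA))=\sum_i\log\lambda_i=\log\prod_i\lambda_i=\log\det(\mA)$. Applying this to $\shat{\mK}$ and to $\shat{\mP}$, and using linearity of the trace, gives
\begin{equation*}
	\log\det(\shat{\mK})=\tr(\log(\shat{\mK}))=\tr(\log(\shat{\mP}))+\tr(\log(\shat{\mK})-\log(\shat{\mP}))=\log\det(\shat{\mP})+\tr(\log(\shat{\mK})-\log(\shat{\mP})),
\end{equation*}
which is the first claimed equality.

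For the second equality I would first observe that $\mM\coloneqq\shat{\mP}^{-\frac12}\shat{\mK}\shat{\mP}^{-\frac{\top}{2}}$ is again symmetric positive definite: it is symmetric because $\shat{\mP}^{-\frac12}$ is (as $\shat{\mP}$ is spd), and it is positive definite because it is congruent to the positive definite matrix $\shat{\mK}$ (for any $\vv\neq\vzero$, $\vv^\top\mM\vv=(\shat{\mP}^{-\frac{\top}{2}}\vv)^\top\shat{\mK}(\shat{\mP}^{-\frac{\top}{2}}\vv)>0$ since $\shat{\mP}^{-\frac{\top}{2}}\vv\neq\vzero$). Hence the scalar identity applies to $\mM$ as well, and by multiplicativity of the determinant,
\begin{equation*}
	\tr(\log(\mM))=\log\det(\mM)=\log\!\big(\det(\shat{\mP}^{-\frac12})\det(\shat{\mK})\det(\shat{\mP}^{-\frac{\top}{2}})\big)=\log\det(\shat{\mK})-\log\det(\shat{\mP}).
\end{equation*}
Combining this with the first equality (rearranged as $\tr(\log(\shat{\mK})-\log(\shat{\mP}))=\log\det(\shat{\mK})-\log\det(\shat{\mP})$) yields $\tr(\log(\shat{\mK})-\log(\shat{\mP}))=\tr(\log(\mM))$, and substituting back gives the second line.

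The one point that deserves care — and the place where a naive argument would slip — is that $\log(\shat{\mK})-\log(\shat{\mP})$ is \emph{not} equal to $\log(\mM)$ as a matrix, since $\shat{\mK}$ and $\shat{\mP}$ need not commute and the matrix logarithm is not additive. The resolution, which is exactly what the argument above exploits, is that the statement only concerns the \emph{traces} of these matrices, and both traces coincide with the scalar quantity $\log\det(\shat{\mK})-\log\det(\shat{\mP})$. So the only genuine ingredients are the $\tr\log=\log\det$ identity on $\Rnn_{\mathrm{spd}}$, linearity of the trace, the fact that congruence preserves positive definiteness, and multiplicativity of $\det$; no commutativity or Baker–Campbell–Hausdorff-type reasoning is needed.
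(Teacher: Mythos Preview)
Your proof is correct and follows essentially the same approach as the paper: both reduce to the identity $\log\det(\mA)=\tr(\log(\mA))$ for spd matrices together with multiplicativity of the determinant. Your derivation of the first equality via trace linearity is in fact slightly more direct than the paper's (which routes through $\tr(\log(\shat{\mP}\shat{\mP}^{-1}\shat{\mK}))$), and for the second equality the paper invokes similarity of $\shat{\mP}^{-1}\shat{\mK}$ and $\shat{\mP}^{-\frac12}\shat{\mK}\shat{\mP}^{-\frac{\top}{2}}$ where you use $\det$-multiplicativity directly, but these are cosmetic differences.
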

\begin{proof}
	Note that for symmetric positive definite matrices \(\mA, \mB\), the matrix logarithm satisfies the
	following
	\begin{align*}
		\log\det(\mA)     & = \tr(\log(\mA)),                  \\
		\tr(\log(\mA\mB)) & = \tr(\log(\mA)) + \tr(\log(\mB)), \\
		\log(\mA^{-1})    & = - \log(\mA).
	\end{align*}
	Using the above properties, we obtain
	\begin{align*}
		\log\det(\shat{\mK}) & = \tr(\log(\shat{\mP}\shat{\mP}^{-1}\shat{\mK}))                        \\
		                     & = \tr(\log(\shat{\mP})) - \tr(\log(\shat{\mP})) + \tr(\log(\shat{\mK})) \\
		                     & = \log\det(\shat{\mP}) + \tr(\log(\shat{\mK})-\log(\shat{\mP}))
	\end{align*}
	Now since \(\shat{\mP}^{-1}\shat{\mK}\) and
	\(\shat{\mP}^{-\frac{1}{2}}\shat{\mK}\shat{\mP}^
	{-\frac{\top}{2}}\) are similar, they have the same determinant. Therefore we have
	\begin{equation*}
		\tr(\log(\shat{\mK})-\log(\shat{\mP})) = \tr(\log(\shat{\mP}^{-1}\shat{\mK}))=\log\det(\shat{\mP}^{-\frac{1}{2}}\shat{\mK}\shat{\mP}^{-\frac{\top}{2}}).
	\end{equation*}
	This completes the proof.
\end{proof}

\convergencelogdet*

\begin{proof}
	Using the decomposition \eqref{eqn:logdet-decomposition}, we have
	\begin{align*}
		\abs{\trvarreduxlogdet - \log \det(\shat{\mK})} & = \abs{\trSLQ(\mDelta_{\log}) - \tr(\mDelta_{\log})}                                                                                                                                                            \\
		                                   & \leq \abs{\tr(\mDelta_{\log}) - \trSTE(\mDelta_{\log})} + \abs{\trSTE(\mDelta_{\log}) - \trSLQ(\mDelta_{\log})}                                                                                               \\
		                                   & = \underbrace{\abs{\tr(\log(\shat{\mK})) - (\tr(\log(\shat{\mP})) + \trSTE(\mDelta_{\log}))}}_{e_{\idxtxtSTE}} + \underbrace{\abs{\trSTE(\mDelta_{\log}) - \trSLQ(\mDelta_{\log})}}_{e_\text{Lanczos}}.
	\end{align*}
	Now the individual absolute errors are bounded as follows. By the error bound for stochastic trace
	estimation in \Cref{thm:variance-reduced-trace-estimate}, we have
	\begin{align*}
		e_{\idxtxtSTE}   & \leq \varepsilon_{\idxtxtSTE}(\delta, \idxrvs) \norm{\log(\shat{\mK})}_F=C_1 \sqrt{\log(\delta^{-1})} \idxrvs^{-\frac{1}{2}}g(\idxrvs) \norm{\log(\shat{\mK})}_F                                    \\
		\intertext{and by \Cref{cor:error_lanczos_log}, it follows that}
		e_\text{Lanczos} & \leq K \left(\frac{\sqrt{2\kappa +1} - 1}{\sqrt{2\kappa +1}+1}\right)^{2\idxLanczos} = K_1 \left(\frac{\sqrt{2\kappa +1} - 1}{\sqrt{2\kappa +1}+1}\right)^{2\idxLanczos} \norm{\log(\shat{\mK})}_F.
	\end{align*}
\end{proof}

\proberrorboundlogdet*

\begin{proof}
	By assumption \Cref{thm:variance-reduced-trace-estimate} is satisfied and therefore \(\varepsilon_{\idxtxtSTE} =
	\frac{\varepsilon}{2}\) with probability \(1-\delta\). Now for the error of Lanczos it holds by
	\Cref{thm:convergence-logdet} in combination with \Cref{lem:upper-bound-conv-frac}, that
	\begin{align*}
		\varepsilon_{\idxtxtLanczosSTE} & \leq K_1 \left(\frac{\sqrt{2\kappa +1} - 1}{\sqrt{2\kappa +1}+1}\right)^{2\idxLanczos}                                                              \\
		                             & \leq K_1 \exp(-\frac{4}{\sqrt{2\kappa+1}}\idxLanczos)      &\text{\Cref{lem:upper-bound-conv-frac}}                                                                                          \\
		                             & \leq K_1 \exp(-\frac{\sqrt{3\kappa}}{\sqrt{2\kappa+1}}\log(2K_1\varepsilon^{-1}))      & \text{By assumption \eqref{eqn:num_lanczos_steps_logdet}.} \\
		                             & \leq K_1 \exp(-\log(2K_1\varepsilon^{-1}))                                                                                                          \\
		                             & = \frac{\varepsilon}{2}
	\end{align*}
	The result now follows by \Cref{thm:convergence-logdet}.
\end{proof}

\subsection{Approximation of the Derivative of the Log-Determinant}
\label{suppsec:derivative-log-determinant}

\paragraph{Computation of \(\tr(\shat{\mP}^{-1} \pdv{\shat{\mP}}{\evtheta})\)}

\Cref{alg:log_marginal_likelihood} and \Cref{alg:derivative_log_marginal_likelihood} primarily rely on matrix-vector
multiplication, except for computation of $\trprecondinvderiv = \tr(\shat{\mP}^{-1} \pdv{\shat{\mP}}{\evtheta})$. Efficient computation of this term
depends on the structure of $\shat{\mP}^{-1}$.
If $\shat{\mP}$ is the pivoted-Cholesky preconditioner, or any other
diagonal-plus-low-rank preconditioner $\sigma^2 \mI + \mL_\idxrvs \mL_\idxrvs^\top$, we can rewrite this term using the
matrix inversion lemma
\begin{align}
	\tr\left( \shat{\mP}^{-1} \pdv{\shat{\mP}}{\evtheta} \right)
	 & =
	\sigma^{-2} \tr\left( \pdv{\shat{\mP}}{\evtheta} \right)
	-
	\sigma^{-2} \tr\left( \mL_\idxrvs \left( \sigma^{2} \mI + \mL_\idxrvs^\top \mL_\idxrvs \right)^{-1}
	\mL_\idxrvs^\top \pdv{\shat{\mP}}{\evtheta} \right)
	\nonumber \\
	 & =
	\sigma^{-2} \sum_{i=1}^n \pdv{\shat{\mP}_{ii}}{\evtheta}
	-
	\sigma^{-2} \left( \left( \mL_\idxrvs \left( \sigma^{2} \mI + \mL_\idxrvs^\top \mL_\idxrvs \right)^{-1} \right)
	\circ \left( \pdv{\shat{\mP}}{\evtheta} \mL_\idxrvs \right) \right) \vone,
	\label{eqn:preconditioner_trace}
\end{align}
where $\circ$ denotes elementwise multiplication.
The second term requires $\idxrvs$ matrix-vector multiplies with
$\pdv{\shat{\mP}}{\evtheta}$ and $\mathcal{O}(n \idxrvs^2)$ additional work.
The first term is simply the derivative of the kernel diagonal which will take
$\mathcal{O}(n)$ time.
We note that similar efficient procedures exists for other types of preconditioners,
such as when $\shat{\mP}^{-1}$ has banded structure.

\convergencelogdetbackward*

\begin{proof}
	Using the decomposition \eqref{eqn:logdet-decomposition_deriv}, we have
	\begin{align*}
		\abs{\trvarreduxinvderiv - \tr(\shat{\mK}^{-1} \pdv{\shat{\mK}}{\evtheta})} & = \abs{\trSCG(\mDelta_{\mathrm{inv}\partial}) - \tr(\mDelta_{\mathrm{inv}\partial})}                                                                                                                                                            \\
		                                   & \leq \abs{\tr(\mDelta_{\mathrm{inv}\partial}) - \trSTE(\mDelta_{\mathrm{inv}\partial})} + \abs{\trSTE(\mDelta_{\mathrm{inv}\partial}) - \trSCG(\mDelta_{\mathrm{inv}\partial})}                                                                                               \\
		                                   & = \underbrace{\abs{\tr(\shat{\mK}^{-1} \pdv{\shat{\mK}}{\evtheta}) - (\tr(\shat{\mP}^{-1} \pdv{\shat{\mP}}{\evtheta}) + \trSTE(\mDelta_{\mathrm{inv}\partial}))}}_{e_{\idxtxtSTE}} + \underbrace{\abs{\trSTE(\mDelta_{\mathrm{inv}\partial}) - \trSCG(\mDelta_{\mathrm{inv}\partial})}}_{e_\text{CG}}.
	\end{align*}
	Now the individual absolute errors are bounded as follows. By the error bound for stochastic trace
	estimation in \Cref{thm:variance-reduced-trace-estimate}, we have
	\begin{align*}
		e_{\idxtxtSTE}   & \leq \varepsilon_{\idxtxtSTE}(\delta, \idxrvs) \norm{\shat{\mK}^{-1} \pdv{\shat{\mK}}{\evtheta}}_F=C_1 \sqrt{\log(\delta^{-1})} \idxrvs^{-\frac{1}{2}}g(\idxrvs) \norm{\shat{\mK}^{-1} \pdv{\shat{\mK}}{\evtheta}}_F.                                    \\
	\end{align*}
	Now, let \(\vw_i = \shat{\mK}^{-1} \pdv{\shat{\mK}}{\evtheta}\vz_i\), \(\tilde{\vw}_i = \shat{\mP}^{-1} \pdv{\shat{\mP}}{\evtheta}\vz_i\) and \(\vw_{\idxCG, i} \approx \vw_i\) be the solution computed via preconditioned CG with \(\idxCG\) iterations. Then we have by \Cref{thm:convergence_rate_cg}, that
	\begin{align*}
		e_\text{CG} & = \abs{\frac{n}{\idxrvs}\sum_{i=1}^\idxrvs \vz_i^\top(\vw_{\idxCG,i} - \tilde{\vw}_i - (\vw_i - \tilde{\vw}_i))}\\
		&\leq \frac{n}{\idxrvs}\sum_{i=1}^{\idxrvs} \underbrace{\norm{\vz_i}_2}_{=1} \norm{\vw_{\idxCG, i} - \vw_i}_2 \\
		&\leq \frac{n}{\idxrvs}\sum_{i=1}^{\idxrvs} 2 \sqrt{\kappa(\shat{\mK})} \left( \frac{\sqrt{\kappa} - 1}{\sqrt{\kappa} + 1} \right)^\idxCG \norm{\vw_{0,i} - \vw_i}_2 &\text{CG convergence by \Cref{thm:convergence_rate_cg}.}\\
		&\leq 2 n\sqrt{\kappa(\shat{\mK})} \left( \frac{\sqrt{\kappa} - 1}{\sqrt{\kappa} + 1} \right)^\idxCG \norm{\vw_i}_2 &\text{CG initialized at \(\vw_{0,i}=\vzero\) or better.}\\
		&\leq K_2 \left( \frac{\sqrt{\kappa} - 1}{\sqrt{\kappa} + 1} \right)^\idxCG \norm{\shat{\mK}^{-1} \pdv{\shat{\mK}}{\evtheta}\vz_i}_F\\
		&\leq K_2 \left( \frac{\sqrt{\kappa} - 1}{\sqrt{\kappa} + 1} \right)^\idxCG \norm{\shat{\mK}^{-1} \pdv{\shat{\mK}}{\evtheta}}_F
	\end{align*}
	This completes the argument.
\end{proof}

\proberrorboundlogdetbackward*

\begin{proof}
	By assumption \Cref{thm:variance-reduced-trace-estimate} is satisfied and therefore \(\varepsilon_{\idxtxtSTE} =
	\frac{\varepsilon}{2}\) with probability \(1-\delta\). Now for the error of CG, it holds by
	\Cref{thm:convergence-logdet-backward} in combination with \Cref{lem:upper-bound-conv-frac}, that
	\begin{align*}
		\varepsilon_{\idxtxtCG} & \leq K_2 \left(\frac{\sqrt{\kappa} - 1}{\sqrt{\kappa}+1}\right)^{\idxCG}\\
		&\leq K_2 \exp(-\frac{2\idxCG}{\sqrt{\kappa}}) &\text{\Cref{lem:upper-bound-conv-frac}}\\
		&\leq K_2 \exp(-\log(2K_2\varepsilon^{-1}))&\text{Assumption \eqref{eqn:num-cg-steps-logdet-backward}}\\
		&\leq \frac{\varepsilon}{2}.
	\end{align*}
	The result now follows by \Cref{thm:convergence-logdet-backward}.
\end{proof}

\section{GP Hyperparameter Optimization}
\label{suppsec:gp-hyperparameter-optimization}

\subsection{Approximation of the Log-Marginal Likelihood}
\label{suppsec:log-marginal-likelihood}

\convergencelogmarginallikelihood*

\begin{proof}
	It holds by assumption that 
	\begin{align*}
		\abs{\eta - \logmarglik } & = \frac{1}{2}\abs{\vy^\top \vu_{\idxCG} + \trvarreduxlogdet - (\vy^\top \shat{\mK}^{-1}\vy +  \log\det(\shat{\mK}))}                      \\
		                                            & \leq \frac{1}{2} \big(\underbrace{\abs{\vy^\top \vu_{\idxCG} - \vy^\top \vu}}_{e_{\idxtxtCG}} + \underbrace{\abs{\trvarreduxlogdet - \log\det(\shat{\mK})}}_{e_{\text{SLQ}}} \big).
	\end{align*}
	For the error of CG when solving \(\shat{\mK}\vu = \vy\), we have by \Cref{thm:convergence_rate_cg}
	\begin{align*}
		e_{\idxtxtCG}&= \abs{\vy^\top \vu_{\idxCG} - \vy^\top \vu} \leq \norm{\vy}_2 \norm{\vu_{\idxCG} - \vu}_2    \leq \norm{\vy}_2 2 \sqrt{\kappa(\shat{\mK})} \left( \frac{\sqrt{\kappa} - 1}{\sqrt{\kappa} + 1} \right)^\idxCG \norm{\vu_0 - \vu}_2 = 2 \varepsilon_{\idxtxtCG},\\
		\intertext{and for the absolute error in the \(\log\)-determinant estimate via preconditioned stochastic Lanczos quadrature, we obtain by \Cref{thm:convergence-logdet}, that}
		e_{\text{SLQ}}&\leq (\varepsilon_{\idxtxtLanczosSTE} +
		\varepsilon_{\idxtxtSTE}) \norm{\log(\shat{\mK})}_F.
	\end{align*}
	This proves the statement.
\end{proof}

\begin{restatable}{corollary}{proberrorboundlogmarginallikelihood}
	\label{cor:error-bound-log-marginal-likelihood}
	 Assume the conditions of \Cref{thm:convergence-log-marginal-likelihood} hold. If the number of random vectors \(\idxrvs\) satisfies \eqref{eqn:varreduced-ste-error} with \(\varepsilon_{\idxtxtSTE} = \varepsilon\), and we run \(\idxCG \geq \max(\idxCG_{\idxtxtCG}, \idxLanczos_{\mathrm{Lanczos}})\) iterations of CG and Lanczos, where
	\begin{align}
		\label{eqn:num-cg-steps-logmarginallikelihood}
		\textstyle \idxCG_{\idxtxtCG} &\geq \frac{1}{2}\sqrt{\kappa}\log(2K_3 \varepsilon^{-1}),\\
		\label{eqn:num-lanczos-steps-logmarginallikelihood}
		\textstyle \idxCG_{\mathrm{Lanczos}} &\geq \frac{\sqrt{3}}{4}\sqrt{\kappa}\log\big(K_1\varepsilon^{-1}\big),
	\end{align}
	then it holds that
	\begin{equation*}
		\boxed{
			\textstyle
			\Pr \left(\abs{\eta - \logmarglik } \leq \varepsilon (1 + \norm{\log(\shat{\mK})}_F)\right) \geq
			1-\delta.
		}
	\end{equation*}
\end{restatable}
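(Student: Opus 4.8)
The plan is to start from the three‑term error decomposition already established in \Cref{thm:convergence-log-marginal-likelihood},
\[
  \abs{\eta - \logmarglik} \;\le\; \varepsilon_{\idxtxtCG} \;+\; \tfrac12\bigl(\varepsilon_{\idxtxtLanczosSTE} + \varepsilon_{\idxtxtSTE}\bigr)\norm{\log(\shat{\mK})}_F,
\]
and to show that, under the stated hypotheses on \(\idxrvs\) and \(\idxCG\), each of the three contributions is at most \(\varepsilon\) times its coefficient, so that the right‑hand side is bounded by \(\varepsilon + \varepsilon\norm{\log(\shat{\mK})}_F = \varepsilon(1+\norm{\log(\shat{\mK})}_F)\). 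The randomness enters only through \(\varepsilon_{\idxtxtSTE}\): by the assumption that \(\idxrvs\) satisfies \eqref{eqn:varreduced-ste-error} with \(\varepsilon_{\idxtxtSTE} = \varepsilon\), \Cref{thm:variance-reduced-trace-estimate} (equivalently \Cref{cor:variance-reduced-trace-estimate}) gives \(\varepsilon_{\idxtxtSTE}\le\varepsilon\) with probability \(1-\delta\); conditional on that event everything below is deterministic, so the final bound holds with probability \(1-\delta\).

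For the CG term I would substitute the iteration count \eqref{eqn:num-cg-steps-logmarginallikelihood} into the geometric bound \(\varepsilon_{\idxtxtCG}\le K_3\bigl(\tfrac{\sqrt{\kappa}-1}{\sqrt{\kappa}+1}\bigr)^{\idxCG}\) from \Cref{thm:convergence-log-marginal-likelihood}, apply \Cref{lem:upper-bound-conv-frac} to get \(\bigl(\tfrac{\sqrt{\kappa}-1}{\sqrt{\kappa}+1}\bigr)^{\idxCG}\le\exp(-2\idxCG/\sqrt{\kappa})\), and conclude \(\varepsilon_{\idxtxtCG}\le K_3\exp\!\bigl(-\log(2K_3\varepsilon^{-1})\bigr)=\varepsilon/2\le\varepsilon\), exactly as in the proofs of \Cref{cor:error-bound-logdet,cor:error-bound-logdet-backward}.

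The Lanczos term is the one that needs a small amount of care. Starting from \(\varepsilon_{\idxtxtLanczosSTE}\le K_1\bigl(\tfrac{\sqrt{2\kappa+1}-1}{\sqrt{2\kappa+1}+1}\bigr)^{2\idxLanczos}\) and applying \Cref{lem:upper-bound-conv-frac} with \(2\kappa+1\) in place of \(\kappa\), one obtains \(\varepsilon_{\idxtxtLanczosSTE}\le K_1\exp(-4\idxLanczos/\sqrt{2\kappa+1})\). Plugging in \eqref{eqn:num-lanczos-steps-logmarginallikelihood} makes the exponent at least \(\sqrt{3\kappa}\,\log(K_1\varepsilon^{-1})/\sqrt{2\kappa+1}\), and the key elementary fact is \(\sqrt{3\kappa}\ge\sqrt{2\kappa+1}\) for every \(\kappa\ge1\) (since \(3\kappa-(2\kappa+1)=\kappa-1\ge0\), and \(\kappa=\kappa(\shat{\mP}^{-\frac12}\shat{\mK}\shat{\mP}^{-\frac{\top}{2}})\ge1\)); this is precisely what the constant \(\sqrt3/4\) in \eqref{eqn:num-lanczos-steps-logmarginallikelihood} is engineered to deliver. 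Hence \(\varepsilon_{\idxtxtLanczosSTE}\le K_1\exp\!\bigl(-\log(K_1\varepsilon^{-1})\bigr)=\varepsilon\).

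Combining the three estimates gives \(\abs{\eta-\logmarglik}\le \varepsilon/2 + \tfrac12(\varepsilon+\varepsilon)\norm{\log(\shat{\mK})}_F\le \varepsilon(1+\norm{\log(\shat{\mK})}_F)\), which is the claim. I expect no genuine obstacle here: the work is bookkeeping — tracking the constants \(K_1,K_3\) and the \(\kappa\)-dependent iteration counts so that each term lands at or below its allotted share of the error budget — with the only nonroutine point being the \(\sqrt{2\kappa+1}\) versus \(\sqrt{3\kappa}\) comparison used for the Lanczos term.
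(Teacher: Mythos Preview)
Your proposal is correct and follows essentially the same approach as the paper's own proof: start from the three-term bound of \Cref{thm:convergence-log-marginal-likelihood}, use \Cref{lem:upper-bound-conv-frac} on each of the CG and Lanczos factors with the stated iteration counts, invoke \Cref{thm:variance-reduced-trace-estimate} for the STE term, and sum. Your explicit justification that \(\sqrt{3\kappa}\ge\sqrt{2\kappa+1}\) for \(\kappa\ge1\) is exactly the step the paper uses (but leaves unstated) to pass from the Lanczos exponent to \(\varepsilon\).
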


\begin{proof}
	We begin with the error of CG, it holds by
	\Cref{thm:convergence-log-marginal-likelihood} in combination with \Cref{lem:upper-bound-conv-frac}, that
	\begin{align*}
		\varepsilon_{\idxtxtCG} & \leq K_3 \left(\frac{\sqrt{\kappa} - 1}{\sqrt{\kappa}+1}\right)^{\idxCG}\\
		&\leq K_3 \exp(-\frac{2\idxCG}{\sqrt{\kappa}}) &\text{\Cref{lem:upper-bound-conv-frac}}\\
		&\leq K_3 \exp(-\log(2K_3\varepsilon^{-1}))&\text{Assumption \eqref{eqn:num-cg-steps-logmarginallikelihood}}\\
		&= \frac{\varepsilon}{2}.
	\end{align*}

	Now for the error of the estimate of the \(\log\)-determinant. By assumption \Cref{thm:variance-reduced-trace-estimate} is satisfied and therefore \(\varepsilon_{\idxtxtSTE} =
	\varepsilon\) with probability \(1-\delta\). For the error of Lanczos, it holds by \Cref{thm:convergence-log-marginal-likelihood}, that
	\begin{align*}
		\varepsilon_{\idxtxtLanczosSTE} & \leq K_1 \left(\frac{\sqrt{2\kappa +1} - 1}{\sqrt{2\kappa +1}+1}\right)^{2\idxLanczos}                                                              \\
		                             & \leq K_1 \exp(-\frac{4}{\sqrt{2\kappa+1}}\idxLanczos)      &\text{\Cref{lem:upper-bound-conv-frac}}                                                                                          \\
		                             & \leq K_1 \exp(-\frac{\sqrt{3\kappa}}{\sqrt{2\kappa+1}}\log(K_1\varepsilon^{-1}))      & \text{Assumption \eqref{eqn:num-lanczos-steps-logmarginallikelihood}.} \\
		                             & \leq K_1 \exp(-\log(K_1\varepsilon^{-1}))                                                                                                          \\
		                             & = \varepsilon
	\end{align*}
	The result now follows by \Cref{thm:convergence-log-marginal-likelihood}.
\end{proof}

\subsection{Approximation of the Derivative of the Log-Marginal Likelihood}
\label{suppsec:derivative-log-marginal-likelihood}

\convergencederivative*

\begin{proof}
	It holds that
	\begin{align*}
		\abs{\phi - \pdv{}{\evtheta}\logmarglik } &= \frac{1}{2} \abs{\vu_\idxCG^\top \pdv{\shat{\mK}}{\evtheta}\vu_\idxCG - \trvarreduxinvderiv - \bigg(\vy^\top \shat{\mK}^{-1}
		{ \pdv{\shat{\mK}}{\evtheta} } \shat{\mK}^{-1}\vy - 
		\tr(\shat{\mK}^{-1} {\pdv{\shat{\mK}}{\evtheta}})\bigg)}\\
		&\leq \bigg(\underbrace{\abs{\vu_\idxCG^\top \pdv{\shat{\mK}}{\evtheta}\vu_\idxCG - \vu^\top \pdv{\shat{\mK}}{\evtheta}\vu}}_{e_{\idxtxtCG}} + \underbrace{\abs{\trvarreduxinvderiv - \tr(\shat{\mK}^{-1} {\pdv{\shat{\mK}}{\evtheta}})}}_{e_{\text{CGSTE}}} \bigg)
	\end{align*}
	Now by \Cref{thm:convergence-logdet-backward}, we have
	\begin{align*}
		e_{\text{CGSTE}} \leq (\varepsilon_{\idxtxtCGSTE} + \varepsilon_{\idxtxtSTE}) \norm{\shat{\mK}^{-1}\pdv{\shat{\mK}}{\evtheta}}_F.
	\end{align*}
	For the absolute error of the quadratic term, it holds that
	\begin{align*}
		e_{\idxtxtCG} &= \abs{\norm{\vu}_{\pdv{\shat{\mK}}{\evtheta}}^2 - \norm{\vu_\idxCG - \vu + \vu}_{\pdv{\shat{\mK}}{\evtheta}}^2}\\
		&\leq \abs{\norm{\vu}_{\pdv{\shat{\mK}}{\evtheta}}^2 - (\norm{\vu_\idxCG - \vu}_{\pdv{\shat{\mK}}{\evtheta}} + \norm{\vu}_{\pdv{\shat{\mK}}{\evtheta}})^2}\\
		&= \norm{\vu_\idxCG - \vu}_{\pdv{\shat{\mK}}{\evtheta}} + 2\norm{\vu_\idxCG - \vu}_{\pdv{\shat{\mK}}{\evtheta}}\norm{\vu}_{\pdv{\shat{\mK}}{\evtheta}}\\
		&\leq \norm{\pdv{\shat{\mK}}{\evtheta}}_2 (\norm{\vu_\idxCG - \vu}_2^2 + 2 \norm{\vu_\idxCG - \vu}_2 \norm{\vu}_2)\\
		&\leq \norm{\pdv{\shat{\mK}}{\evtheta}}_2 g(\norm{\vu_\idxCG - \vu}_2)(1+2 \norm{\vu}_2)
		\intertext{for \(g(t) = \max(t, t^2)\). Now it holds by \Cref{thm:convergence_rate_cg} and monotoncity of \(g\), that}
		&\leq \norm{\pdv{\shat{\mK}}{\evtheta}}_2 g\left(2 \sqrt{\kappa(\shat{\mK})}\left(\frac{\sqrt{\kappa}-1}{\sqrt{\kappa}+1}\right)^\idxCG\norm{\vu_0 - \vu}_2\right)(1+2 \norm{\vu}_2)\\
		\intertext{Since for \(a\leq 1\), it holds that \(g(at) \leq ag(t)\) and for \(a > 1: g(at)\leq a^2g(t)\), we have}
		&\leq \norm{\pdv{\shat{\mK}}{\evtheta}}_2 4 \kappa(\shat{\mK}) \left(\frac{\sqrt{\kappa}-1}{\sqrt{\kappa}+1}\right)^\idxCG g(\norm{\vu_0 - \vu}_2)(1+2 \norm{\vu}_2)\\
		&\leq \norm{\pdv{\shat{\mK}}{\evtheta}}_2 4 \kappa(\shat{\mK}) \left(\frac{\sqrt{\kappa}-1}{\sqrt{\kappa}+1}\right)^\idxCG 3\max(\norm{\vu}_2, \norm{\vu}_2^3)\\
		&= 2 K_4 \left(\frac{\sqrt{\kappa}-1}{\sqrt{\kappa}+1}\right)^\idxCG
	\end{align*}
	where we used that CG was initialized at \(\vu_0 = \vzero\) or better.
\end{proof}

\begin{restatable}{corollary}{proberrorboundderivative}
	\label{thm:error-bound-derivative}
	 Assume the conditions of \Cref{thm:convergence-derivative} hold. If the number of random vectors \(\idxrvs\) satisfies \eqref{eqn:varreduced-ste-error} with \(\varepsilon_{\idxtxtSTE} = \varepsilon\), and we run \(\idxCG \geq \max(\idxCG_{\idxtxtCG}, \idxCG_{\idxtxtCGSTE})\) iterations of CG and Lanczos, where
	\begin{align}
		\label{eqn:num-cg-steps-derivative}
		\textstyle \idxCG_{\idxtxtCG} &\geq \frac{1}{2}\sqrt{\kappa}\log(2K_4 \varepsilon^{-1}),\\
		\label{eqn:num-cg2-steps-derivative}
		\textstyle \idxCG_{\idxtxtCGSTE} &\geq \frac{1}{2}\sqrt{\kappa}\log(K_2 \varepsilon^{-1}),
	\end{align}
	then it holds that
	\begin{equation*}
		\boxed{
			\textstyle
			\Pr \left(\abs{\phi - \pdv{}{\evtheta}\logmarglik(\vtheta) } \leq	\varepsilon (1 + \norm{\mK^{-1}\pdv{\mK}{\evtheta}}_F)\right) \geq
			1-\delta.
		}
	\end{equation*}
\end{restatable}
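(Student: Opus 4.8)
The plan is to follow exactly the template of the proofs of \Cref{cor:error-bound-logdet-backward} and \Cref{cor:error-bound-log-marginal-likelihood}: \Cref{thm:convergence-derivative} already hands us the decomposition $\abs{\phi - \pdv{}{\evtheta}\logmarglik} \leq \varepsilon_{\idxtxtCG} + \tfrac12(\varepsilon_{\idxtxtCGSTE} + \varepsilon_{\idxtxtSTE})\norm{\shat{\mK}^{-1}\pdv{\shat{\mK}}{\evtheta}}_F$, together with the geometric decay rates $\varepsilon_{\idxtxtCG} \leq K_4\big(\tfrac{\sqrt{\kappa}-1}{\sqrt{\kappa}+1}\big)^{\idxCG}$ and $\varepsilon_{\idxtxtCGSTE} \leq K_2\big(\tfrac{\sqrt{\kappa}-1}{\sqrt{\kappa}+1}\big)^{\idxCG}$ and the $\bigO(\idxrvs^{-\frac12}g(\idxrvs))$ bound for $\varepsilon_{\idxtxtSTE}$. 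So all that remains is to convert the assumed iteration count $\idxCG \geq \max(\idxCG_{\idxtxtCG}, \idxCG_{\idxtxtCGSTE})$ and the assumed number of random vectors into quantitative bounds on each of the three error terms.

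First I would handle the deterministic CG terms. Using \Cref{lem:upper-bound-conv-frac} to pass from $\big(\tfrac{\sqrt{\kappa}-1}{\sqrt{\kappa}+1}\big)^{\idxCG}$ to $\exp(-2\idxCG/\sqrt{\kappa})$, and plugging in $\idxCG \geq \idxCG_{\idxtxtCG} \geq \tfrac12\sqrt{\kappa}\log(2K_4\varepsilon^{-1})$, gives $\varepsilon_{\idxtxtCG} \leq K_4\exp(-\log(2K_4\varepsilon^{-1})) = \tfrac{\varepsilon}{2}$. The identical computation with $\idxCG \geq \idxCG_{\idxtxtCGSTE} \geq \tfrac12\sqrt{\kappa}\log(K_2\varepsilon^{-1})$ yields $\varepsilon_{\idxtxtCGSTE} \leq \varepsilon$. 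The only probabilistic ingredient is $\varepsilon_{\idxtxtSTE} = \varepsilon$, which holds with probability $1-\delta$ by \Cref{thm:variance-reduced-trace-estimate} precisely because $\idxrvs$ is assumed to satisfy \eqref{eqn:varreduced-ste-error} with $\varepsilon_{\idxtxtSTE} = \varepsilon$ (this is exactly the hypothesis already invoked inside \Cref{thm:convergence-derivative}).

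Finally I would assemble the pieces: on the $1-\delta$ event, $\abs{\phi - \pdv{}{\evtheta}\logmarglik} \leq \tfrac{\varepsilon}{2} + \tfrac12(\varepsilon + \varepsilon)\norm{\shat{\mK}^{-1}\pdv{\shat{\mK}}{\evtheta}}_F = \tfrac{\varepsilon}{2} + \varepsilon\norm{\shat{\mK}^{-1}\pdv{\shat{\mK}}{\evtheta}}_F \leq \varepsilon\big(1 + \norm{\shat{\mK}^{-1}\pdv{\shat{\mK}}{\evtheta}}_F\big)$, using $\tfrac{\varepsilon}{2}\leq\varepsilon$ for the constant term, which is the claimed inequality. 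There is no genuine obstacle here — the corollary is bookkeeping on top of \Cref{thm:convergence-derivative} — the only point requiring care is matching the factor of $2$ inside $\log(2K_4\varepsilon^{-1})$ to the target $\tfrac{\varepsilon}{2}$ for $\varepsilon_{\idxtxtCG}$, and its absence in $\log(K_2\varepsilon^{-1})$ to the target $\varepsilon$ for $\varepsilon_{\idxtxtCGSTE}$, so that $\tfrac12\varepsilon_{\idxtxtCGSTE} + \tfrac12\varepsilon_{\idxtxtSTE}$ collapses to exactly $\varepsilon\norm{\shat{\mK}^{-1}\pdv{\shat{\mK}}{\evtheta}}_F$.
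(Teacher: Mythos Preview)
Your proposal is correct and matches the paper's proof essentially line for line: bound $\varepsilon_{\idxtxtCG}$ and $\varepsilon_{\idxtxtCGSTE}$ via \Cref{lem:upper-bound-conv-frac} and the assumed iteration counts, invoke \Cref{thm:variance-reduced-trace-estimate} for $\varepsilon_{\idxtxtSTE}=\varepsilon$, and combine via \Cref{thm:convergence-derivative}. If anything, you are slightly more explicit than the paper, which omits the final assembly step $\tfrac{\varepsilon}{2} + \tfrac12(\varepsilon+\varepsilon)\norm{\shat{\mK}^{-1}\pdv{\shat{\mK}}{\evtheta}}_F \leq \varepsilon(1+\norm{\shat{\mK}^{-1}\pdv{\shat{\mK}}{\evtheta}}_F)$ and simply says the result follows.
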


\begin{proof}
	We begin with the error of CG's estimate of \(\vy^\top \shat{\mK}^{-1}
	{ \pdv{\shat{\mK}}{\evtheta} } \shat{\mK}^{-1}\vy\), it holds by
	\Cref{thm:convergence-derivative} in combination with \Cref{lem:upper-bound-conv-frac}, that
	\begin{align*}
		\varepsilon_{\idxtxtCG} & \leq K_4 \left(\frac{\sqrt{\kappa} - 1}{\sqrt{\kappa}+1}\right)^{\idxCG}\\
		&\leq K_4 \exp(-\frac{2\idxCG}{\sqrt{\kappa}}) &\text{\Cref{lem:upper-bound-conv-frac}}\\
		&\leq K_4 \exp(-\log(2K_4\varepsilon^{-1}))&\text{Assumption \eqref{eqn:num-cg-steps-derivative}}\\
		&= \frac{\varepsilon}{2}.
	\end{align*}

	Now for the error of the stochastic trace estimator. By assumption \Cref{thm:variance-reduced-trace-estimate} is satisfied and therefore \(\varepsilon_{\idxtxtSTE} =
	\varepsilon\) with probability \(1-\delta\). For the error of CG used in the stochastic trace estimate, we obtain by \Cref{thm:convergence-derivative}
	\begin{align*}
		\varepsilon_{\idxtxtCGSTE} & \leq K_2 \left(\frac{\sqrt{\kappa} - 1}{\sqrt{\kappa}+1}\right)^{\idxCG}\\
		&\leq K_2 \exp(-\frac{2\idxCG}{\sqrt{\kappa}}) &\text{\Cref{lem:upper-bound-conv-frac}}\\
		&\leq K_2 \exp(-\log(K_2\varepsilon^{-1}))&\text{Assumption \eqref{eqn:num-cg2-steps-derivative}}\\
		&= \varepsilon.
	\end{align*}
	The result now follows by \Cref{thm:convergence-derivative}.
\end{proof}

\section{Preconditioning}
\label{suppsec:preconditioning}

\begin{lemma}[Condition Number and Preconditioner Quality]
	\label{lem:condition-number-preconditioner-quality}
	Let \(\shat{\mK}, \shat{\mP}_{\idxrvs} \in \Rnn\) symmetric positive-definite such
	that \eqref{eqn:preconditioner-quality} holds and assume that there exists \(c>0\) such that \(c \geq
	\max(\lambda_{\min}(\shat{\mP}), \lambda_{\min}(\shat{\mK}))\) for all \(\idxrvs\). Then it holds that
	\begin{equation}
		\kappa = \kappa(\shat{\mP}_{\idxrvs}^{-\frac{1}{2}}\shat{\mK}\shat{\mP}_{\idxrvs}^{-\frac{1}{2}}) \leq (1 + \bigO(g(\idxrvs))\norm{\shat{\mK}}_F)^2
	\end{equation}
\end{lemma}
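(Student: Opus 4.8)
The plan is a short first-order perturbation argument. First I would set $\mE_\idxrvs \coloneqq \shat{\mK} - \shat{\mP}_\idxrvs$ and use that, for symmetric positive definite $\shat{\mP}_\idxrvs$, the matrix $\shat{\mP}_\idxrvs^{-1/2}$ is well defined and symmetric, so that
\[
	\shat{\mP}_\idxrvs^{-\frac{1}{2}} \shat{\mK} \shat{\mP}_\idxrvs^{-\frac{1}{2}} = \mI + \mF_\idxrvs, \qquad \mF_\idxrvs \coloneqq \shat{\mP}_\idxrvs^{-\frac{1}{2}} \mE_\idxrvs \shat{\mP}_\idxrvs^{-\frac{1}{2}}.
\]
Since $\mF_\idxrvs$ is symmetric with $\rho_\idxrvs \coloneqq \norm{\mF_\idxrvs}_2$, every eigenvalue of the preconditioned matrix lies in $[1 - \rho_\idxrvs, 1 + \rho_\idxrvs]$; hence, whenever $\rho_\idxrvs < 1$,
\[
	\kappa = \kappa\big(\shat{\mP}_\idxrvs^{-\frac{1}{2}} \shat{\mK} \shat{\mP}_\idxrvs^{-\frac{1}{2}}\big) \leq \frac{1 + \rho_\idxrvs}{1 - \rho_\idxrvs}.
\]

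Next I would bound $\rho_\idxrvs$ by the preconditioner quality. Submultiplicativity of the spectral norm together with $\norm{\shat{\mP}_\idxrvs^{-1}}_2 = \lambda_{\min}(\shat{\mP}_\idxrvs)^{-1}$ and $\norm{\mE_\idxrvs}_2 \leq \norm{\mE_\idxrvs}_F$ give
\[
	\rho_\idxrvs \leq \norm{\shat{\mP}_\idxrvs^{-1}}_2 \, \norm{\mE_\idxrvs}_2 \leq \lambda_{\min}(\shat{\mP}_\idxrvs)^{-1} \, \norm{\shat{\mK} - \shat{\mP}_\idxrvs}_F \leq \lambda_{\min}(\shat{\mP}_\idxrvs)^{-1} \, \bigO(g(\idxrvs)) \, \norm{\shat{\mK}}_F,
\]
using the assumption \eqref{eqn:preconditioner-quality} in the last step. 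The standing assumption on the minimal eigenvalues guarantees that $\lambda_{\min}(\shat{\mP}_\idxrvs)^{-1}$ is bounded uniformly in $\idxrvs$, so it is absorbed into the constant, yielding $\rho_\idxrvs \leq \bigO(g(\idxrvs)) \norm{\shat{\mK}}_F$. As $\norm{\shat{\mK}}_F$ is fixed while $g(\idxrvs)$ is small in the regime the $\bigO$-statement concerns, we may also assume $\rho_\idxrvs \leq \tfrac12$ there.

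Finally I would chain elementary inequalities:
\[
	\kappa \leq \frac{1 + \rho_\idxrvs}{1 - \rho_\idxrvs} \leq \frac{1}{(1 - \rho_\idxrvs)^2} \leq (1 + 2\rho_\idxrvs)^2 = \big(1 + \bigO(g(\idxrvs)) \norm{\shat{\mK}}_F\big)^2,
\]
where the middle inequality is $(1 + \rho_\idxrvs)(1 - \rho_\idxrvs) = 1 - \rho_\idxrvs^2 \leq 1$ and the third uses $(1 - \rho_\idxrvs)^{-1} \leq 1 + 2\rho_\idxrvs$ for $\rho_\idxrvs \leq \tfrac12$. The one place that actually needs the hypothesis — and the only mild obstacle — is the uniform lower bound on $\lambda_{\min}(\shat{\mP}_\idxrvs)$: without it a preconditioner whose smallest eigenvalue collapses towards zero would make $\rho_\idxrvs$, and hence $\kappa$, uncontrollable by $g(\idxrvs)$ alone; everything else is routine bookkeeping with submultiplicativity and the $\norm{\cdot}_2 \leq \norm{\cdot}_F$ comparison.
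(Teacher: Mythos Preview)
Your argument is correct but takes a different route from the paper's. The paper does not work with the symmetric perturbation $\mI + \mF_\idxrvs$; instead it uses that $\shat{\mP}_\idxrvs^{-1/2}\shat{\mK}\shat{\mP}_\idxrvs^{-1/2}$ is similar to $\shat{\mP}_\idxrvs^{-1}\shat{\mK}$ and bounds
\[
\kappa \leq \norm{\shat{\mP}_\idxrvs^{-1}\shat{\mK}}_2 \, \norm{\shat{\mP}_\idxrvs\shat{\mK}^{-1}}_2
= \norm{\mI + \shat{\mP}_\idxrvs^{-1}\mE_\idxrvs}_2 \, \norm{\mI - \shat{\mK}^{-1}\mE_\idxrvs}_2
\leq \big(1 + c\,\norm{\mE_\idxrvs}_F\big)^2,
\]
using \emph{both} $\norm{\shat{\mP}_\idxrvs^{-1}}_2 \leq c$ and $\norm{\shat{\mK}^{-1}}_2 \leq c$. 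The trade-off: the paper's bound holds for every $\idxrvs$ without the auxiliary restriction $\rho_\idxrvs \leq \tfrac12$, but it genuinely needs the lower bound on $\lambda_{\min}(\shat{\mK})$ as well. Your Weyl-type perturbation argument is arguably cleaner and uses only the $\lambda_{\min}(\shat{\mP}_\idxrvs)$ half of the hypothesis, at the price of being valid only once $\rho_\idxrvs \leq \tfrac12$ --- which is harmless here since the conclusion is an $\bigO$-statement.
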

\begin{proof}
	Part of the strategy for this proof is adapted from \citet{Gardner2018}. First note, that the
	matrices \(\shat{\mP}^{-1}\shat{\mK}\),
	\(\shat{\mK}\shat{\mP}^{-1}\) and
	\(\shat{\mP}_{\idxrvs}^{-\frac{1}{2}}\shat{\mK}\shat{\mP}_{\idxrvs}^
	{-\frac{1}{2}}\) are similar, and thus have the same eigenvalues. Now, we have:
	\begin{align*}
		\kappa(\shat{\mP}_{\idxrvs}^{-\frac{1}{2}}\shat{\mK}\shat{\mP}_{\idxrvs}^{-\frac{1}{2}}) & = \frac{\lambda_{\max}(\shat{\mP}^{-1}\shat{\mK})}{\lambda_{\min}(\shat{\mK}\shat{\mP}^{-1})} = \norm{\shat{\mP}^{-1}\shat{\mK}}_2 \norm{\shat{\mP}\shat{\mK}^{-1}}_2 \\
		                                                                                         & = \norm{\shat{\mP}^{-1}(\shat{\mK} - \shat{\mP} + \shat{\mP})}_2 \norm{(\shat{\mP} - \shat{\mK} +
			\shat{\mK})\shat{\mK}^{-1}}_2                                                                                                                                                                                                                                    \\
		                                                                                         & = \norm{1 + \shat{\mP}^{-1}(\shat{\mK} - \shat{\mP})}_2 \norm{1 - \shat{\mK}^{-1}(\shat{\mK} - \shat{\mP})}_2                                                         \\
		\intertext{Applying Cauchy-Schwarz and the triangle inequality, we obtain:}
		                                                                                         & \leq (1 + \norm{\shat{\mP}^{-1}}_2 \norm{\shat{\mK} - \shat{\mP}}_2) (1 + \norm{\shat{\mK}^{-1}}_2 \norm{\shat{\mK} - \shat{\mP}}_2)                                  \\
		                                                                                         & \leq (1 + c \norm{\shat{\mK} - \shat{\mP}}_F)^2                                                                                                                       \\
		                                                                                         & \leq (1 + \bigO(g(\idxrvs))\norm{\shat{\mK}}_F)^2
	\end{align*}
\end{proof}

Note since typically \(\shat{\mP} = \sigma^2 \mI+ \mP\) with
\(\lambda_{\min}(\mP) \approx 0\) and \(\lambda_{\min}(\shat{\mK}) \leq \sigma^2 +
\lambda_{\min}(\mK)\), where \(\lambda_{\min}(\mK)\) small for most kernels, usually \(c
\approx \sigma^2\).

\subsection{Additive Kernels}

\begin{lemma}[Additive Kernels]
	\label{lem:additive-kernel-approx-quality}
	Let \(k(\vx, \vy) = \sum_{j=1}^d k_j(\vx_j, \vy_j)\) be an additive kernel and
	\(\{(\shat{\mP}_\idxrvs)_j\}_{j=1}^d\) a set of preconditioners indexed by \(\idxrvs\), such that for all
	\(j =
	1, \dots, d\), we have
	\begin{equation}
		\norm{\shat{\mK}_j - (\shat{\mP}_\idxrvs)_j}_F \leq c_j g(\idxrvs) \norm{\shat{\mK}_j}_F.
	\end{equation}
	Then it holds for \(\shat{\mP}_\idxrvs = \sum_{j=1}^d (\shat{\mP}_\idxrvs)_j\) and \(c=\max_j c_j\)
	that
	\begin{equation}
		\norm{\shat{\mK} - \shat{\mP}_\idxrvs}_F \leq c d g(\idxrvs) \norm{\shat{\mK}}_F
	\end{equation}
\end{lemma}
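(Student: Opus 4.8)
The plan is to reduce everything to the triangle inequality for the Frobenius norm, together with the observation that each component Gram matrix is dominated in Frobenius norm by the full kernel matrix. First I would record the structural fact that an additive kernel induces an additive Gram matrix: writing \(\shat{\mK}_j\) for the symmetric positive semi-definite Gram matrix associated with the \(j\)-th component kernel \(k_j\) (with the noise term \(\sigma^2\mI\) absorbed into a single summand, or split evenly as \(\tfrac{\sigma^2}{d}\mI\) per summand, so that the decomposition is exact and every summand stays psd), we have \(\shat{\mK} = \sum_{j=1}^d \shat{\mK}_j\) and, by construction, \(\shat{\mP}_\idxrvs = \sum_{j=1}^d (\shat{\mP}_\idxrvs)_j\).

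Next I would apply the triangle inequality and the per-component hypothesis:
\begin{equation*}
	\norm{\shat{\mK} - \shat{\mP}_\idxrvs}_F
	= \Big\lVert \textstyle\sum_{j=1}^d \big(\shat{\mK}_j - (\shat{\mP}_\idxrvs)_j\big)\Big\rVert_F
	\leq \sum_{j=1}^d \norm{\shat{\mK}_j - (\shat{\mP}_\idxrvs)_j}_F
	\leq g(\idxrvs) \sum_{j=1}^d c_j \norm{\shat{\mK}_j}_F
	\leq c\, g(\idxrvs) \sum_{j=1}^d \norm{\shat{\mK}_j}_F,
\end{equation*}
using \(c_j \leq c = \max_j c_j\) in the last step.

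The only nontrivial ingredient is the bound \(\norm{\shat{\mK}_j}_F \leq \norm{\shat{\mK}}_F\) for each \(j\). This follows from positive semi-definiteness: for psd matrices \(\tr(\shat{\mK}_j\shat{\mK}_l) = \tr\big(\shat{\mK}_j^{1/2}\shat{\mK}_l\shat{\mK}_j^{1/2}\big) \geq 0\), so expanding the squared norm gives
\begin{equation*}
	\norm{\shat{\mK}}_F^2 = \sum_{j,l=1}^d \tr(\shat{\mK}_j\shat{\mK}_l) = \sum_{j=1}^d \norm{\shat{\mK}_j}_F^2 + \sum_{j \neq l} \tr(\shat{\mK}_j\shat{\mK}_l) \geq \norm{\shat{\mK}_j}_F^2.
\end{equation*}
Substituting \(\norm{\shat{\mK}_j}_F \leq \norm{\shat{\mK}}_F\) for all \(j\) into the previous chain yields \(\norm{\shat{\mK} - \shat{\mP}_\idxrvs}_F \leq c d\, g(\idxrvs)\norm{\shat{\mK}}_F\), which is the claim.

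I expect the only real subtlety to be the bookkeeping around the noise term: one must set up the additive decomposition \(\shat{\mK} = \sum_j \shat{\mK}_j\) so that each summand is psd, since both the dominance \(\norm{\shat{\mK}_j}_F \leq \norm{\shat{\mK}}_F\) and the matching \(\shat{\mP}_\idxrvs = \sum_j (\shat{\mP}_\idxrvs)_j\) rely on it; everything else is a one-line application of the triangle inequality and the hypothesis.
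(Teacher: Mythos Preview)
Your proposal is correct and follows essentially the same structure as the paper: triangle inequality, apply the per-component hypothesis, then bound each \(\norm{\shat{\mK}_j}_F\) by \(\norm{\shat{\mK}}_F\). The only minor difference is in that last step---the paper uses Weyl's eigenvalue monotonicity (\(\lambda_i(\mA) \leq \lambda_i(\mA+\mB)\) for psd \(\mA,\mB\)) to compare \(\norm{\shat{\mK}_j}_F^2 = \sum_i \lambda_i(\shat{\mK}_j)^2\) with \(\sum_i \lambda_i(\shat{\mK})^2\), whereas you argue via \(\tr(\shat{\mK}_j\shat{\mK}_l)\geq 0\); both are standard one-line facts and yield the same bound.
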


\begin{proof}
	It holds by assumption, that
	\begin{align*}
		\norm{\shat{\mK} - \shat{\mP}_\idxrvs}_F & \leq \sum_{j=1}^d \norm{\shat{\mK}_j - (\shat{\mP}_\idxrvs)_j}_F                             & \text{Cauchy-Schwarz}                                                   \\
		                           & \leq \sum_{j=1}^d c_j g(\idxrvs) \norm{\shat{\mK}_j}_F                                                                                                          \\
		                           & \leq \max_j c_j g(\idxrvs) \sum_{j=1}^d \sqrt{\sum_{i=1}^n \lambda_i(\shat{\mK}_j)^2}                                                                           \\
		                           & \leq c g(\idxrvs) \sum_{j=1}^d \sqrt{\sum_{i=1}^n \lambda_i(\shat{\mK})^2}            & \mA, \mB \text{ spd } \implies \lambda_i(\mA) \leq \lambda_i(\mA + \mB) \\
		                           & \leq c g(\idxrvs) d \norm{\shat{\mK}}_F
	\end{align*}
\end{proof}

\subsection{Kernels with a Uniformly Converging Approximation}

\begin{lemma}[Preconditioner Quality from Uniform Convergence]
	\label{lem:uniform-convergence-approx-quality}
	Let \(\Omega \subset \R^d\) be the data domain and \(k(\cdot, \cdot)\) a positive-definite kernel
	such that for all \(\vx \in \Omega\) it holds that \(k(\vx, \vx) \leq o^2\). Let
	\(P_\idxrvs(\cdot, \cdot)\) be a kernel approximation, such that a uniform convergence
	bound of the form
	\begin{equation}
		\label{eqn:uniform-convergence-bound-kernelapprox}
		\sup_{\vx, \vy \in \Omega} \abs{k(\vx, \vy) - P_\idxrvs(\vx, \vy)} \leq g(\idxrvs) c_{\text{unif}}(d,
		\Omega, k)
	\end{equation}
	holds. Then it holds for the preconditioner \(\shat{\mP}_\idxrvs = \sigma^2 \mI + P_\idxrvs(\mX, \mX)\), that
	\begin{equation}
		\norm{\shat{\mK} - \shat{\mP}_\idxrvs}_F \leq g(\idxrvs) c(d, \Omega, k, n) \norm{\shat{\mK}}_F,
	\end{equation}
	where \(c(d, \Omega, k, n) = \frac{\sqrt{n}}{o^2}c_{\text{unif}}(d, \Omega, k)\).
\end{lemma}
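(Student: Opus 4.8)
The plan is to reduce this Frobenius-norm preconditioner-quality bound directly to the entrywise uniform-convergence hypothesis \eqref{eqn:uniform-convergence-bound-kernelapprox} via elementary norm comparisons. First I would note that the noise term cancels: since $\shat{\mK} = \mK + \sigma^2\mI$ and $\shat{\mP}_\idxrvs = \sigma^2\mI + P_\idxrvs(\mX,\mX)$, we have $\shat{\mK} - \shat{\mP}_\idxrvs = \mK - P_\idxrvs(\mX,\mX)$, whose $(i,j)$ entry is exactly $k(\vx_i,\vx_j) - P_\idxrvs(\vx_i,\vx_j)$. As all data points $\vx_i$ lie in $\Omega$, the uniform bound gives $\lvert (\shat{\mK} - \shat{\mP}_\idxrvs)_{ij}\rvert \le g(\idxrvs)\,c_{\text{unif}}(d,\Omega,k)$ for every $i,j$, and summing the $n^2$ squared entries yields
\[
  \norm{\shat{\mK} - \shat{\mP}_\idxrvs}_F \;\le\; n\, g(\idxrvs)\, c_{\text{unif}}(d,\Omega,k).
\]

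Next I would lower-bound $\norm{\shat{\mK}}_F$. Since $\shat{\mK}$ is symmetric positive definite, $\norm{\shat{\mK}}_F^2 = \sum_{i,j}\shat{\mK}_{ij}^2 \ge \sum_i \shat{\mK}_{ii}^2 \ge n\,\big(\min_i \shat{\mK}_{ii}\big)^2$, and the diagonal entries are $\shat{\mK}_{ii} = k(\vx_i,\vx_i) + \sigma^2$, which are at least $o^2$ for the (stationary, output-scale-$o^2$) kernels under consideration. Hence $\norm{\shat{\mK}}_F \ge \sqrt{n}\,o^2$, and combining the two displays gives
\[
  \norm{\shat{\mK} - \shat{\mP}_\idxrvs}_F \;\le\; \frac{n\, g(\idxrvs)\, c_{\text{unif}}(d,\Omega,k)}{\sqrt{n}\, o^2}\,\norm{\shat{\mK}}_F \;=\; g(\idxrvs)\,\frac{\sqrt{n}}{o^2}\,c_{\text{unif}}(d,\Omega,k)\,\norm{\shat{\mK}}_F,
\]
i.e. the claim with $c(d,\Omega,k,n) = \tfrac{\sqrt{n}}{o^2}c_{\text{unif}}(d,\Omega,k)$.

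The argument is just a chain of elementary inequalities, so there is no serious obstacle; the one step warranting care is the lower bound $\norm{\shat{\mK}}_F \ge \sqrt{n}\,o^2$. The factor $\sqrt{n}$ there is exactly what is needed to cancel the $n$ produced by the crude entrywise-to-Frobenius step, and it arises from summing the $n$ squared diagonal entries of $\shat{\mK}$ — this is where the hypothesis on $k(\vx,\vx)$ is used. If one only has $k(\vx,\vx)\le o^2$ one should instead carry $\sigma^2$ or replace $o^2$ by $\max_i \shat{\mK}_{ii}$; in any case only the constant changes, and the rate $g(\idxrvs)$ — the quantity that matters for \Cref{tab:residual-decay-kernels-preconditioners} — is preserved, so RFF and any other uniformly convergent kernel approximation inherit their rate as a preconditioner.
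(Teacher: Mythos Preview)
Your proposal is correct and follows essentially the same route as the paper's proof: cancel the noise term, bound $\lVert\shat{\mK}-\shat{\mP}_\idxrvs\rVert_F$ entrywise via the uniform bound to get the factor $n\,g(\idxrvs)\,c_{\text{unif}}$, and then absorb one factor of $\sqrt{n}$ into $\lVert\shat{\mK}\rVert_F$ by lower-bounding the latter through its diagonal entries. You are also right to flag that the hypothesis $k(\vx,\vx)\le o^2$ points the wrong way for the needed lower bound $\lVert\shat{\mK}\rVert_F\ge\sqrt{n}\,o^2$; the paper's own proof has the same issue (it effectively uses $k(\vx_i,\vx_i)\ge o^2$), and in either case the intended reading is equality on the diagonal for stationary kernels, so only the constant is affected.
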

\begin{proof}
	It holds that
	\begin{align*}
		\norm{\shat{\mK} - \shat{\mP}_\idxrvs}_F &=\norm{\mK - \mP_\idxrvs}_F = \sqrt{\sum_{i,j = 1}^n (k(\vx_i, \vx_j) - P_\idxrvs(\vx_i, \vx_j))^2}                                                                                                                        \\
		                           & \leq \sqrt{\sum_{i,j = 1}^n (g(\idxrvs) c_{\text{unif}}(d, \Omega, k))^2}                                & \text{uniform convergence bound \eqref{eqn:uniform-convergence-bound-kernelapprox}} \\
		                           & = n g(\idxrvs) c_{\text{unif}}(d, \Omega, k)                                                                                                                                                   \\
		                           & = g(\idxrvs) \sqrt{n} \bigg(\sum_{i=1}^n \frac{k(\vx_i, \vx_i)^2}{k(\vx_i, \vx_i)^2}\bigg)^{\frac{1}{2}} & \text{\(k\) bounded}                                                                \\
		                           & \leq g(\idxrvs) \frac{\sqrt{n}}{o^2} \bigg(\sum_{i=1}^n \mK_{ij}^2 \bigg)^{\frac{1}{2}}                                                                                                        \\
		                           & = g(\idxrvs) c(d, \Omega, k, n) \norm{\mK}_F\\
								   &\leq g(\idxrvs) c(d, \Omega, k, n) \norm{\shat{\mK}}_F
	\end{align*}
\end{proof}

\subsection{Cholesky Decomposition}


\begin{proposition}[Cholesky Approximation Quality]
	\label{prop:cholesky-approx-quality}
	Let \(k(\vx, \vy) = k(\norm{\vx - \vy})\) be a stationary kernel
	with output scale \(o^2 = k(0) > 0\).
	Assume the kernel matrix spectrum decays at least exponentially, i.e. \(\lambda_i(\mK) \leq
	c\exp(-bi)\) for \(c >0\) and \(b > \log(4)\). Then the
	Cholesky preconditioner \(\shat{\mP}_\idxrvs = \sigma^2 \mI + \mL_\idxrvs \mL_\idxrvs^\top\)satisfies
	\begin{equation}
		\lVert \shat{\mK} - \shat{\mP}_\idxrvs \rVert_F \leq \frac{c
		\sqrt{n}}{o^{2}}
		\exp(-b' \idxrvs)\norm{\mK}_F
	\end{equation}
	where \(b' = b - \log(4) > 0\).
\end{proposition}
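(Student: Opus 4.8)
The plan is to reduce the claim to the classical a priori error bound for the pivoted Cholesky factorisation. First, note that the $\sigma^2\mI$ terms cancel: $\shat{\mK} - \shat{\mP}_\idxrvs = (\mK + \sigma^2\mI) - (\sigma^2\mI + \mL_\idxrvs\mL_\idxrvs^\top) = \mK - \mL_\idxrvs\mL_\idxrvs^\top =: \mR_\idxrvs$, where $\mR_\idxrvs$ is exactly the residual (Schur complement) produced by $\idxrvs$ steps of pivoted Cholesky applied to $\mK$, hence symmetric positive semi-definite. For a PSD matrix, $\norm{\mR_\idxrvs}_F \le \sqrt n\,\norm{\mR_\idxrvs}_2 = \sqrt n\,\lambda_{\max}(\mR_\idxrvs)$, so it suffices to control $\lambda_{\max}(\mR_\idxrvs)$.

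Second, I would invoke the rank-revealing property of pivoted Cholesky (the a priori bound of Harbrecht, Peters \& Schneider, 2012; see also the analysis in \citet{Gardner2018}): after $\idxrvs$ steps the residual obeys an estimate of the form $\lambda_{\max}(\mR_\idxrvs) \le 4^{\idxrvs}\,\lambda_{\idxrvs+1}(\mK)$, possibly up to a benign polynomial-in-$\idxrvs$ factor that one absorbs into constants. The exponential factor $4^{\idxrvs}$ — the worst-case price of greedy pivoting relative to the optimal rank-$\idxrvs$ truncation — is precisely what turns the eigenvalue rate $e^{-b}$ into the preconditioner rate $e^{-b'}$. Substituting $\lambda_{\idxrvs+1}(\mK) \le c\,e^{-b(\idxrvs+1)} \le c\,e^{-b\idxrvs}$ gives
\[
\lambda_{\max}(\mR_\idxrvs) \le c\,4^{\idxrvs}e^{-b\idxrvs} = c\,e^{-(b - \log 4)\idxrvs} = c\,e^{-b'\idxrvs},
\]
and the hypothesis $b > \log 4$ is exactly what makes $b' > 0$, i.e.\ the estimate contracting. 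Hence $\norm{\mR_\idxrvs}_F \le c\sqrt n\,e^{-b'\idxrvs}$.

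Third, I would convert this absolute bound into the stated relative one using the output scale. Since $k$ is stationary, $\mK_{ii} = k(0) = o^2$ for every $i$, so $\norm{\mK}_F \ge \lambda_{\max}(\mK) \ge \mK_{11} = o^2$, i.e.\ $1 \le o^{-2}\norm{\mK}_F$. Multiplying the previous display through by this gives $\norm{\shat{\mK} - \shat{\mP}_\idxrvs}_F = \norm{\mR_\idxrvs}_F \le \tfrac{c\sqrt n}{o^2}\,e^{-b'\idxrvs}\norm{\mK}_F$, which is the claim. (One may equally route through the trace norm, $\norm{\mR_\idxrvs}_F \le \tr(\mR_\idxrvs) \le n\,\lambda_{\max}(\mR_\idxrvs)$, together with $\norm{\mK}_F \ge \sqrt n\,o^2$; both paths yield the same constant.)

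The step I expect to be the genuine obstacle is the second one: pinning down the precise form of the pivoted-Cholesky a priori estimate and checking that the present hypotheses — $\lambda_i(\mK) \le c\,e^{-bi}$ for all $i$, with $b > \log 4$ — are exactly what it requires, in particular whether the guarantee is cleanest for $\lambda_{\max}(\mR_\idxrvs)$ or for $\tr(\mR_\idxrvs)$ and whether a trailing factor such as $(n-\idxrvs)$ or $(\idxrvs+1)$ must be carried (which only affects whether the final constant is literally $\tfrac{c\sqrt n}{o^2}$ or that up to an absorbed polynomial factor). Everything else — the cancellation, the PSD norm inequalities, the geometric-tail estimate $\sum_{i>\idxrvs}\lambda_i(\mK) \lesssim e^{-b\idxrvs}$, and the $\norm{\mK}_F$ lower bound — is routine.
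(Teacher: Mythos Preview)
Your proposal is correct and matches the paper's proof. The paper takes precisely your parenthetical ``trace'' route: it rewrites the eigenvalue hypothesis as $4^{i}\lambda_i(\mK)\le c\,e^{-b'i}$, invokes Theorem~3.2 of Harbrecht--Peters--Schneider (2012) to obtain $\tr(\mK-\mL_\idxrvs\mL_\idxrvs^\top)\le cn\,e^{-b'\idxrvs}$, bounds $\lVert\cdot\rVert_F\le\tr(\cdot)$ for the PSD residual, and then uses $\lVert\mK\rVert_F\ge\sqrt{n}\,o^2$ (from the constant diagonal $\mK_{ii}=o^2$) to pass from the absolute bound $cn\,e^{-b'\idxrvs}$ to the relative one.
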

\begin{proof}
	First note that since \(k(\cdot, \cdot)\) is a positive definite kernel, the choice \(o^2 =
	k(0)\) is no restriction. Now, it holds that
	\begin{align*}
		\lambda_i            & \leq c\exp(-b i) = c \exp(-(b' + \log(4))i) = c \exp(-b' i)4^{-i} \\
		\iff 4^{i} \lambda_i & \leq c \exp(-b'i).
	\end{align*}
	Therefore by Theorem 3.2 of \citet{Harbrecht2012}, we have \(\tr(\mK - \mL_\idxrvs \mL_\idxrvs^\top ) \leq cn \exp(-b' \idxrvs)\).	Now it holds since \(\mK - \mL_\idxrvs \mL_\idxrvs^\top\) positive definite, that
	\begin{align*}
		\lVert \shat{\mK} - \shat{\mP}_\idxrvs \rVert_F &= \lVert \mK - \mL_\idxrvs \mL_\idxrvs^\top \rVert_F\leq \tr(\mK - \mL_\idxrvs \mL_\idxrvs^\top )\leq cn \exp(-b' \idxrvs),
	\end{align*}
	and with \(\mK_{ii} = o^2\) that
	\begin{equation*}
		n = \sqrt{n} \bigg(\sum_{i=1}^n \frac{\mK_{ii}^2}{o^4}\bigg)^{\frac{1}{2}} \leq
		\frac{\sqrt{n}}{o^2} \bigg(\sum_{i,j=1}^n \mK_{ij}^2\bigg)^{\frac{1}{2}} = \frac{\sqrt{n}}{o^2}
		\norm{\mK}_F.
	\end{equation*}
	This concludes the argument.
\end{proof}

\subsection{Quadrature Fourier Features (QFF)}

\begin{proposition}[QFF Approximation Quality]
	\label{prop:qff-approx-quality}
	Assume \(\Omega = [0, 1]^d\), \(k\) a kernel with Fourier transform
	\(p(\omega) =
	\exp(- \frac{1}{2}\sum_{j=1}^d \omega_j^2 \gamma_j^2)\) such that Assumption 1 of \citet{Mutny2018} is satisfied and let
	\(\shat{\mP}_\idxrvs = \sigma^2\mI + \mP_\idxrvs\), where \(\mP_\idxrvs\) is the QFF approximated kernel matrix. Let \(\idxrvs^{\frac{1}{d}} > \frac{2}{\gamma^2}\), then for \(b=\frac{1}{2}(\log(4)-1)\),
	it holds that
	\begin{equation}
		\norm{\shat{\mK} - \shat{\mP}_\idxrvs}_F \leq c(d,n,k)\exp(-b\idxrvs^{\frac{1}{d}}) \norm{\shat{\mK}}_F.
	\end{equation}
\end{proposition}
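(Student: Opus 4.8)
The plan is to obtain the proposition as a direct consequence of \Cref{lem:uniform-convergence-approx-quality} once a uniform convergence bound for the QFF kernel approximation is available; establishing that bound with the specific exponent is the substantive part. So the first task is to show that on $\Omega = [0,1]^d$
\[
	\sup_{\vx,\vy\in\Omega}\abs{k(\vx,\vy) - P_\idxrvs(\vx,\vy)} \le \exp(-b\idxrvs^{1/d})\,c_{\mathrm{unif}}(d,\Omega,k), \qquad b = \tfrac12(\log 4 - 1).
\]
Because the kernel factorises over coordinates as $k(\vx,\vy) = \prod_{j=1}^d k_j$, each $k_j$ having Fourier transform proportional to $\exp(-\tfrac12\omega^2\gamma_j^2)$, and QFF applies an $m \coloneqq \idxrvs^{1/d}$-node Gauss--Hermite quadrature rule in each coordinate (Assumption~1 of \citet{Mutny2018}), the Gauss--Hermite error formula gives, per coordinate, $\sup_{s,t\in[0,1]}\abs{k_j(s,t)-\tilde k_j(s,t)} \le o_j^2\,\frac{m!}{(2m)!}\,\gamma_j^{-2m}$ up to absolute constants, using $\abs{s-t}\le 1$ to bound the $2m$-th derivative of $\cos(\omega(s-t))$ after rescaling $\omega$ to the Gauss--Hermite weight.

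Next I would bound $\frac{m!}{(2m)!}\gamma_j^{-2m}$ elementarily. Writing $\frac{m!}{(2m)!} = \frac{1}{\binom{2m}{m}m!} \le \frac{1}{2^m m!}$ (using $\binom{2m}{m}\ge 2^m$) and $m!\ge(m/e)^m$ gives $\frac{m!}{(2m)!}\gamma_j^{-2m} \le \big(\tfrac{e}{2m\gamma_j^2}\big)^m \le \big(\tfrac{e}{2m\gamma^2}\big)^m$ with $\gamma \coloneqq \min_j\gamma_j$. The hypothesis $\idxrvs^{1/d} = m > 2\gamma^{-2}$ yields $2m\gamma^2 > 4$, hence $\big(\tfrac{e}{2m\gamma^2}\big)^m < (e/4)^m = \exp(-m(\log 4 - 1)) = \exp(-2b\idxrvs^{1/d}) \le \exp(-b\idxrvs^{1/d})$ (here $b>0$ since $\log 4 > 1$). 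To pass from the per-coordinate bound to the product kernel I would use $\abs{\prod_j k_j - \prod_j \tilde k_j} \le \sum_{j}\abs{k_j - \tilde k_j}\prod_{i\ne j}\max(\abs{k_i},\abs{\tilde k_i})$, which costs only a factor $d$ and a product of bounded constants; those constants together with $\prod_j o_j^2$ are absorbed into $c_{\mathrm{unif}}(d,\Omega,k)$. Crucially, this step changes the prefactor but preserves the exponential rate $\exp(-b\idxrvs^{1/d})$.

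Finally I would invoke \Cref{lem:uniform-convergence-approx-quality} with $g(\idxrvs) = \exp(-b\idxrvs^{1/d})$, with $o^2 \coloneqq k(\vx,\vx) = \prod_j o_j^2$ (a constant, as $k$ is stationary), and with $\shat{\mP}_\idxrvs = \sigma^2\mI + P_\idxrvs(\mX,\mX)$; this immediately produces $\norm{\shat{\mK} - \shat{\mP}_\idxrvs}_F \le \exp(-b\idxrvs^{1/d})\,c(d,n,k)\,\norm{\shat{\mK}}_F$ with $c(d,n,k) = \tfrac{\sqrt n}{o^2}\,c_{\mathrm{unif}}(d,\Omega,k)$, which is the claim. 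The main obstacle is the first two steps: pinning down the Gauss--Hermite quadrature constant and the inequalities $\binom{2m}{m}\ge 2^m$ and $m!\ge(m/e)^m$ tightly enough that the stated exponent $b = \tfrac12(\log 4 - 1)$ (together with the threshold $\idxrvs^{1/d} > 2\gamma^{-2}$) drops out, and verifying that the $d$-dimensional product-kernel reduction only affects the prefactor.
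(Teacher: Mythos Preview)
Your proposal is correct and follows the same two-stage architecture as the paper: first establish a uniform convergence bound for the QFF kernel approximation on $\Omega$, then invoke \Cref{lem:uniform-convergence-approx-quality} to pass to the Frobenius-norm bound on $\shat{\mK}-\shat{\mP}_\idxrvs$. The difference lies entirely in how the uniform bound is obtained. The paper simply cites Theorem~1 of \citet{Mutny2018} as a black box, reads off the bound $d\,2^{d-1}\sqrt{\pi/2}\big(\tfrac{e}{2\gamma^2\idxrvs^{1/d}}\big)^{\frac{1}{2}\idxrvs^{1/d}}$, and then uses the hypothesis $\idxrvs^{1/d}>2\gamma^{-2}$ to replace the base by $e/4$ and extract the exponent $b=\tfrac12(\log 4-1)$. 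You instead re-derive the uniform bound from scratch: per-coordinate Gauss--Hermite remainder $\propto \tfrac{m!}{(2m)!}\gamma_j^{-2m}$, the elementary inequalities $\binom{2m}{m}\ge 2^m$ and $m!\ge(m/e)^m$, and a telescoping product argument to handle the $d$-fold tensorisation. What your route buys is self-containment and a transparent explanation of where the constant $b$ and the threshold $2\gamma^{-2}$ come from; what the paper's route buys is brevity. One bookkeeping point to watch: in \citet{Mutny2018} the total feature count is $\idxrvs=(2\bar m)^d$, so the per-dimension quadrature node count is $\bar m=\tfrac12\idxrvs^{1/d}$, not $\idxrvs^{1/d}$ as you write. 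This halves your exponent, but the slack you deliberately leave when weakening $\exp(-2b\idxrvs^{1/d})$ to $\exp(-b\idxrvs^{1/d})$ exactly absorbs it, so the final statement is recovered either way.
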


\begin{proof}
	By Theorem 1 of \citet{Mutny2018}, replacing \(\idxrvs = (2\bar{m})^d\) it
	holds
	that
	\begin{align*}
		\sup_{\vx, \vy \in \Omega} \abs{k(\vx, \vy) - P_\idxrvs(\vx, \vy)} & \leq d 2^{d -1} \sqrt{\frac{\pi}{2}} \bigg(\frac{e}{2\gamma^2 \idxrvs^{\frac{1}{d}}}\bigg)^{\frac{1}{2}\idxrvs^{\frac{1}{d}}}                                \\
		                                                                   & \leq c(d) \bigg(\frac{e}{4}\bigg)^{\frac{1}{2}\idxrvs^{\frac{1}{d}}}                                                          & \idxrvs^{\frac{1}{d}} > 2 \gamma^{-2} \\
		                                                                   & =c(d)\exp(\frac{1}{2}\idxrvs^{\frac{1}{d}}-\log(4)\frac{1}{2}\idxrvs^{\frac{1}{d}})\\
																		   &= c(d) \exp(-\frac{1}{2}(\log(4)-1)\idxrvs^{\frac{1}{d}})                                                                                                  \\
		                                                                   & = c(d) \exp(-b \idxrvs^{\frac{1}{d}})
	\end{align*}
	Now by \Cref{lem:uniform-convergence-approx-quality}, we have
	\begin{equation}
		\norm{\shat{\mK} - \shat{\mP}_\idxrvs}_F \leq c(d,n)\exp(-b\idxrvs^{\frac{1}{d}}) \norm{\shat{\mK}}_F,
	\end{equation}
	where \(c(d,n) = \sqrt{n} c(d)\) by Assumption 1 of \citet{Mutny2018}, which assumes \(k(\vx, \vy) \leq
	1\).
\end{proof}

\begin{proposition}[General QFF Approximation Quality]
	\label{prop:general-qff-approx-quality}
	Assume \(\Omega = [0, 1]^d\), \(k\) a kernel with Fourier transform
	\(p(\omega)\) such that Assumption 1 of \citet{Mutny2018} is satisfied and
	\(f_\delta(\phi) = p(\cot(\phi))\frac{\cos(\delta \cot(\phi))}{\sin(\phi)^2}\) is
	\((s-1)\)-times absolutely continuous. Let \(\shat{\mP}_\idxrvs = \sigma^2\mI + \mP_\idxrvs\), where \(\mP_\idxrvs\) is the QFF approximated kernel matrix. Then it holds that
	\begin{equation}
		\norm{\shat{\mK} - \shat{\mP}_\idxrvs}_F \leq c(d, \Omega, k, n)\idxrvs^{-\frac{s + 1}{d}} \norm{\shat{\mK}}_F.
	\end{equation}
\end{proposition}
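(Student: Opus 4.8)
The plan is to follow the same two-stage strategy as in the proof of \Cref{prop:qff-approx-quality}: first establish a uniform (in the data) pointwise bound on the QFF kernel approximation error, and then convert it into a Frobenius-norm bound on $\shat{\mK} - \shat{\mP}_\idxrvs$ via \Cref{lem:uniform-convergence-approx-quality}. The only difference is that, instead of the kernel-specific decay estimate of \citet{Mutny2018} available for Gaussian spectral densities, I would invoke a classical quadrature error bound that only uses the stated smoothness of the transformed integrand. Concretely, by Bochner's theorem and Assumption 1 of \citet{Mutny2018}, the stationary kernel factorizes as $k(\vx, \vy) = \prod_{j=1}^d \int_{\R} p_j(\omega_j)\cos(\omega_j(\vx_j - \vy_j))\,d\omega_j$, and the QFF construction replaces each one-dimensional integral by an $\bar{m}$-node quadrature rule, with $\idxrvs = (2\bar{m})^d$ as in \Cref{prop:qff-approx-quality}. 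After the substitution $\omega_j = \cot(\phi_j)$ the $j$-th integrand is proportional to $f_{\delta_j}(\phi_j)$ with $\delta_j = \vx_j - \vy_j \in [-1,1]$ (since $\Omega = [0,1]^d$), so the per-dimension quadrature error is controlled by the smoothness of $f_\delta$.

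The key step is then to apply a quadrature error estimate: since $f_\delta$ is $(s-1)$-times absolutely continuous, the $\bar{m}$-node rule incurs an error of order $\bar{m}^{-(s+1)}$, with the implicit constant uniform over $\delta \in [-1,1]$ because the relevant derivative norms of $f_\delta$ are uniformly bounded there (this uses the control on $p$ and its derivatives from Assumption 1 of \citet{Mutny2018}). Telescoping over the $d$ factors—writing $k(\vx,\vy) - P_\idxrvs(\vx,\vy)$ as a sum of $d$ single-coordinate quadrature errors multiplied by products of the remaining bounded one-dimensional factors—and substituting $\bar{m} = \tfrac{1}{2}\idxrvs^{1/d}$ gives
\[
  \sup_{\vx, \vy \in \Omega} \abs{k(\vx, \vy) - P_\idxrvs(\vx, \vy)} \leq c(d, \Omega, k)\, \idxrvs^{-\frac{s+1}{d}}.
\]
Finally, \Cref{lem:uniform-convergence-approx-quality} applied with $g(\idxrvs) = \idxrvs^{-\frac{s+1}{d}}$ and $c_{\text{unif}} = c(d,\Omega,k)$ yields $\norm{\shat{\mK} - \shat{\mP}_\idxrvs}_F \leq c(d,\Omega,k,n)\,\idxrvs^{-\frac{s+1}{d}}\norm{\shat{\mK}}_F$ with $c(d,\Omega,k,n) = \tfrac{\sqrt{n}}{o^2}\,c_{\text{unif}}$.

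The main obstacle is the quadrature error estimate itself: one must pin down precisely which quadrature rule the QFF method uses, cite (or derive) the polynomial decay rate $\idxrvs^{-(s+1)/d}$ from the $(s-1)$-fold absolute continuity of $f_\delta$, and—crucially—verify that the constant is uniform over all $\delta$ arising from $\vx, \vy \in \Omega$, so that the pointwise bound can be promoted to the $\sup$-bound demanded by \Cref{lem:uniform-convergence-approx-quality}. The telescoping over coordinates (needing the one-dimensional factors to be bounded, which follows from boundedness of $k$) and the closing invocation of \Cref{lem:uniform-convergence-approx-quality} are then routine, mirroring \Cref{prop:qff-approx-quality}.
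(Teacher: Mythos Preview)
Your proposal is correct and follows essentially the same two-stage approach as the paper: establish a uniform pointwise bound on the QFF approximation error, then apply \Cref{lem:uniform-convergence-approx-quality}. The only difference is that the paper does not re-derive the quadrature error estimate from scratch but instead cites Theorem~4 of \citet{Mutny2018} directly, which already packages the per-dimension quadrature bound, the telescoping over coordinates, and the uniformity in $\delta$ (via $\max_{\delta \in \Omega}\operatorname{TV}(f_\delta^{(s)})$) into the single inequality $\sup_{\vx,\vy\in\Omega}\abs{k(\vx,\vy)-P_\idxrvs(\vx,\vy)} \leq d\,2^{d-1}\frac{(s+2)^{s+1}}{s!}\max_{\delta\in\Omega}\operatorname{TV}(f_\delta^{(s)})\,2^{s+1}\idxrvs^{-\frac{s+1}{d}}$.
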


\begin{proof}
	By Theorem 4 of \cite{Mutny2018}, replacing \(\idxrvs = (2\bar{m})^d\), it
	holds that
	\begin{align*}
		\sup_{\vx, \vy \in \Omega} \abs{k(\vx, \vy) - P_\idxrvs(\vx, \vy)} & \leq d 2^{d-1}\frac{(s + 2)^{s+1}}{s!} \max_{\delta \in \Omega} \operatorname{TV}(f_\delta^{(s)}) 2^{s+1} \idxrvs^{- \frac{s+1}{d}} \\
		                                                                   & = c(d, \Omega, k) \idxrvs^{- \frac{s+1}{d}}
	\end{align*}
	Now by \Cref{lem:uniform-convergence-approx-quality}, we have
	\begin{equation*}
		\norm{\shat{\mK} - \shat{\mP}_\idxrvs}_F \leq c(d, \Omega, k, n)\idxrvs^{-\frac{s + 1}{d}} \norm{\shat{\mK}}_F.
	\end{equation*}
\end{proof}

\begin{remark}[Modified Mat\'ern Kernel]
	\Cref{prop:general-qff-approx-quality} is satisfied for example for the modified
	Mat\'ern(\(\nu\)) kernel, defined via its spectral density
	\begin{equation}
		p(\omega) = \prod_{j=1}^d \frac{1}{(1 + \gamma^2_j \omega_j^2)^{\nu + \frac{1}{2}}}.
	\end{equation}
	See the appendix of \citet{Mutny2018} for a detailed definition and motivation.
\end{remark}

\subsection{Truncated Singular Value Decomposition}

\begin{proposition}[Truncated SVD Approximation Quality]
	\label{prop:truncated-svd-approx-quality}
	Let \(\mK\) be a kernel matrix and \(\mP_\idxrvs = \mV_\idxrvs \mLambda_\idxrvs \mV_\idxrvs\) its truncated singular value decomposition consisting of the  eigenvectors of the largest \(\idxrvs\) eigenvalues. Then it holds for \(\shat{\mP} = \sigma^2\mI + \mP_\idxrvs\), that
	\begin{equation}
		\norm{\shat{\mK} - \shat{\mP}_\idxrvs}_F \leq c(n)\idxrvs^{-\frac{1}{2}} \norm{\shat{\mK}}_F.
	\end{equation}
	where \(c(n) = \sqrt{n}\).
\end{proposition}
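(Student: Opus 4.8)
The plan is to reduce the claim to a statement purely about the eigenvalue tail of \(\mK\). First I would note that \(\shat{\mK} = \mK + \sigma^2\mI\) and \(\shat{\mP}_\idxrvs = \sigma^2\mI + \mP_\idxrvs\), so the \(\sigma^2\mI\) terms cancel and \(\shat{\mK} - \shat{\mP}_\idxrvs = \mK - \mP_\idxrvs\). Writing the eigendecomposition \(\mK = \sum_{i=1}^n \lambda_i \vv_i\vv_i^\top\) with \(\lambda_1 \ge \dots \ge \lambda_n \ge 0\) and orthonormal \(\vv_i\), the truncated SVD preconditioner is \(\mP_\idxrvs = \sum_{i=1}^{\idxrvs}\lambda_i\vv_i\vv_i^\top\), so the residual is exactly the discarded tail \(\mK - \mP_\idxrvs = \sum_{i>\idxrvs}\lambda_i\vv_i\vv_i^\top\), and therefore \(\norm{\shat{\mK}-\shat{\mP}_\idxrvs}_F^2 = \sum_{i>\idxrvs}\lambda_i^2\). (If \(\idxrvs \ge n\) the residual vanishes, so assume \(\idxrvs < n\).)

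Next I would bound the tail using monotonicity of the eigenvalues. Since \(\lambda_{\idxrvs+1}\le\lambda_j\) for all \(j\le\idxrvs\), we get \(\idxrvs\,\lambda_{\idxrvs+1}^2 \le \sum_{j=1}^{\idxrvs}\lambda_j^2 \le \sum_{j=1}^n\lambda_j^2 = \norm{\mK}_F^2\), hence \(\lambda_{\idxrvs+1}^2 \le \norm{\mK}_F^2/\idxrvs\). Bounding each of the at most \(n\) remaining eigenvalues by \(\lambda_{\idxrvs+1}\) then yields \(\sum_{i>\idxrvs}\lambda_i^2 \le n\,\lambda_{\idxrvs+1}^2 \le \tfrac{n}{\idxrvs}\norm{\mK}_F^2\).

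Finally, since \(\shat{\mK}\) has eigenvalues \(\lambda_i + \sigma^2 \ge \lambda_i \ge 0\), we have \(\norm{\mK}_F \le \norm{\shat{\mK}}_F\), so combining gives \(\norm{\shat{\mK}-\shat{\mP}_\idxrvs}_F \le \sqrt{n/\idxrvs}\,\norm{\mK}_F \le \sqrt{n}\,\idxrvs^{-\frac{1}{2}}\norm{\shat{\mK}}_F\), i.e.\ the bound with \(c(n)=\sqrt{n}\). I do not expect a genuine obstacle here: the argument is elementary. The only points needing care are the cancellation of \(\sigma^2\mI\) (so that the observation noise does not enter the constant), keeping the sorted-eigenvalue bookkeeping straight when passing from the single bound on \(\lambda_{\idxrvs+1}\) to the full tail, and the concluding replacement of \(\norm{\mK}_F\) by \(\norm{\shat{\mK}}_F\) so that the result is stated in the form of the preconditioner-quality assumption~\eqref{eqn:preconditioner-quality} used throughout the paper.
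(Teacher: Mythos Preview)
Your argument is correct and follows essentially the same route as the paper: cancel the \(\sigma^2\mI\) terms, identify the residual Frobenius norm with the eigenvalue tail \(\sum_{i>\idxrvs}\lambda_i^2\), bound the tail by a monotonicity argument to obtain \(\tfrac{n}{\idxrvs}\norm{\mK}_F^2\), and finish with \(\norm{\mK}_F\le\norm{\shat{\mK}}_F\). The only cosmetic difference is that the paper bounds \(\lambda_{\idxrvs+1}\) by \(\tfrac{1}{\idxrvs}\tr(\mK)\) and then passes through \(\norm{\vlambda}_1^2\le n\norm{\vlambda}_2^2\), whereas you bound \(\lambda_{\idxrvs+1}^2\) directly by \(\tfrac{1}{\idxrvs}\norm{\mK}_F^2\); both give the same constant.
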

\begin{proof}
	Since the optimal rank-\(\idxrvs\) approximation in Frobenius norm is given by the truncated SVD \cite{Eckart1936}, we have for \(\lambda_1(\mK) \geq \dots \geq \lambda_n(\mK)\), that
	\begin{align*}
		\norm{\shat{\mK} - \shat{\mP}_\idxrvs}_F^2 &= \norm{\mK - \mP_\idxrvs}_F^2 = \sum_{i=\idxrvs +1}^n \lambda_i(\mK)^2\\
		\intertext{Now since \(\lambda_{\idxrvs +1}(\mK) \leq \frac{1}{\idxrvs} \sum_{i=1}^{\idxrvs}\lambda_i(\mK) \leq \frac{1}{\idxrvs}\tr(\mK)\), we have}
		&\leq \lambda_{\idxrvs+1}(\mK) \sum_{i=\idxrvs+1}^n \lambda_i(\mK) \leq \frac{1}{\idxrvs}\tr(\mK)^2 = \frac{1}{\idxrvs} \norm{\vlambda}_1^2 \leq \frac{n}{\idxrvs} \norm{\vlambda}_2^2 = \frac{n}{\idxrvs} \norm{\mK}_F^2 \leq \frac{n}{\idxrvs} \norm{\shat{\mK}}_F^2
	\end{align*}
	where \(\vlambda = (\lambda_1(\mK), \dots, \lambda_n(\mK))^\top \in \Rn\).
\end{proof}

\subsection{Randomized Singular Value Decomposition}

\begin{proposition}[Randomized SVD Approximation Quality]
	\label{prop:randomized-svd-approx-quality}
	Let \(\mK\) be a kernel matrix and \(\mP_\idxrvs = \mH_\idxrvs \mH_\idxrvs^\top \mK\) its randomized singular value decomposition constructed via the \textsc{LinearTimeSVD} algorithm \cite{Drineas2006} with \(s \in \N\) samples drawn according to probabilities \(\{p_i\}_{i=1}^n\). Then for \(\shat{\mP} = \sigma^2\mI + \mP_\idxrvs\), it holds with probability \(1-\delta\), that
	\begin{equation}
		\norm{\shat{\mK} - \shat{\mP}_\idxrvs}_F \leq c(n)(\idxrvs^{-\frac{1}{2}} + \bigO(\idxrvs^{\frac{1}{4}} s^{-\frac{1}{4}})) \norm{\shat{\mK}}_F,
	\end{equation}
	where \(c(n) = \sqrt{n}\).
\end{proposition}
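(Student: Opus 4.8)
The plan is to reduce the claim to two ingredients: the approximation guarantee of the \textsc{LinearTimeSVD} algorithm from \citet{Drineas2006}, and the deterministic truncated-SVD estimate already proven in \Cref{prop:truncated-svd-approx-quality}. First I would strip off the regularizer: since $\shat{\mK} = \mK + \sigma^2\mI$ and $\shat{\mP}_\idxrvs = \sigma^2\mI + \mP_\idxrvs$, the identity terms cancel and $\norm{\shat{\mK} - \shat{\mP}_\idxrvs}_F = \norm{\mK - \mP_\idxrvs}_F$, so it suffices to bound the randomized low-rank approximation error for $\mK$ itself. Moreover, adding $\sigma^2\mI$ only shifts the (nonnegative) eigenvalues of $\mK$ upward, so $\norm{\mK}_F \leq \norm{\shat{\mK}}_F$ and the bound in terms of $\norm{\shat{\mK}}_F$ follows automatically at the end.

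Next, let $\mC \in \R^{n\times s}$ be the matrix of rescaled sampled columns used to construct $\mH_\idxrvs$ in \textsc{LinearTimeSVD}, and let $\mK_\idxrvs$ denote the best rank-$\idxrvs$ approximation of $\mK$ in Frobenius norm. The core deterministic guarantee of \citet{Drineas2006} for the rank-$\idxrvs$ truncation of the sketch gives
\[
\norm{\mK - \mH_\idxrvs\mH_\idxrvs^\top\mK}_F^2 \leq \norm{\mK - \mK_\idxrvs}_F^2 + 2\sqrt{\idxrvs}\,\norm{\mK\mK^\top - \mC\mC^\top}_F.
\]
The randomness enters only through the sketching error $\norm{\mK\mK^\top - \mC\mC^\top}_F$, and for the (near-)optimal column-sampling probabilities $p_i \propto \norm{\mK^{(i)}}_2^2$ a standard approximate-matrix-multiplication concentration bound \citep{Drineas2006} yields $\norm{\mK\mK^\top - \mC\mC^\top}_F \leq \eta\, s^{-1/2}\norm{\mK}_F^2$ with probability $\geq 1-\delta$, where $\eta = \eta(\delta) = \bigO(\sqrt{\log(\delta^{-1})})$.

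Substituting the truncated-SVD estimate $\norm{\mK - \mK_\idxrvs}_F^2 \leq \tfrac{n}{\idxrvs}\norm{\mK}_F^2$ from \Cref{prop:truncated-svd-approx-quality}, I obtain, with probability $1-\delta$,
\[
\norm{\mK - \mP_\idxrvs}_F^2 \leq \Big(\tfrac{n}{\idxrvs} + 2\eta\,\idxrvs^{1/2}s^{-1/2}\Big)\norm{\mK}_F^2.
\]
Taking square roots and using $\sqrt{a+b} \leq \sqrt{a} + \sqrt{b}$ then gives $\norm{\mK - \mP_\idxrvs}_F \leq \big(\sqrt{n}\,\idxrvs^{-1/2} + \sqrt{2\eta}\,\idxrvs^{1/4}s^{-1/4}\big)\norm{\mK}_F = \sqrt{n}\big(\idxrvs^{-1/2} + \bigO(\idxrvs^{1/4}s^{-1/4})\big)\norm{\mK}_F \leq \sqrt{n}\big(\idxrvs^{-1/2} + \bigO(\idxrvs^{1/4}s^{-1/4})\big)\norm{\shat{\mK}}_F$, which is the claim with $c(n) = \sqrt{n}$.

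The main obstacle I anticipate is quoting the \textsc{LinearTimeSVD} bound in exactly the right form: one has to match the sampling-probability assumption (exactly optimal versus $\beta$-near-optimal column probabilities) to the concentration constant $\eta$, and keep careful track of whether the factor $2\sqrt{\idxrvs}$ multiplies $\norm{\mK\mK^\top - \mC\mC^\top}_F$ or its square — the subsequent square-root step is precisely what converts the $s^{-1/2}$ rate into the $s^{-1/4}$ appearing in the statement. One should also note the implicit requirement $\idxrvs \leq s \leq n$ for the rank-$\idxrvs$ truncation of the $n\times s$ sketch to be well-defined; everything else is routine manipulation of Frobenius norms.
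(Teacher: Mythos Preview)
Your proposal is correct and follows essentially the same approach as the paper: cancel the $\sigma^2\mI$ regularizer, apply the \textsc{LinearTimeSVD} guarantee of \citet{Drineas2006}, plug in the truncated-SVD bound from \Cref{prop:truncated-svd-approx-quality}, and take the square root. The only cosmetic difference is that the paper quotes the combined probabilistic bound (Theorem~4 of \citet{Drineas2006}) directly as $\norm{\mK - \mP_\idxrvs}_F^2 \leq \norm{\mK - \mK_\idxrvs}_F^2 + c(p_i,\delta)\,\idxrvs^{1/2}s^{-1/2}\norm{\mK}_F^2$, whereas you unpack it into the deterministic sketching lemma plus the approximate-matrix-multiplication concentration step; the resulting argument is identical.
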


\begin{proof}
	By \citet[Theorem~4]{Drineas2006} it holds with probability \(1-\delta\), that
	\begin{align*}
		\norm{\shat{\mK} - \shat{\mP}_\idxrvs}_F^2 &= \norm{{\mK} - {\mP}_\idxrvs}_F^2 \leq \norm{\mK - \mK_\idxrvs}_F^2 + c(p_i, \delta) \idxrvs^{\frac{1}{2}} s^{-\frac{1}{2}} \norm{\mK}_F^2\\
		\intertext{By the same argument as in the proof of \Cref{prop:truncated-svd-approx-quality} for the error  of the optimal rank-\(\idxrvs\) approximation \(\norm{\mK - \mK_\idxrvs}_F^2\), we obtain}
		&\leq \frac{n}{\idxrvs} \norm{\mK}_F^2 + c(p_i, \delta)\idxrvs^{\frac{1}{2}} s^{-\frac{1}{2}} \norm{\mK}_F^2\\
		&\leq n (\idxrvs^{-1} + \bigO(\idxrvs^{\frac{1}{2}} s^{-\frac{1}{2}})) \norm{\shat{\mK}}_F^2
	\end{align*}
	This completes the proof.
\end{proof}

\subsection{Randomized Nyström Method}

\begin{proposition}[Randomized Nyström Approximation Quality]
	\label{prop:nystroem-approx-quality}
	Let \(\mK\) be a kernel matrix and \(\mP_\idxrvs = \mC \mW_\idxrvs^+ \mC^\top\) its randomized Nyström approximation constructed via Algorithm 3 of \citet{Drineas2005} with \(s \in \N\) columns drawn according to probabilities \(p_i=\frac{\mK_{ii}^2}{\sum_{i=1}^n \mK_{ii}^2}\). Then for \(\shat{\mP} = \sigma^2\mI + \mP_\idxrvs\), it holds with probability \(1-\delta\), that
	\begin{equation}
		\norm{\shat{\mK} - \shat{\mP}_\idxrvs}_F \leq c(n)(\idxrvs^{-\frac{1}{2}} + \bigO(\idxrvs^{\frac{1}{4}} s^{-\frac{1}{4}})) \norm{\shat{\mK}}_F,
	\end{equation}
	where \(c(n) = \sqrt{n}\).
\end{proposition}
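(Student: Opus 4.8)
The plan is to proceed exactly as for \Cref{prop:randomized-svd-approx-quality}: reduce the claim to a known Frobenius-norm error bound for the randomized Nyström approximation and combine it with the optimal-rank-\(\idxrvs\) estimate already established in \Cref{prop:truncated-svd-approx-quality}. First I would note that the \(\sigma^2 \mI\) terms cancel, so that \(\shat{\mK} - \shat{\mP}_\idxrvs = \mK - \mP_\idxrvs\) and it suffices to bound \(\norm{\mK - \mP_\idxrvs}_F\).

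Next I would invoke the Frobenius-norm guarantee accompanying Algorithm 3 of \citet{Drineas2005} (their Theorem~3): with the column-sampling probabilities \(p_i = \mK_{ii}^2 / \sum_j \mK_{jj}^2\) and \(s\) sampled columns, it holds with probability \(1-\delta\) that
\begin{equation*}
	\norm{\mK - \mP_\idxrvs}_F \leq \norm{\mK - \mK_\idxrvs}_F + \epsilon \sum_{i=1}^n \mK_{ii}^2,
\end{equation*}
where \(\mK_\idxrvs\) is the best rank-\(\idxrvs\) approximation, provided \(s\) exceeds a threshold of the form \(s \geq c(\delta)\, \idxrvs\, \epsilon^{-4}\). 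Inverting this sample-complexity condition gives \(\epsilon = \bigO(\idxrvs^{\frac{1}{4}} s^{-\frac{1}{4}})\), with the hidden constant absorbing the dependence on \(\delta\). For the first term I would reuse the computation from \Cref{prop:truncated-svd-approx-quality}, namely \(\norm{\mK - \mK_\idxrvs}_F^2 \leq \frac{n}{\idxrvs} \norm{\mK}_F^2\), hence \(\norm{\mK - \mK_\idxrvs}_F \leq \sqrt{n}\, \idxrvs^{-\frac{1}{2}} \norm{\mK}_F\). For the additive term, boundedness of the kernel diagonal (\(\mK_{ii} = k(\vx_i, \vx_i) \leq o^2\)) together with \(\tr(\mK) = \norm{\vlambda}_1 \leq \sqrt{n}\, \norm{\vlambda}_2 = \sqrt{n}\, \norm{\mK}_F\) (as in \Cref{prop:truncated-svd-approx-quality}) yields \(\sum_{i=1}^n \mK_{ii}^2 \leq o^2 \tr(\mK) \leq o^2 \sqrt{n}\, \norm{\mK}_F\), so that \(\epsilon \sum_{i=1}^n \mK_{ii}^2 \leq \sqrt{n}\, \bigO(\idxrvs^{\frac{1}{4}} s^{-\frac{1}{4}})\, \norm{\mK}_F\). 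Adding the two contributions and using \(\norm{\mK}_F \leq \norm{\shat{\mK}}_F\) gives
\begin{equation*}
	\norm{\shat{\mK} - \shat{\mP}_\idxrvs}_F \leq \sqrt{n}\, \big( \idxrvs^{-\frac{1}{2}} + \bigO(\idxrvs^{\frac{1}{4}} s^{-\frac{1}{4}}) \big)\, \norm{\shat{\mK}}_F,
\end{equation*}
which is the claim with \(c(n) = \sqrt{n}\).

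I expect no genuine obstacle: the structure is identical to \Cref{prop:randomized-svd-approx-quality}, with the \textsc{LinearTimeSVD} bound of \citet{Drineas2006} replaced by the corresponding Nyström bound of \citet{Drineas2005}. The only real work is bookkeeping — converting the external sample-complexity threshold \(s \gtrsim \idxrvs\, \epsilon^{-4}\) into the stated \(\bigO(\idxrvs^{\frac{1}{4}} s^{-\frac{1}{4}})\) rate, and folding \(\sum_i \mK_{ii}^2\) into \(\sqrt{n}\, \norm{\shat{\mK}}_F\), which is the step where the \(\sqrt{n}\) factor and the (implicit) assumption that the kernel diagonal is bounded enter. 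If one prefers to drop the latter assumption, the coarser estimate \(\sum_i \mK_{ii}^2 \leq \norm{\mK}_F^2\) can be used instead, at the cost of a \(\norm{\shat{\mK}}_F^2\) on the right-hand side, so routing through the \(\mK_{ii} \leq o^2\) normalization is what keeps the statement in the clean form above.
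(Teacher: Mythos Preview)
Your proposal is correct and follows essentially the same route as the paper: invoke Theorem~3 of \citet{Drineas2005}, bound the best rank-\(\idxrvs\) term via \Cref{prop:truncated-svd-approx-quality}, and absorb \(\sum_i \mK_{ii}^2\) into \(\sqrt{n}\,\norm{\mK}_F\). The only cosmetic difference is in that last step: the paper uses \(\sum_i \mK_{ii}^2 \leq \sqrt{\sum_i \mK_{ii}^2}\,\norm{\mK}_F\) (since \(\sum_i \mK_{ii}^2 \leq \norm{\mK}_F^2\)) and then hides \(\sqrt{\sum_i \mK_{ii}^2}\) in the \(\bigO\), rather than routing through the bounded-diagonal assumption \(\mK_{ii}\leq o^2\) as you do; both arrive at the same bound.
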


\begin{proof}
	By \citet[Theorem~3]{Drineas2005} it holds with probability \(1-\delta\), that
	\begin{align*}
		\norm{\shat{\mK} - \shat{\mP}_\idxrvs}_F &= \norm{{\mK} - {\mP}_\idxrvs}_F \leq \norm{\mK - \mK_\idxrvs}_F + c(\delta) \idxrvs^{\frac{1}{4}} s^{-\frac{1}{4}} \sum_{i=1}^n \mK_{ii}^2\\
		\intertext{By the same argument as in the proof of \Cref{prop:truncated-svd-approx-quality} for the error  of the optimal rank-\(\idxrvs\) approximation \(\norm{\mK - \mK_\idxrvs}_F\), we obtain}
		&\leq n^{\frac{1}{2}}\idxrvs^{-\frac{1}{2}} \norm{\mK}_F^2 + c(\delta) \idxrvs^{\frac{1}{4}} s^{-\frac{1}{4}} \sum_{i=1}^n \mK_{ii}^2\\
		&\leq n^{\frac{1}{2}}\idxrvs^{-\frac{1}{2}} \norm{\mK}_F^2 + c(\delta) \idxrvs^{\frac{1}{4}} s^{-\frac{1}{4}} \sqrt{\sum_{i=1}^n \mK_{ii}^2} \norm{\mK}_F\\
		&\leq n^{\frac{1}{2}} (\idxrvs^{-\frac{1}{2}} + \bigO(\idxrvs^{\frac{1}{4}} s^{-\frac{1}{4}})) \norm{\shat{\mK}}_F
	\end{align*}
	This completes the proof.
\end{proof}

\subsection{Random Fourier Features (RFF)}

\begin{proposition}[RFF Approximation Quality]
	\label{prop:rff-approx-quality}
	Let \(k(\vx, \vy) = k(\vx - \vy)\) be a positive-definite kernel with compact data domain
	\(\Omega \subset \R^d\). Let \(\mP_\idxrvs = \mZ_\idxrvs \mZ_\idxrvs^\top\) be the random Fourier feature approximation \cite{Rahimi2007}, where \(\mZ_\idxrvs \in \R^{n \times \idxrvs}\). Then for \(\shat{\mP}_\idxrvs = \sigma^2 \mI + \mP_\idxrvs\) it holds with probability \(1-\delta\), that
	\begin{equation}
		\norm{\shat{\mK} - \shat{\mP}_\idxrvs}_F \leq c(d, \Omega, k, \delta, n)
		\idxrvs^{-\frac{1}{2}}\norm{\shat{\mK}}_F.
	\end{equation}
\end{proposition}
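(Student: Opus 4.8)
The plan is to reduce the claim to \Cref{lem:uniform-convergence-approx-quality} by establishing a high‑probability uniform convergence bound for the random Fourier feature kernel estimate. Write $P_\idxrvs(\vx,\vy) = \vz_\idxrvs(\vx)^\top \vz_\idxrvs(\vy)$ for the RFF feature map with $\idxrvs$ features, so that $(\mP_\idxrvs)_{ij} = P_\idxrvs(\vx_i,\vx_j)$ and $\Exp[P_\idxrvs(\vx,\vy)] = k(\vx,\vy)$. The target is a bound $\sup_{\vx,\vy\in\Omega}\abs{k(\vx,\vy) - P_\idxrvs(\vx,\vy)} \leq g(\idxrvs)\,c_{\mathrm{unif}}(d,\Omega,k,\delta)$ with $g(\idxrvs)=\idxrvs^{-\frac12}$, holding on an event of probability $1-\delta$. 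On that event \Cref{lem:uniform-convergence-approx-quality} applies verbatim --- its proof is deterministic given the sup bound and uses only $k(\vx,\vx)\leq o^2$, which holds with equality here since $k$ is stationary, $k(\vx,\vx)=k(0)=o^2$ --- yielding $\norm{\shat{\mK}-\shat{\mP}_\idxrvs}_F \leq \idxrvs^{-\frac12}\,c(d,\Omega,k,n)\,\norm{\shat{\mK}}_F$ with $c = \tfrac{\sqrt n}{o^2}c_{\mathrm{unif}}$.

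For the uniform bound I would invoke the classical compact-domain concentration result for RFF (\citet{Rahimi2007}, Claim~1): since $\Omega$ is compact it has finite diameter, and the result gives, with probability at least $1-\delta$, $\sup_{\vx,\vy\in\Omega}\abs{k(\vx,\vy)-P_\idxrvs(\vx,\vy)}\leq\epsilon$ whenever $\idxrvs\gtrsim\tfrac{d}{\epsilon^2}\log\!\big(\tfrac{\sigma_p\,\mathrm{diam}(\Omega)}{\epsilon\delta}\big)$, where $\sigma_p^2$ is the second moment of the kernel's spectral measure. Solving this for $\epsilon$ as a function of $\idxrvs$ gives $\epsilon \leq c_{\mathrm{unif}}(d,\Omega,k,\delta)\,\idxrvs^{-\frac12}$ up to a sub-dominant $\sqrt{\log\idxrvs}$ factor, which can be absorbed into the constant in the regime of interest.

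The main obstacle is precisely this logarithmic factor: the Rahimi--Recht covering-number argument does not deliver an unadulterated $\idxrvs^{-\frac12}$ rate, and it additionally requires $\sigma_p^2<\infty$, a mild smoothness assumption not stated in the proposition. The cleanest fix is to bypass the uniform bound and argue in expectation: each RFF estimate $P_\idxrvs(\vx,\vy)$ is an average of $\idxrvs$ i.i.d. bounded terms, so $\Var(P_\idxrvs(\vx,\vy))\leq o^4/\idxrvs$ up to a universal constant, whence $\Exp\big[\norm{\mK-\mP_\idxrvs}_F^2\big] = \sum_{i,j}\Exp\big[(k(\vx_i,\vx_j)-P_\idxrvs(\vx_i,\vx_j))^2\big]\lesssim n^2 o^4/\idxrvs$. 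Markov's inequality then gives $\norm{\mK-\mP_\idxrvs}_F\lesssim n\,o^2/\sqrt{\idxrvs\delta}$ with probability $1-\delta$, and combining with $n\leq\tfrac{\sqrt n}{o^2}\norm{\mK}_F$ (exactly as in the proof of \Cref{prop:truncated-svd-approx-quality}) and $\norm{\mK}_F\leq\norm{\shat{\mK}}_F$ yields $\norm{\shat{\mK}-\shat{\mP}_\idxrvs}_F\lesssim\sqrt{n/\delta}\,\idxrvs^{-\frac12}\norm{\shat{\mK}}_F$, with no log factors and needing only boundedness of $k$. I would most likely present this second route as the actual proof and mention the uniform-convergence route as the connection to \Cref{lem:uniform-convergence-approx-quality} and to \citet{Rahimi2007}.
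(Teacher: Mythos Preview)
Your Route~2 (variance plus Markov on $\norm{\mK-\mP_\idxrvs}_F^2$) is correct and proves the proposition as stated, since the constant is allowed to depend on $\delta$ and $n$ arbitrarily. The paper, however, takes exactly your Route~1 strategy --- uniform convergence followed by \Cref{lem:uniform-convergence-approx-quality} --- but sidesteps the $\sqrt{\log\idxrvs}$ obstacle you identified by citing Theorem~1 of \citet{Sriperumbudur2015} rather than the original Rahimi--Recht covering bound. That result gives a genuine $\idxrvs^{-1/2}$ uniform rate on compact domains with probability $1-\delta$ and no residual $\log\idxrvs$ factor, so the lemma applies directly.

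The trade-off: your Markov argument is fully self-contained and needs only boundedness of the RFF features, but pays a $\delta^{-1/2}$ in the constant; the paper's route needs an external (and sharper) concentration result but yields a logarithmic $\delta$-dependence and fits neatly into the uniform-convergence framework that \Cref{lem:uniform-convergence-approx-quality} is designed for. Your instinct that the Rahimi--Recht bound alone does not deliver a clean $\idxrvs^{-1/2}$ was right; the fix is a better citation, not a different proof strategy.
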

\begin{proof}
	By Theorem 1 of \citet{Sriperumbudur2015} with probability \(1 - \delta\), we obtain the uniform
	convergence bound
	\begin{equation}
		\sup_{\vx, \vy \in \Omega} \abs{k(\vx, \vy) - P_\idxrvs(\vx, \vy)} \leq 
		c_{\text{unif}}(d,\Omega, k, \delta) \idxrvs^{-\frac{1}{2}}.
	\end{equation}
	Now, applying \Cref{lem:uniform-convergence-approx-quality} completes the proof.
\end{proof}

\section{Technical Results}
\label{suppsec:technical-results}

\begin{lemma}
	\label{lem:upper-bound-conv-frac}
	Let \(x \in \R\) such that \(x>1\), then it holds that
	\begin{equation}
		\frac{x-1}{x+1} \leq \exp(-\frac{2}{x}).
	\end{equation}
\end{lemma}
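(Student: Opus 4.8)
The plan is to reduce the claim to a one-variable inequality by taking logarithms. Since $x > 1$, both $\frac{x-1}{x+1}$ and $\exp(-\frac{2}{x})$ are strictly positive, so the asserted inequality is equivalent, after applying the (strictly increasing) logarithm and rearranging, to
\[
  \log\frac{x+1}{x-1} \ge \frac{2}{x}.
\]
First I would substitute $u = \frac{1}{x}$, which ranges over $(0,1)$ as $x$ ranges over $(1,\infty)$; then $\frac{x+1}{x-1} = \frac{1+u}{1-u}$ and the target becomes $\log\frac{1+u}{1-u} \ge 2u$ for all $u \in (0,1)$.

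To establish this last inequality I would set $h(u) = \log(1+u) - \log(1-u) - 2u$ on $[0,1)$, observe that $h(0) = 0$, and compute
\[
  h'(u) = \frac{1}{1+u} + \frac{1}{1-u} - 2 = \frac{2}{1-u^2} - 2 = \frac{2u^2}{1-u^2} \ge 0,
\]
so $h$ is nondecreasing on $[0,1)$ and hence $h(u) \ge 0$ there. (Equivalently, one may invoke the series $\log\frac{1+u}{1-u} = 2\sum_{k\ge 0} \frac{u^{2k+1}}{2k+1} \ge 2u$.) Substituting back $u = \frac{1}{x}$ and exponentiating recovers $\frac{x-1}{x+1} \le \exp(-\frac{2}{x})$.

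There is no substantive obstacle here; the only point that needs a moment's care is that the passage to the logarithmic form is legitimate, which holds because $x > 1$ forces $x - 1 > 0$, making $\log\frac{x+1}{x-1}$ well defined, and $t \mapsto \log t$ is strictly increasing so the direction of the inequality is preserved throughout.
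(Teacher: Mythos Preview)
Your proof is correct. Both you and the paper reduce the claim to the equivalent logarithmic form $\log\frac{x+1}{x-1}\ge \frac{2}{x}$, but the paper then invokes a classical inequality from Mitrinovi\'c, namely $\frac{\log(1+y)}{y}\ge \frac{2}{2+y}$ for $y>0$, and substitutes $y=\frac{x+1}{x-1}-1=\frac{2}{x-1}$ to obtain the right-hand side $\frac{2}{x}$. Your route is instead a direct, self-contained calculus argument: after the substitution $u=1/x$ you differentiate $h(u)=\log\frac{1+u}{1-u}-2u$ and observe $h'(u)=\frac{2u^2}{1-u^2}\ge 0$ with $h(0)=0$. The paper's approach is shorter if one accepts the cited inequality as a black box, while yours avoids any external reference and in fact proves (a special case of) the very inequality the paper cites; the power-series remark you add makes the argument even more transparent.
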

\begin{proof}
	By \citet[Section~3.6.18]{Mitrinovic1970} it holds for \(y > 0\), that
	\begin{equation*}
		\frac{\log(y+1)}{y} \geq \frac{2}{2+y}
	\end{equation*}
	Substituting \(y = \frac{x+1}{x-1}-1\) we obtain
	\begin{align*}
		\log(\frac{x+1}{x-1}) \geq \frac{2(\frac{x+1}{x-1}-1)}{1+\frac{x+1}{x-1}} = \frac{2 (x + 1 - x + 1)}{2x} = \frac{2}{x} \\
		\iff \frac{x+1}{x-1} \geq \exp(\frac{2}{x})
	\end{align*}
	This proves the claim.
\end{proof}

\section{Additional Experimental Results}
\label{suppsec:additional-experimental-results}

\subsection{Synthetic Data}

We report bias and variance of the stochastic estimators for the \(\log\)-maginal likelihood and
its derivatives for the exponentiated quadratic, Mat\'ern(\(\frac{3}{2}\)) and
rational quadratic kernel on a synthetic dataset of size \(n=10,\!000\) with varying dimensionality
\(d \in \{1,2,3\}\) in \Cref{tab:synthetic-data}. Using \(\idxrvs=128\) random samples with a
preconditioner of the same size bias and variance are reduced by several orders of magnitude across
different kernels. Note, that the variance reduction tends to decline with dimensionality, even
though this is not necessarily universal across kernels. We show the bias and variance reduction
for increasing number of random samples, respectively preconditioner size in
\Cref{fig:synthetic-data-detailed}.

\begin{table}
	\caption{\emph{Bias and variance reduction for different kernels.} Bias and variance
	of the stochastic estimators for the \(\log\)-marginal likelihood and its derivative(s) computed
	for synthetic data (\(n=10,\!000\), \(\sigma^2=10^{-2}\)) with \(25\) repetitions.}
	\label{tab:synthetic-data}
	\small
	\centering
	\begin{tabular}{cS[table-format=1]S[table-format=1]S[table-format=1e-2]S[table-format=1e-2]S[table-format=1e-2]S[table-format=1e-2]S[table-format=1e-2]S[table-format=1e-2]S[table-format=1e-2]S[table-format=1e-2]S[table-format=1e-2]S[table-format=1e-2]S[table-format=1e-2]S[table-format=1e-2]S[table-format=1e-2]S[table-format=1e-2]S[table-format=1e-2]S[table-format=1e-2]S[table-format=1e-2]S[table-format=1e-2]}
\toprule
        &   &     & \multicolumn{2}{c}{$\logmarglik$} & \multicolumn{2}{c}{$\partial \logmarglik / \partial o$} & \multicolumn{2}{c}{$\partial \logmarglik / \partial l$} & \multicolumn{2}{c}{$\partial \logmarglik / \partial \sigma$} \\
        &   &     & {Bias} & {Var.} &                  {Bias} & {Var.} &                     {Bias} & {Var.} &                       {Bias} & {Var.} \\
Kernel & {$d$} & {Prec. Qual.} &        &        &                         &        &                            &        &                              &        \\
\midrule
Mat{\' e}rn$(3/2)$ & 1 & 0   &  3e-04 &  3e-08 &                   9e-04 &  1e-09 &                      2e-03 &  8e-09 &                        2e-03 &  2e-09 \\
        &   & 128 &  9e-06 &  4e-11 &                   4e-06 &  8e-12 &                      1e-05 &  7e-11 &                        7e-06 &  2e-11 \\
        & 2 & 0   &  3e-01 &  3e-03 &                   4e-01 &  4e-03 &                      1e+00 &  3e-02 &                        9e-01 &  3e-02 \\
        &   & 128 &  3e-04 &  5e-08 &                   7e-05 &  3e-09 &                      2e-04 &  3e-08 &                        1e-04 &  7e-09 \\
        & 3 & 0   &  3e-01 &  6e-04 &                   4e-01 &  7e-04 &                      1e+00 &  6e-03 &                        8e-01 &  4e-03 \\
        &   & 128 &  7e-03 &  7e-07 &                   2e-02 &  3e-08 &                      5e-02 &  4e-07 &                        3e-02 &  8e-08 \\
RBF & 1 & 0   &  1e-04 &  1e-08 &                   1e-05 &  4e-11 &                      1e-04 &  3e-09 &                        2e-05 &  1e-10 \\
        &   & 128 &  5e-08 &  1e-15 &                   3e-08 &  4e-16 &                      7e-07 &  2e-13 &                        4e-08 &  1e-15 \\
        & 2 & 0   &  3e-03 &  7e-08 &                   5e-03 &  2e-09 &                      8e-02 &  1e-07 &                        1e-02 &  2e-09 \\
        &   & 128 &  1e-06 &  1e-12 &                   5e-07 &  2e-13 &                      1e-05 &  1e-10 &                        8e-07 &  5e-13 \\
        & 3 & 0   &  1e-01 &  2e-07 &                   2e-01 &  4e-08 &                      2e+00 &  9e-07 &                        4e-01 &  8e-09 \\
        &   & 128 &  3e-04 &  5e-08 &                   5e-05 &  2e-09 &                      7e-04 &  3e-07 &                        8e-05 &  4e-09 \\
RatQuad & 1 & 0   &  1e-04 &  1e-08 &                   1e-05 &  1e-10 &                      1e-04 &  5e-09 &                        2e-05 &  3e-10 \\
        &   & 128 &  3e-07 &  6e-14 &                   2e-07 &  2e-14 &                      2e-06 &  4e-12 &                        2e-07 &  4e-14 \\
        & 2 & 0   &  3e-02 &  2e-07 &                   4e-02 &  3e-08 &                      3e-01 &  2e-07 &                        1e-01 &  7e-09 \\
        &   & 128 &  8e-05 &  3e-09 &                   2e-05 &  2e-10 &                      2e-04 &  1e-08 &                        3e-05 &  5e-10 \\
        & 3 & 0   &  2e-01 &  2e-04 &                   3e-01 &  3e-04 &                      2e+00 &  1e-02 &                        8e-01 &  2e-03 \\
        &   & 128 &  4e-04 &  1e-07 &                   2e-04 &  1e-08 &                      4e-03 &  4e-07 &                        4e-04 &  4e-08 \\
\bottomrule
\end{tabular}

\end{table}


\begin{figure}
	\centering
	\begin{subfigure}[b]{0.9\textwidth}
		\centering
		\includegraphics[width=\textwidth]{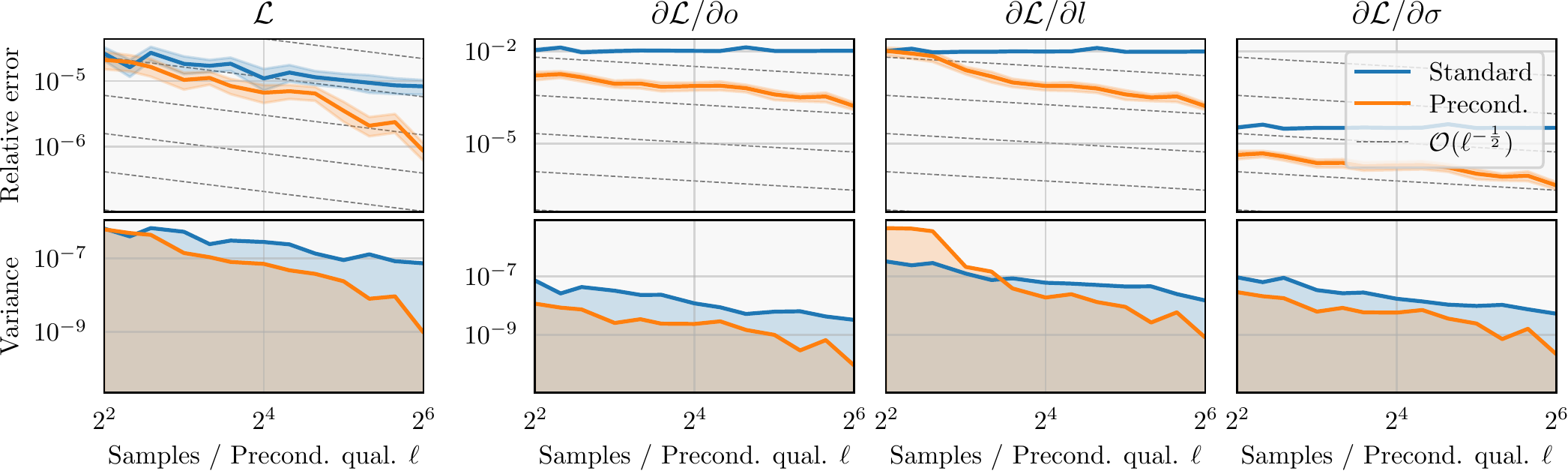}
		\caption{Mat{\'e}rn(\(3/2\)) (\(d=1\)).}
	\end{subfigure}\\%
	~
	\begin{subfigure}[b]{0.9\textwidth}
		\centering
		\includegraphics[width=\textwidth]{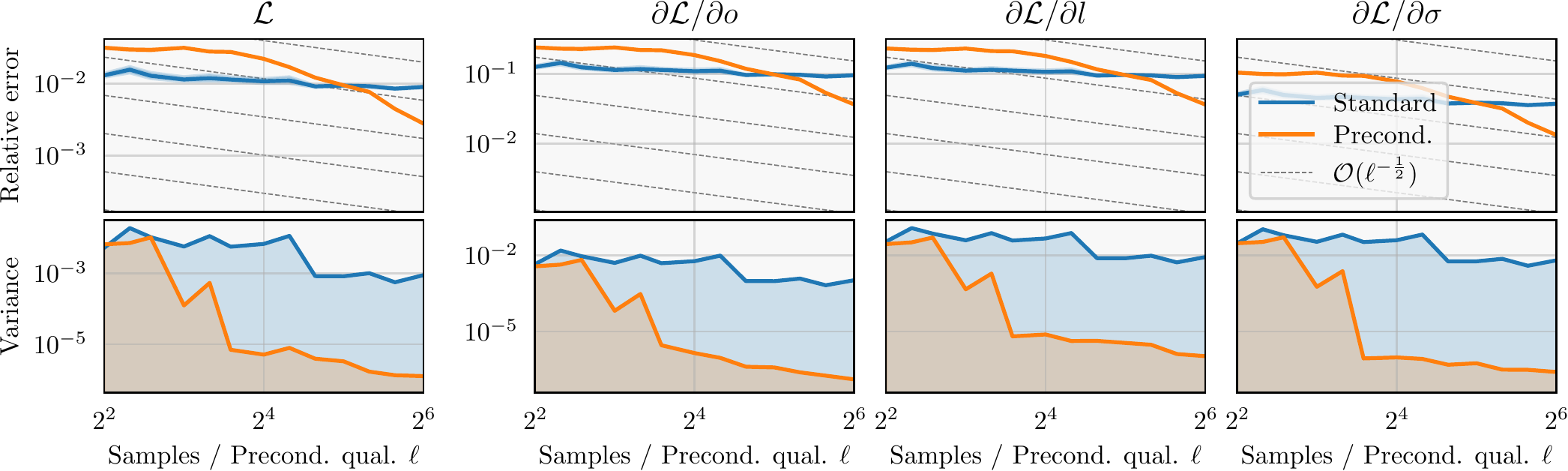}
		\caption{Mat{\'e}rn(\(3/2\)) (\(d=3\)).}
	\end{subfigure}\\%
	~
	\begin{subfigure}[b]{0.9\textwidth}
		\centering
		\includegraphics[width=\textwidth]{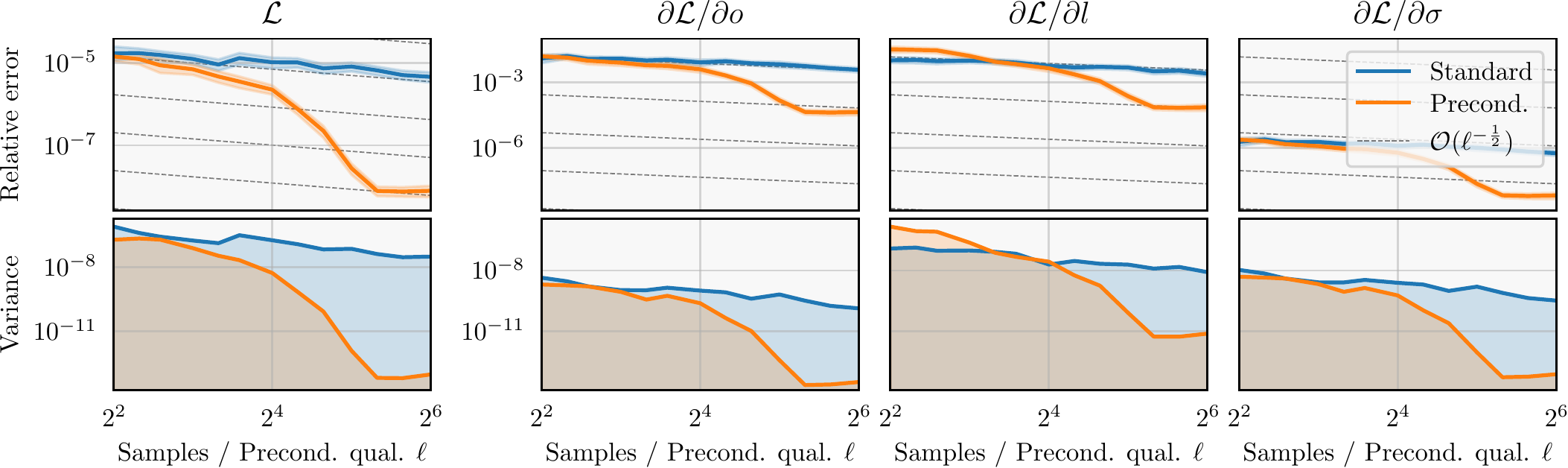}
		\caption{RatQuad (\(d=1\)).}
	\end{subfigure}\\%
	~
	\begin{subfigure}[b]{0.9\textwidth}
		\centering
		\includegraphics[width=\textwidth]{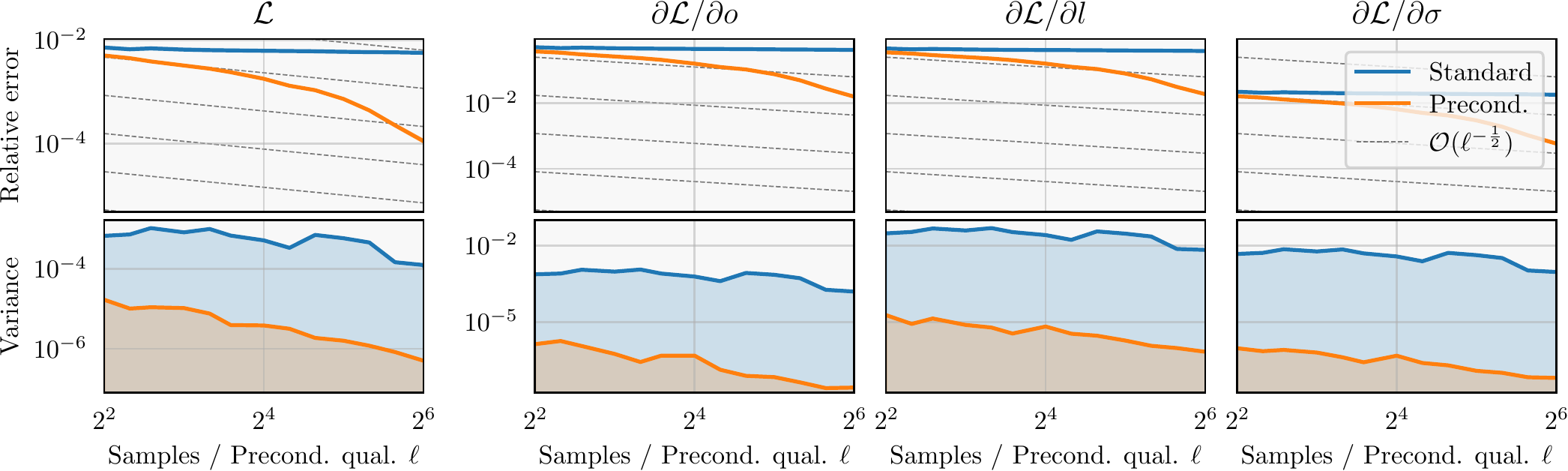}
		\caption{RatQuad (\(d=3\)).}
	\end{subfigure}%
	\caption{\emph{Bias and variance decrease on synthetic datasets for different kernels.} Relative error and variance of the stochastic estimators of the
		\(\log\)-marginal likelihood and its derivative for increasing number of random
		vectors, equivalently preconditioner size. Experiments were performed for different kernels on a synthetic dataset of size \(n=10,\!000\) with dimension \(d \in \{1, 2, 3\}\). Plots show mean and 95\% confidence intervals for the relative error computed over 25
		repetitions.}
	\label{fig:synthetic-data-detailed}

\end{figure}

\subsection{UCI Datasets}

For the experiments we conducted on UCI datasets, we report the full experimental results with
their deviation across 10 runs in \Cref{tab:uci-results-detailed}. Test errors with and without
preconditioning did not differ by more than two standard deviations. However, model evaluations of
the optimizer were significantly reduced when using a preconditioner of size 500, leading to
substantial speedup. Note, that the experiment on the ``3DRoad'' dataset was only carried out once
due to the prohibitive runtime without preconditioning.

We used the L-BFGS optimizer in our experiments due to its favorable convergence properties. As an
ablation experiment we compared to the Adam optimizer as sometimes used for its robustness to
noise, when using stochastic approximations of the \(\log\)-marginal likelihood
\cite{Gardner2018,Wang2019}. We find that with preconditioning optimization with L-BFGS significantly
outperformed optimization with Adam, both in terms of training and test error, except for the
``KEGGdir'' dataset (cf. \Cref{tab:uci-results} and \Cref{tab:uci-results-adam}). Additionally,
L-BFGS converged faster across all experiments. This shows that variance reduction via
preconditioning makes the use of second-order optimizers not only possible, but preferred for GP
hyperparameter optimization when using stochastic approximations.


\begin{table}
	\caption{\emph{Hyperparameter optimization using Adam.} GP regression using a Mat\'ern\((\frac{3}{2})\) kernel and
		pivoted Cholesky preconditioner of size \(500\) with \(\idxrvs=50\) random samples.
		Hyperparameters
		were optimized with Adam for at most 20 steps using early stopping via a validation set.}
	\label{tab:uci-results-adam}
	\small
	\centering
	\renewrobustcmd{\bfseries}{\fontseries{b}\selectfont}
	\sisetup{detect-weight=true,detect-inline-weight=math}
	\begin{tabular}{cS[table-format=6]S[table-format=2]S[table-format=3]cS[table-format=-1.4,round-mode=places,round-precision=4]S[table-format=-1.4,round-mode=places,round-precision=4]S[table-format=-1.4,round-mode=places,round-precision=4]S[table-format=3]}
\toprule
{Dataset} &  {$n$} &  {$d$} &  {Prec. Size} &  {$-\logmarglik_{\mathrm{train}} \downarrow$} &  {$-\logmarglik_{\mathrm{test}} \downarrow$} &  {RMSE $\downarrow$} &  {Runtime (s)} \\
\midrule
Elevators &  12449 &     18 &           500 &                                       0.4803 &                                      0.4593 &              0.36840 &            109 \\
     Bike &  13034 &     17 &           500 &                                       0.2265 &                                      0.3473 &              0.23000 &             64 \\
   Kin40k &  30000 &      8 &           500 &                                       0.4392 &                                     -0.1200 &              0.09821 &            159 \\
  Protein &  34297 &      9 &           500 &                                       0.9438 &                                      0.9319 &              0.56810 &             92 \\
  KEGGdir &  36620 &     20 &           500 &                                      -1.0070 &                                     -1.0390 &              0.08100 &            239 \\
\bottomrule
\end{tabular}

\end{table}

\begin{sidewaystable}
	\caption{\emph{Hyperparameter optimization on UCI datasets.}
		GP regression using a Mat\'ern\((\frac{3}{2})\) kernel and
		pivoted Cholesky preconditioner of size \(500\) with \(\idxrvs=50\) random samples.
		Hyperparameters
		were optimized with L-BFGS for at most 20 steps using early stopping via a validation set. All
		results, but ``3DRoad'', are averaged over 10 runs.}
	\label{tab:uci-results-detailed}
	\footnotesize
	\centering
	\renewrobustcmd{\bfseries}{\fontseries{b}\selectfont}
	\sisetup{detect-weight=true,detect-inline-weight=math}
	\begin{tabular}{cS[table-format=6]S[table-format=2]S[table-format=3]S[table-format=2]S[table-format=3.1,round-mode=places,round-precision=1]S[table-format=2.1,round-mode=places,round-precision=1]S[table-format=5.1,round-mode=places,round-precision=1]S[table-format=4.1,round-mode=places,round-precision=1]S[table-format=2.1,round-mode=places,round-precision=1]S[table-format=1.1,round-mode=places,round-precision=1]S[table-format=-1.4,round-mode=places,round-precision=4]S[table-format=-1.4,round-mode=places,round-precision=4]S[table-format=-1.4,round-mode=places,round-precision=4]S[table-format=-1.4,round-mode=places,round-precision=4]S[table-format=-1.4,round-mode=places,round-precision=4]S[table-format=-1.4,round-mode=places,round-precision=4]}
\toprule
       &        &    &     &    & \multicolumn{2}{c}{{Model evals.}} & \multicolumn{2}{c}{{Runtime (s)}} & \multicolumn{2}{c}{{Speedup}} & \multicolumn{2}{c}{{$-\logmarglik_{\mathrm{train}} \downarrow$}} & \multicolumn{2}{c}{{$-\logmarglik_{\mathrm{test}} \downarrow$}} & \multicolumn{2}{c}{{RMSE $\downarrow$}} \\
       &        &    &     &    &         {mean} &      {std} &        {mean} &       {std} &     {mean} &     {std} &                                      {mean} &     {std} &                                     {mean} &     {std} &              {mean} &     {std} \\
Dataset & {$n$} & {$d$} & {Prec. Qual.} & {Opt. Steps} &                &            &               &             &            &           &                                             &           &                                            &           &                     &           \\
\midrule
Elevators & 12449  & 18 & 0   & 19 &      42.181818 &   7.534165 &     53.000000 &    7.733046 &   1.000000 &  0.000000 &                                    0.464722 &  0.003496 &                                   0.402140 &  0.010196 &            0.348366 &  0.005177 \\
       &        &    & 500 & 19 &      36.363636 &   1.206045 &     39.181818 &    0.750757 &   1.353316 &  0.202710 &                                    0.437725 &  0.002954 &                                   0.402184 &  0.009727 &            0.348241 &  0.005313 \\
Bike & 13034  & 17 & 0   & 19 &      32.272727 &   1.348400 &     30.636364 &    1.120065 &   1.000000 &  0.000000 &                                   -0.997622 &  0.011435 &                                  -0.993428 &  0.013740 &            0.044620 &  0.003969 \\
       &        &    & 500 & 19 &      31.363636 &   2.062655 &     37.090909 &    1.513575 &   0.826877 &  0.037716 &                                   -0.998517 &  0.018617 &                                  -0.987725 &  0.017996 &            0.045370 &  0.003166 \\
Kin40k & 30000  & 8  & 0   & 8  &      19.000000 &   4.449719 &    186.545455 &   67.962289 &   1.000000 &  0.000000 &                                   -0.333929 &  0.001791 &                                  -0.314085 &  0.001641 &            0.092942 &  0.001439 \\
       &        &    & 500 & 6  &      15.000000 &   0.447214 &     44.636364 &    1.286291 &   2.710737 &  0.076995 &                                   -0.433196 &  0.005531 &                                  -0.313514 &  0.003783 &            0.094906 &  0.001457 \\
Protein & 34297  & 9  & 0   & 15 &     124.181818 &   0.404520 &    892.636364 &   19.143003 &   1.000000 &  0.000000 &                                    0.996320 &  0.003125 &                                   0.886924 &  0.008142 &            0.572161 &  0.006525 \\
       &        &    & 500 & 7  &      17.818182 &   0.603023 &     42.545455 &    4.987256 &   4.179380 &  0.386223 &                                    0.927287 &  0.004397 &                                   0.883540 &  0.005298 &            0.557747 &  0.007926 \\
KEGGdir & 36620  & 20 & 0   & 19 &      55.818182 &  11.745792 &   1450.272727 &  253.433262 &   1.000000 &  0.000000 &                                   -0.950094 &  0.008742 &                                  -0.945906 &  0.034288 &            0.086087 &  0.004343 \\
       &        &    & 500 & 19 &      42.909091 &   0.700649 &    173.727273 &    2.901410 &   8.352503 &  1.470270 &                                   -1.004278 &  0.009252 &                                  -0.948952 &  0.031248 &            0.086368 &  0.004159 \\
3DRoad & 326155 & 3  & 0   & 9  &      68.000000 &            &  82200.000000 &             &   1.000000 &           &                                    0.773300 &           &                                   1.436000 &           &            0.298200 &           \\
       &        &    & 500 & 9  &      19.000000 &            &   7306.000000 &             &  11.251027 &           &                                    0.128400 &           &                                   1.169000 &           &            0.126500 &           \\
\bottomrule
\end{tabular}

\end{sidewaystable}

\stopcontents[supplementary]

\end{document}